\numberwithin{equation}{section}
\definecolor{urlcolor}{HTML}{0f90bf}
\definecolor{citecolor}{HTML}{0e23c4}
\newtheorem{theorem}{Theorem}[section]
\newtheorem{lemma}[theorem]{Lemma}
\newtheorem{assumption}[theorem]{Assumption}
\newtheorem{remark}[theorem]{Remark}
\newenvironment{manualtheorem}[1]{%
  \manualtheoreminner
}{\endmanualtheoreminner}
\newcommand{\abs}[1]{\left\lvert#1\right\rvert}
\newcommand{\curl}{\nabla\!\times\!}
\newcommand{\TT}{\mathbb{T}}
\newcommand{\RR}{\mathbb{R}}
\newcommand{\CC}{\mathbb{C}}
\newcommand{\ZZ}{\mathbb{Z}}
\DeclareMathOperator{\Rot}{\mathbf{rot}}
\newcommand{\B}{\mathcal{B}}
\newcommand{\F}{\mathcal{F}}
\newcommand{\G}{\mathcal{G}}
\newcommand{\N}{\mathcal{N}}
\newcommand{\W}{\mathcal{W}}
\newcommand{\V}{\mathcal{V}}
\newcommand{\Y}{\mathcal{Y}}
\newcommand{\X}{\mathcal{X}}
\LetLtxMacro\origS\S %
\renewcommand{\S}{\ifmmode\operatorname{\mathcal{S}}\else\origS\fi}
\renewcommand{\H}{\mathcal{H}}
\newcommand{\T}{\mathcal{T}}
\newcommand{\dd}{\mathrm{d}}
\newcommand{\subscript}[2]{$#1 _ #2$} %
\newcommand{\revision}[1]{\textcolor{black}{#1}}
\tikzset{%
  mybox/.style={%
    rectangle, rounded corners=0.1mm, draw=black, fill opacity=0.8, inner sep=3.5pt,
  }
}
\definecolor{SumColor}{RGB}{240, 70, 70}
\definecolor{SolverColor}{RGB}{67, 115, 200}
\definecolor{ResColor}{RGB}{45, 120, 140}
\definecolor{NNColor}{RGB}{80, 50, 150}
\definecolor{SColor}{RGB}{44, 125, 70}
\DeclareRobustCommand\sconvbox{\tikz{\node[mybox, fill={SColor!70}] {};}}
\DeclareRobustCommand\lnbox{\tikz{\node[mybox,fill={SolverColor!40}] {};}}
\DeclareRobustCommand\ftbox{\tikz{\node[mybox,fill={SolverColor!85}] {};}}
\DeclareRobustCommand\nnbox{\tikz{\node[mybox, fill={NNColor}] {};}}
\DeclareRobustCommand\actbox{\tikz{\node[mybox, fill={ResColor}] {};}}
\newcommand{\sprod}{\mathop{\mathchoice
{\textstyle\prod}  {\scaleto{\prod}{1.5ex}} {\prod} {\prod} }\nolimits}
\newcommand{\enorm}{\@ifstar\@enorms\@enorm}
\newcommand{\@enorms}[1]{%
  \left|\mkern-2.5mu\left|\mkern-2.5mu\left|
  #1
  \right|\mkern-2.5mu\right|\mkern-2.5mu\right|
}
\newcommand{\@enorm}[2][]{%
  \mathopen{#1|\mkern-2.5mu#1|\mkern-2.5mu#1|}
  #2
  \mathclose{#1|\mkern-2.5mu#1|\mkern-2.5mu#1|}
}
\newlength{\parashrink}
\title{Spectral-Refiner: Accurate Fine-Tuning of Spatiotemporal Fourier Neural Operator for Turbulent Flows}
\author{%
  Shuhao Cao\thanks{Corresponding to \href{mailto:scao@umkc.edu}{\texttt{scao@umkc.edu}} or \href{mailto:yuanzhe.xi@emory.edu}{\texttt{yuanzhe.xi@emory.edu}}
  } \\
  School of Science and Engineering\\
  University of Missouri-Kansas City
  \And
  Francesco Brarda  \\
  Department of Mathematics\\
  Emory University
  \phantom{\rule[0.0em]{18.8ex}{0pt}}
  \And
  Rui Peng Li \\
  Center for Applied Scientific Computing \\
  Lawrence Livermore National Laboratory
  \And
  Yuanzhe Xi\footnotemark[1] \\
  Department of Mathematics\\
  Emory University
}
\begin{document}

\maketitle

\begin{abstract}
  Recent advancements in operator-type neural networks have shown promising results in approximating the solutions of spatiotemporal Partial Differential Equations (PDEs). However, these neural networks often entail considerable training expenses, and may not always achieve the desired accuracy required in many scientific and engineering disciplines.
  In this paper, we propose a new learning framework to address these issues. A new spatiotemporal adaptation is proposed to generalize any Fourier Neural Operator (FNO) variant to learn maps between Bochner spaces, which can perform an arbitrary-length temporal super-resolution for the first time.
  To better exploit this capacity, a new paradigm is proposed to refine the commonly adopted end-to-end neural operator training and evaluations with the help from the wisdom from traditional numerical PDE theory and techniques.
  Specifically, in the learning problems for the turbulent flow modeled by the Navier-Stokes Equations (NSE), the proposed paradigm trains an FNO only for a few epochs. Then, only the newly proposed spatiotemporal spectral convolution layer is fine-tuned without the frequency truncation. The spectral fine-tuning loss function uses a negative Sobolev norm for the first time in operator learning, defined through a reliable functional-type a posteriori error estimator whose evaluation is exact thanks to the Parseval identity. Moreover, unlike the difficult nonconvex optimization problems in the end-to-end training, this fine-tuning loss is convex.
  Numerical experiments on commonly used NSE benchmarks demonstrate significant improvements in both computational efficiency and accuracy, compared to end-to-end evaluation and traditional numerical PDE solvers under certain conditions. The source code is publicly available at \url{https://github.com/scaomath/torch-cfd}.
\end{abstract}

\section{Introduction}
\label{sec:introduction}
Recently, Deep Learning (DL) pipelines have proven particularly effective in addressing problems formulated by Partial Differential Equations (PDEs).
In this paper, we study the problem of learning Neural Operators (NOs) between infinite-dimensional function spaces~\citep{boulle2023mathematical,kovachki2023neural, 2024AzizzadenesheliEtAlNeural,2022HoopHuangStuartEtAlCost,2023deHoopEtALConvergence}. Specifically, the problem in consideration is for the Navier-Stokes Equations (NSE) in the turbulent regime ($Re=\mathcal{O}(10^3)$ to $ \mathcal{O}(10^4)$). In computation, the difficulties of solving NSE in this regime roots from its ``stiffness'' attributed to the nonlinear convection with a nearly singular viscous diffusion, as well as the spatiotemporal nature of being highly transient.
For this problem, we propose to synergize operator learning with classical numerical PDE methods, complementing one's drawbacks and limitations with the other's strengths.

\vspace{\parashrink}

\paragraph{Compromises and drawbacks of traditional numerical methods.}
To solve NSE, traditional time stepping schemes include Adams-Bashforth/-Moulton or Runge-Kutta (RK) families, e.g., see \citet[Appendix D]{2007Canuto}, \citet{1991KarniadakisIsraeliOrszagJcp}. If one opts to use an explicit scheme, or there exists a certain portion of the forcing terms (e.g., the convection term in NSE) computed via explicit schemes, then small time step sizes must be used: $\Delta t \sim \mathcal{O}\left((\Delta x)^{\alpha}\right)$, $\alpha\geq 1$~\citep[Chapter 4]{2000Rannacher}. This necessity is often referred to as the ``stiffness'' of the PDE, and the threshold or constraint of the time step is called the Courant-Friedrichs-Lewy (CFL) condition, e.g., see \citet{2004JohnstonLiu,2012WangNumerMath,2007HeSunStability}. CFL poses a sufficient condition on the step size for ``stability'', and this requires the step size usually much smaller than the requirement for ``consistency''. Here, this temporal consistency usually refers to a first-order optimal local truncation error, e.g., how the original Butcher tableau is derived for RK~\citep{1965ButcherMathComp}. 
The CFL puts a threshold for \textbf{\emph{all}} explicit time-marching schemes on how fast the local errors in different frequencies can propagate, and thus prevent error accumulation ``pollutes'' the approximation globally to ensure stability. This means that, for any time-marching schemes, finer mesh (better spatial consistency) requires the time steps to be much smaller to prevent errors from traveling to neighboring nodes and elements, which greatly increases the computational cost.

\vspace{\parashrink}

\paragraph{Spatiotemporal operator learning.}
Among the end-to-end operator learning for NSE, the common approach is the so-called autoregressive ``roll-out''. During rolling-out, several snapshots of solutions are concatenated as \emph{\textbf{channels}} in the input tensor to the neural operator, which outputs an approximated solution at the subsequent time step. This procedure can be repeated recurrently until reaching the model's stability capacity. \revision{In contrast to the traditional time marching solvers whose step sizes are restricted by the CFL condition, roll-out can withstand a much bigger time step size.}
NOs used in roll-out approach include the original FNO2d \citep{li2020fourier}, and those in \citet{2022LiEtAlChaotic,2023BrandstetterEtAlClifford,gupta2023towards,fonseca2023continuous,2024ZhengLiweiNoXuEtAlNeurIPS}.
However, for an end-to-end operator learner, the roll-out approach faces the same dilemma of super-resolution capacity in the temporal dimension as a traditional solver: finer time steps cost prohibitively more, while larger time steps does not guarantee stability. To solve this problem, we turn to the prevalent functional analytic framework for studying the convergence and stability of solution trajectories for NSE: the theory of Bochner spaces, e.g., Aubin-Lions lemma~\citep[Chapter 3]{LionsMagenes}, \citep[Chapter 2]{1995Temam}, \citep[Chapter 7]{Evans}.
Inspired by this theoretical insight, we propose a \textbf{S}patio\textbf{T}emporal adaption for all \textbf{F}ourier \textbf{N}eural \textbf{O}perator (ST-FNO) variants. ST-FNO now can perform arbitrary zero-shot super-resolution not only spatial dimensions, but also allows the temporal dimension to vary for the first time.
For a prototype spatiotemporal operator learner FNO3d in \cite{li2020fourier}, it learns a map between a fixed number of spatial ``snapshots'' (product spaces). The newly proposed ST-FNO directly learns a map between Bochner spaces $L^2(\T_0; \V)$ to $L^2(\T; \V)$ on non-overlapping time intervals $\T_0$ and $\T$. This makes the model become ``trajectory-to-trajectory'', where the initial trajectory is the input of an NO to obtain the output evaluation as an approximation to the subsequent trajectory of the solution.
Here $\V$ denotes the spatial Hilbert space in which snapshots of the solution at a specific time reside, and for a detailed notation list, we refer the readers to Appendix \ref{appendix:notations}.

\vspace{\parashrink}

\paragraph{Limits and the lack of accuracy for neural operators.}
The NO approach has the potential to bypass various difficulties and compromises of traditional schemes of numerical PDEs. However, in all end-to-end operator learning benchmarks of NSE, even the state-of-the-art NOs still fall short in achieving high-accuracy solutions. For example, an end-to-end roll-out approach suffers from error accumulation experimentally, e.g., see \citet[Figure 9]{2022LiEtAlChaotic}.
To our best knowledge, in terms of the relative difference with the ground truth under the Bochner norm, no NO-only-based operator learning approach can break the barrier of an even 2-digit accuracy. Moreover, to our best knowledge, NO-only approaches have no theoretical stability estimate, such as the error propagation operator is a contraction.
Recently, a remarkable advancement called PDE-Refiner~\citep{2023LippeVeelingPerdikarisEtAl} learns an extra error correction NO under the Denoising Diffusion framework \citep{2020HoJainAbbeelDenoising}. For a single instance, it can get a stable long roll-out, and achieve for the first time $\mathcal{O}(10^{-8})$ relative difference with the ground truth after a single time marching step, and $\mathcal{O}(10^{-6})$ in the Bochner norm.
Nevertheless, for all models above and their learning frameworks, the optimization is to minimize the difference between the outputs from the NO, namely $u_{\N}$, and the ground truth $u_{\S}$, generated by a traditional numerical PDE solver. The difference with the true solution $u$ under a certain norm is not directly optimized, as the analytical expression of $u$ is unknown most of the time in real-life applications.
For difficult PDEs such as the NSE, in a linear time-stepping scheme, the ground truth $u_{\S}$ may on its own only have 3-digit accuracy in the Bochner norm. Note that this may already be of optimal order $\mathcal{O}(\Delta t)$ by convergence estimates~\citep{1986HeywoodRannacherSINUM}. Therefore, minimizing the difference between $u_{\N}$ and $u_{\S}$ becomes fruitless if the numerical approximation (ground truth) $u_{\S}$ is not accurate at the first place, as unnecessary computational resources may have been spent to get closer to $u_{\S}$.

\vspace{\parashrink}
\paragraph{New hybrid scheme.} To address these dilemmas for NSE's approximation, we take an alternative hybrid learning paradigm inspired by traditional numerical methods. Unlike the arduous training of running hundreds of epochs, the newly proposed ST-FNO is trained for only a few epochs (e.g., $10$), concluding with the freezing of most model parameters. Then, the newly proposed spatiotemporal spectral convolution, as the \emph{last} layer, in ST-FNO is fine-tuned with the help of traditional solvers. Note that SF-FNO's temporal arbitrary-length inference capacity is attributed to this last layer. During fine-tuning, this layer is relieved from the common frequency truncation FNO has.
A traditional solver with a single time step is used to obtain highly accurate approximations for extra field variables to conform with the physics. These extra fields help the evaluation of a new loss function in fine-tuning, a functional-type \emph{a posteriori} error estimate measured under a \emph{negative} Sobolev norm, which is equivalent to the variational form-associated norm for the NSE. Note that there shall be no training of extra error-correcting NOs \citep{dresdner2023learning,2023LippeVeelingPerdikarisEtAl}. Moreover, unlike the same nonconvex optimization problem that the extra error-correcting NOs try to tackle as the one in the original end-to-end training, our fine-tuning optimization problem is \emph{convex}, and allows us to achieve high accuracy with a fractional of computational resources when compared with other fine-tuning approaches that optimize the whole model with a physics-informed loss.

\vspace{\parashrink}
\paragraph{Fine-tuning using functional-type a posteriori error estimation.} To seek highly accurate NO approximated solutions that are closer to the true solution directly, we turn to the \emph{a posteriori} error estimation techniques. This technique allows computing the error without knowing the true PDE solutions, and has been widely studied for Galerkin-type methods, such as finite element \citep{1993AinsworthOdenunified,1997AinsworthOdenposteriori,1994OdenWuAinsworthposteriori}, as well as for parameterized PDEs \citep{HesthavenRozzaStammothers,PateraRozzaothers}. Among all types of \emph{a posteriori} error estimation techniques, functional-type \emph{a posteriori} estimator \citep{2008Repin} views the error as a functional on the test Sobolev spaces and evaluates accurate representations with the help of an extra dual variable~\citep{2010ErnVohralikposteriori}.
In our hybrid approach, inspired by this, we combine the strengths of NOs and traditional numerical solvers.
Using the newly proposed spatiotemporal discretization-invariant NO, we propose to use a negative Sobolev norm in the frequency domain as a functional-type \emph{a posteriori} error estimation as the fine-tuning loss.
Traditionally, the \emph{a posteriori} error estimation is used to refine local basis, yet this constraint on the purpose of the method leads to inaccurate localized representations for the $H^{-1}$ error functional.
Our new approach is not attached to the local refining requirement~\citep{2024BonitoCanutoNochettoEtAlActaNum}. As a result, the negative Sobolev norm used in the new loss is handily defined through the Gelfand triple in the frequency domain which is global. Our method needs no extra dual variable reconstruction problem as in the traditional methods~\citep{2010ErnVohralikposteriori}. Unlike commonly adopted physics-informed operator learning~\citep{2024LiEtALPINO}, NO prediction for other field variables are not needed either. The extra field variables for computing the error are recovered through an auto-differentiable numerical solver with the NO output of the primal variable (vorticity or velocity). Overall, this practice leads to ``refining'' the learnable set of the global spectral basis.

For a more detailed review of operator learning and further motivation discussion with a much higher degree of mathematical rigor, we refer the readers to Appendix \ref{appendix:literature}.

\paragraph{Main contributions.}
\label{paragraph:contributions}
The main contributions of this work are summarized as follows. For the difference between the common roll-out approach versus direct spatiotemporal learning between Bochner spaces, we refer the readers to Figure \ref{fig:scheme-summary}.

\begin{itemize}[topsep=-4pt, itemsep=0pt, leftmargin=1em]
  \item \textbf{Spatiotemporal Fourier Neural Operator.} We design the first spatiotemporal adaption technique for all FNO variants (ST-FNO) to enable them to learn maps between Bochner spaces.

  \item \textbf{New hybrid operator learning paradigm.} We propose to train and evaluate ST-FNOs using a new strategy, which has significantly improved over the existing methods in speed and accuracy. Only a few epochs of training combined with a spectral-refining fine-tuner yield highly accurate approximations. Extra field variables are obtained by auto-differentiable single-step (RK4) or multistep (Adam-Bashforth) solvers, the input of which are neural predictions. The auto-differentiability of the solver makes it possible for the fine-tuning to optimize the parameters in FNO.

  \item  \textbf{Functional a posteriori error estimation.} Leveraging the spectral structure of the new SpatioTemporal layer in ST-FNO and Parseval identity, the fine-tuning utilizes for the first time the negative Sobolev norm (functional norm) as loss via \emph{a posteriori} error estimation. This procedure does not require any ground truth data (e.g., numerical solution generated by traditional numerical solvers), nor the knowledge of the true solution(s) to the PDE. This new loss is proven to be reliable in theory, in the meantime much more efficient in the ablation experiments. 

  \item \textbf{Experimental results.} We created a native PyTorch port of Google's Computational Fluid Dynamics written in JAX \citep{kochkov2021machine,dresdner2023learning} with enhanced functionality for tensor operations such as the facilitation of fine-tuning for the latent fields, publicly available at
    \url{https://github.com/scaomath/torch-cfd},
    with scripts to replicate the experiments as well as the data generation. The data are available at
    \url{https://huggingface.co/datasets/scaomath/navier-stokes-dataset}
    .
\end{itemize}

\begin{figure}[t]
  \centering
  \includegraphics[width=0.95\textwidth]{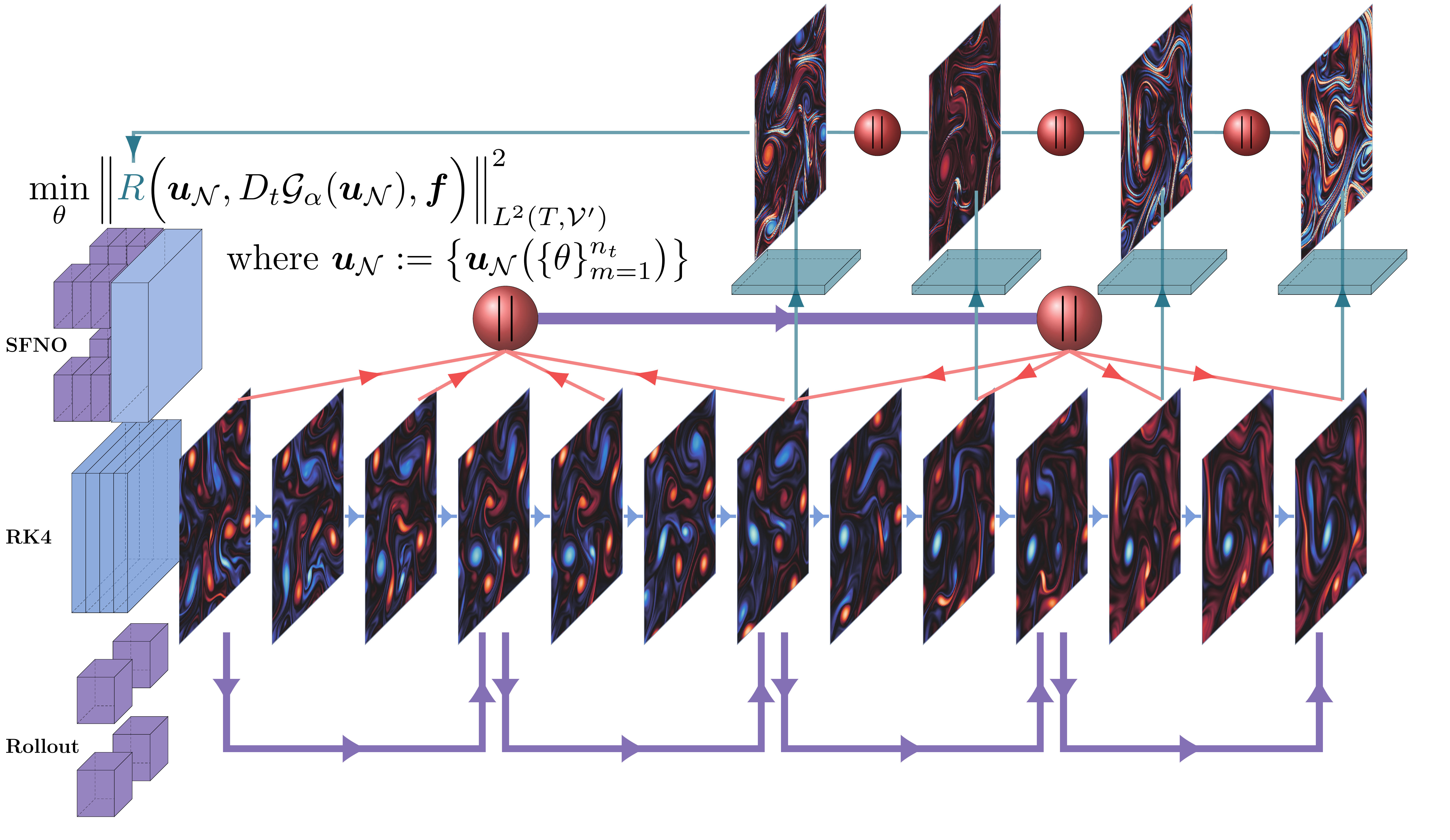}
  \caption{Schematic differences between approaches. 4th-order Runge-Kutta (RK4): small time steps bounded by the CFL condition, de-aliasing filter needed; autoregressive NO rolling-out: using previous evaluation as input repetitively, large time steps, no stability guarantees;
  Spatiotemporal FNO (ST-FNO) with hybrid fine-tuning: large time steps, yielding arbitrary-length temporal prediction in a single evaluation, parallel-in-time optimization for spectral fine-tuning. }
  \label{fig:scheme-summary}
\end{figure}

\section{Spatiotemporal Operator Learning for Navier-Stokes Equations}
Both drawbacks and advantages of traditional numerical methods and NO-based methods in Section \ref{sec:introduction} play vital roles in shaping our study.
We first briefly discuss the spatiotemporal operator learning problem on Bochner spaces associated with NSE. Then, we detail how to modify a generic FNO-based neural architecture to obtain an operator learner between Bochner spaces.

\subsection{Spatiotemporal Operator learning problem for NSE}

For 2D NSE, in the the velocity-pressure (V-P) formulation \eqref{eq:velocity-pressure}, the velocity field $\bm{u}(t,\bm{x}):[0, T]\times\Omega\to \RR^2$ is seen as an element in the Bochner space $L^p([0, T], \V)$ where $\V$ is a spatial Sobolev space in which each snapshot at $t$ of the solution $\bm{u}(t,\cdot)$ resides.
As in \citet{li2020fourier}, we also consider the vorticity-streamfunction (V-S) formulation \eqref{eq:vorticity-streamfunction} with periodic boundary condition (PBC). In $[0,T]\times\Omega$, the scalar-valued vorticity $\omega:=\curl\bm{u}$, and streamfunction $\psi$ satisfies $\bm{u}=\Rot \psi$. These two formulations read
\vspace*{-3pt}
\begin{align}
  & (\text{Vorticity-Streamfunction}) & & \partial_t \omega + \Rot \psi\cdot\nabla{\omega} - \nu \Delta {\omega} = \curl\bm{f}, \quad
  \omega + \Delta \psi = 0.\label{eq:vorticity-streamfunction}
  \\[3pt]
  &(\text{Velocity-Pressure}) & & \partial_t \bm{u} + (\bm{u}\cdot\nabla){\bm{u}} - \nu \Delta {\bm{u}} + \nabla p=  \bm{f},
  \quad
  \nabla \cdot \bm{u} = 0.\label{eq:velocity-pressure}
\end{align}
For all analyses in line with the Hilbertian framework, $\V = H^1(\mathbb{T}^2)$ for vorticity and $\bm{H}^1(\mathbb{T}^2)$ for velocity, where $\mathbb{T}^2$ denotes the unit torus, i.e., $\Omega = (0,1)^2$ with a component-wise PBC. For a fixed forcing function in $\V'$, the initial condition is either $\omega(0,\bm{x})=\omega_0(\bm{x})$ or $\bm{u}(0, \cdot)= \Rot\bigl((-\Delta)^{-1} \omega_0\bigr)$. Here $\omega_0$ is drawn from a compactly supported probability measure $\mu$, in which the compactness corresponds to certain power/enstrophy spectrum decay law to produce isotropic turbulence \citep{1984McWilliams}, and we refer the reader to Appendix \ref{appendix:data} for details. Then, we can consider the following map $\overline{G}^{\mu}: \X_\mu\Subset \Pi_{i=1}^{\ell} \V \to \Pi_{i=1}^{n_t} \V$:
\begin{equation}
  \label{eq:map-snapshots}
  \overline{G}^{\mu}:
  \mathbf{a}:=\bigl[\omega(t_1, \cdot), \dots, \omega(t_{{\ell}}, \cdot) \bigr]
  \mapsto
  \mathbf{u}:=\bigl[\omega(t_{\ell+1}, \cdot), \dots, \omega(t_{\ell+n_t}, \cdot)\bigr],
\end{equation}
where $t_{\ell+n_t} \leq T$. The input and the output in the training data for $\overline{G}^{\mu}$ are snapshots obtained from a solution trajectory with the same initial condition $\omega_0$ that is drawn from certain compactly supported probability measure $\mu$. The operator learning task for NSE using a prototype spatiotemporal operator learner, e.g., FNO3d~\citep{li2020fourier}, is to learn this $\overline{G}$ between a fixed number of Cartesian products of spatial Sobolev spaces.
Specifically, $\overline{G}^{\mu}:\X_\mu \rightarrow \Y $ maps elements in $\X_{\mu}\Subset \Pi_{i=1}^{\ell} \V$ to elements in $\mathcal{Y}= \Pi_{i=1}^{n_t} \V$. In the task for this prototype ``snapshots learner'', the number of snapshots $\ell\in \mathbb{Z}^{+}$ is \textbf{fixed}.
As such, $\X_\mu$ and $\Y$ represent spaces of two non-overlapping temporal segments of solution snapshots. Based on these snapshots' discretized approximations from data, these snapshots learners learn an operator $\overline{G}_\theta: \mathcal{X} \to \mathcal{Y}$, \revision{where the number of parameters in $\overline{G}_{\theta}$ is independent of the spatial discretization size, yet \textbf{does depend on the number of snapshots $\ell$}}. This dependence in the setting of this task makes it not ``trajectory-to-trajectory''.

In this study, the learning aims to recast \eqref{eq:map-snapshots} to a trajectory-to-trajectory operator learning problem, conforming to the Hilbertian formulation of NSE using Bochner spaces. Using the fact that the weak solutions to \eqref{eq:vorticity-streamfunction} and \eqref{eq:velocity-pressure} at a given time interval $\T$ is in $L^2(\T; \V)$, the operator to be learned is
\begin{equation}
  \label{eq:map-bochner}
  G^{\mu}: L^2(t_1, t_\ell; \V) \to L^2(t_{\ell+1}, t_{\ell+n_t}; \V)\; \text{ for } \;\V:= H^1(\TT^2)
  \text{ or } \{\bm{v} \in \bm{H}^1(\TT^2) : \nabla\cdot \bm{v}=0 \}.
\end{equation}
In what follows, we shall present how to modify any common FNO variant to become a Bochner space operator learner that can learn maps from arbitrary-sized discretization in $\RR^{\ell\times n\times n\times d}$ to $\RR^{n_t\times n\times n\times d}$ ($d=1$ in V-S; $d=2$ in V-P).
This operator can be trained by a standard end-to-end supervised learning pipeline using lower-resolution data,
and for inference, the newly proposed spatiotemporal trajectory-to-trajectory learner, $\ell$, $n_t$, $n$ can all be of variable sizes. In contrast, the snapshot learner FNO3d predicts the subsequent $n_t$ snapshots with the first $\ell$ snapshots as input, with both $n_t$ and $\ell$ fixed, and only the spatial resolution $n$ can be much higher than the input-output pairs used in the training. Hereafter we omit the $d$ dimension if no ambiguity arises.

\subsection{Spatiotemporal Adaptation of Fourier Neural Operators}
\label{sec:sfno}
For any FNO variant, such as FNO3d \citep{li2020fourier} or Factorized FNO~\citep{tran2023factorized}, with the following meta-architecture:
$G_\theta := Q \circ  \sigma_L \circ K_{L-1} \circ \cdots \circ\sigma_1 \circ K_0 \circ  P$, where $P$ is a lifting operator, $Q$ is a projection operator that does a pointwise reduction in the channel dimension,
$\sigma_j$ can be a pointwise universal approximator or simply chosen as a nonlinearity. $K_j:=K_{\phi_j}$ is the parametrized spectral convolution that uses a spatiotemporal 3D FFT.
During training, the operator to be learned is restricted to finite-dimensional subspaces $\X \supset \X_n \to \Y_n \subset \X$, in the sense that the continuous spatial Sobolev space $\V$ in the product spaces is replaced by a finite-dimensional subspace $\V \supset\S \simeq \RR^{n\times n}$ with continuous embeddings $\{\mathbf{a}_{\S}\in \RR^{\ell \times n\times n}\}\hookrightarrow  \X$, and
$ \{\mathbf{u}_{\S} \in \RR^{n_t \times n\times n}\}\hookrightarrow \Y$. The positional encodings $\mathbf{p} := (t_i, \bm{x}_j)_{i=1}^{\ell} \in \RR^{3\times \ell \times n\times n}$ represents the underlying spatiotemporal grid, and is concatenated to $\mathbf{a}_{\S}$ before feeding to $P$.

For spatiotemporal problems, all FNO variants novelly exploit a convenient architectural advantage of operator-valued NNs: the input temporal dimension $\ell$ is treated as the channel dimension of an image, thus enabling channel mixing as a learnable temporal extrapolation.
However, this neat trick coincidentally makes the lifting operator the biggest hurdle for $G_\theta$ to become a trajectory-to-trajectory operator learner between Bochner spaces, since the channel dimension of $P$ must be hard-coded\footnote{Line 97 in the original FNO3d source code \href{https://github.com/scaomath/fourier_neural_operator/blob/master/fourier_3d.py\#L97}{ \texttt{fourier\_3d.py}} (link provided for an unaltered fork of the \texttt{master} branch commit~\texttt{de514f2} in the \href{https://github.com/neuraloperator/neuraloperator}{original FNO GitHub repository}).} and thus depends on the input pair's time steps.

In what follows, we use FNO3d as an example to present three new modifications to FNO3d to become a trajectory-to-trajectory learner ST-FNO3d, that is, to act as an operator that maps an arbitrary-time-step input to an arbitrary-time-step output.
We also note that this modification applies to any FNO variant that predicts spatial and temporal dimensions at the same time. These changes are so simple that the trajectory-to-trajectory modification can serve as drop-in replacements for their snapshot learner counterparts, when the temporal input dimensions are fixed in a task. For the schematic difference, please refer to Figure \ref{fig:fno-arch}, Figure \ref{fig:sfno-arch}, and Appendix \ref{appendix:arch-details} for more comments.
\begin{figure}[htb]
  \centering
  \includegraphics[width=0.9\textwidth]{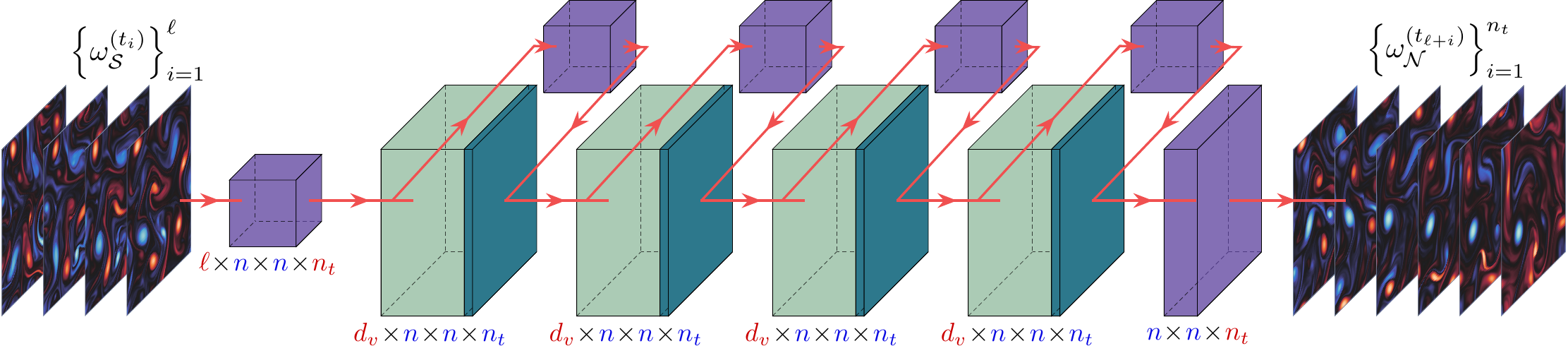}
  \caption{The FNO3d in \citet{li2020fourier} is a snapshot learner. \sconvbox: spectral convolution layer ;  \nnbox: pointwise \texttt{nn.Conv3d} that works as channel expansion/reduction; \actbox: pointwise nonlinearity. The TikZ source code to produce this figure is modified from the examples in \citet{2018PlotNeuralNet}. }
  \label{fig:fno-arch}
\end{figure}

\paragraph{Modifications to 3D FFT.} The first change is in the spatiotemporal FFT from an analytical point of view. Using the spectral convolution $K(\cdot)$
in FNO3d as an example,
with a slight abuse of notation, denote the latent dimension (width) by $d_v$, let the vector-valued latent representation with $d_v$ channels be continuously embedded into $\Pi_{i=1}^{n_t} \V$ by $\mathbf{v}_{\S}\hookrightarrow \mathbf{v}$ in each channel, and
denote $\Lambda :=[t_{\ell+1}, \dots,t_{\ell+n_t}]$, then $K(\cdot)$ is ``semi-discrete'' in a sense as follows
\begin{equation}
  \label{eq:kernel-semi-discrete}
  (K\mathbf{v})(t, \bm{x}) := (W\mathbf{v})(t, \bm{x}) + \sum\nolimits_{s\in \Lambda} \int_{\Omega}
  \kappa\bigl((t, \bm{x}), (s, \bm{y})\bigr)\mathbf{v}(s, \bm{y})\mathrm{d} m(\bm{y}),
\end{equation}
where $t$ takes values only in a discrete set $\Lambda,$ and $\bm{x}\in \Omega$. Here $W\in \RR^{(d_v+1)\times d_v}$ is a pointwise-applied affine linear operator, $\kappa\in C\bigl((\Lambda\times \Omega)\times (\Lambda\times \Omega); \RR^{{d_v\times d_v}}\bigr)$, and $m$ denotes an approximation to the Lebesgue measure on $\Omega$.
Evolving $K$ into a spatiotemporal spectral convolution acting on Bochner space is straightforward. We change \eqref{eq:kernel-semi-discrete} slightly by adopting of an atom-like measure $\delta(\cdot)$ in the temporal dimension, which then generalizes to the variable-length temporal discretization for any $(t, \bm{x}) \in (a, b)\times \Omega$, i.e., one can obtain arbitrarily many snapshots on the interval of interest
\vspace*{-0.05in}
\begin{equation}
  \label{eq:kernel-continuous}
  (\widetilde{K}(\mathbf{v}))(t, \bm{x}) :=
  (W\mathbf{v})(t, \bm{x}) + \int_{a}^{b}\int_{\Omega}
  \kappa\bigl((t, \bm{x}), (s, \bm{y})\bigr)\mathbf{v}(s, \bm{y})
  \,\mathrm{d}m(\bm{y}) \mathrm{d}\delta(s).
\end{equation}
During training using the discretized data, each hidden layer is a map from $\RR^{d_v \times d_t \times n\times n}  \to \RR^{d_v \times d_t \times n\times n}$, where $d_t$ is a ``latent'' dimension of time steps that is chosen $\leq n_t$. The layerwise propagation mechanics remains the same: $\widetilde{K}_{\phi} \bm{v} := W\bm{v} + \F^{-1}\left(R_\phi \cdot (\F \bm{v})\right)$,
where $\F$ and $\F^{-1}$ denote the spatiotemporal Fourier transform and its inverse, respectively. The global spatiotemporal interaction characterized by the kernel is truncated in terms of modes in the frequency domain at $(\tau_{\max}, k_{\max})$, such that $\{(\tau, \bm{k}) \in \ZZ^3 : \abs{\tau}\leq \tau_{\max}, \abs{\bm{k}_j}\le k_{\max}, j=1, 2 \}$ are kept. $R_\phi$ is parametrized as a $\CC^{\tau_{\max} \times k_{\max}\times k_{\max}\times d_v\times d_v}$-tensor. Here $(\tau, \bm{k})$ denotes the coordinate in the spatiotemporal Fourier domain.
Note that the integral represented by matrix product with $\F\kappa(\cdot, (s, \bm{y}))$ can then be viewed as residing in the continuous space as the Fourier basis \eqref{eq:space-fourier} can be evaluated at arbitrary point.

\begin{figure}[t]
  \centering
  \includegraphics[width=\textwidth]{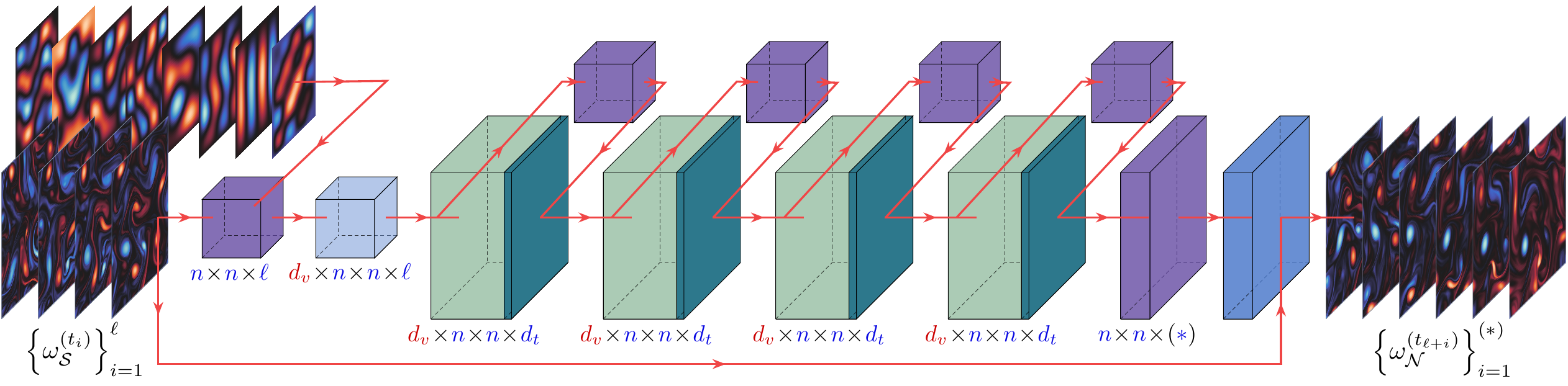}
  \caption{The Spatiotemporal-adaptated FNO3d (ST-FNO3d) is now a trajectory-to-trajectory learner. \lnbox: layer normalization to replace a hard-coded global normalization. Combined with channel mixing, the first spectral convolution layer serves as a time-depth-wise separable (global) convolution, after which the time dimension is shrank to a fixed ``latent'' time dimension through iFFT's resampling. \ftbox: the spatiotemporal spectral convolution layer as the final layer is fine-tuned after the training phase.}
  \label{fig:sfno-arch}
\end{figure}

\vspace{\parashrink}

\paragraph{Modifications to the lifting operator $P$.}
The other major hurdle for FNO3d to become trajectory-to-trajectory is that a global normalization is applied with a hard-coded temporal dimension using training data.\footnote{Line 205 and 209 in the original FNO3d source code \href{https://github.com/scaomath/fourier_neural_operator/blob/master/fourier_3d.py\#L205}{ \texttt{fourier\_3d.py}} (link provided for an unaltered fork of the \texttt{master} branch commit~\texttt{de514f2} in the \href{https://github.com/neuraloperator/neuraloperator}{original FNO GitHub repository}).}
In ST-FNO3d's modification, a periodic padding along the temporal dimension is applied to the input, and then a time-depth separable convolution layer $\mathcal{I}$ with variable time steps is used to map an arbitrary number of snapshots to a fixed number of channels $d_v$ with a fixed latent time steps $d_t$.
The spatiotemporal positional encodings $\widetilde{\mathbf{p}}_{\S} := W_p \mathbf{p}$, with $W_p$ a learnable random projection with periodically padded $\mathbf{p}$ as input. This makes $\widetilde{\mathbf{p}}_{\S}$ match the latent fields' dimensions such that they can be concatenated.
The latent fields, concatenated with $\widetilde{\mathbf{p}}_{\S}$, are then normalized by a learnable layer normalization layer $\text{Ln}(\cdot)$, instead of a global fixed pointwise normalizer for the raw data in FNO3d. Finally, in ST-FNO3d, the lifting operator becomes $\widetilde{P}:=\text{Ln}\big( \widetilde{\mathbf{p}}_{\S} + \mathcal{I}(\cdot)\big): \RR^{\ell \times n\times n} \to \RR^{d_v \times d_{t} \times n\times n}$.

\vspace{\parashrink}

\paragraph{Modifications to the projection operator $Q$.}
\label{paragraph:out-proj}
In FNO3d, the original out projection operator $Q$ maps a tensor in $\RR^{n_t \times d_{v} \times n\times n}$ to another in $\RR^{n_t \times n\times n}$. In the spatiotemporal-adapted FNO $\widetilde{Q}: \RR^{d_v \times d_{t} \times n\times n}  \to \RR^{n_t \times n\times n}$, which maps the last latent representation to match the dimension of the output $\mathbf{u}_{\S}$.
In $\widetilde{Q}$, the key for an arbitrary-length inference is that we compose another spectral convolution $K_{\S}$ after channel reduction. $K_{\S}$ can be thought of as a post-processing layer (also as the \textbf{\emph{only layer}} to be fine-tuned). It takes advantage of the FFT/iFFT's natural super-resolution capacity, especially in the temporal dimension, by zero-padding the latent temporal step dimension ($d_t$) to given arbitrary output time steps.
For the necessity of this padding, we refer the reader to Figure \ref{fig:padding} for an illustrative example.
For the V-P formulation, to impose the divergence-free condition for, $S$ is implemented as an optional non-learnable Helmholtz decomposition layer in $\widetilde{Q}$.

\section{New Hybrid Paradigm for Spatiotemporal Operator Learning}

\revision{Built upon a reasonably accurate approximation, the fine-tuning of ST-FNO is proposed. It is able to yield accuracy on par with traditional numerical methods on the same time horizon, and computational resources used are comparable to the evaluation of NOs.} Taking advantage of the efficient ST-FNO zero-shot arbitrary-length temporal inference, this new approach does not need thousands of marching steps like traditional methods. Meanwhile, it borrows the wisdom from traditional Galerkin methods to improve the accuracy (consistency) of the NO approach, liberating the scheme from trade-offs that the traditional methods must make to ensure stability and convergence.

\subsection{A posteriori error estimation using a functional norm}
\label{sec:error-estimate}
We shall present the proposed fine-tuning using the V-P formulation in subsequent subsections.
Denote an ST-FNO evaluation at a specific $t_m\in \T_{n_t}:= \{t_{\ell+1},\cdots, t_{\ell+n_t}\}$ in the output time interval as $\bm{u}_{\N}^{(m)}$.
The construction of $\widetilde{Q}$ in ST-FNO renders $\bm{u}_{\N}^{(m)}\in \W$, where $\W$ is the divergence-free subspace of $\S|_{t=t_m} \times \S|_{t=t_m}\subset \V:=\bm{H}^1(\TT^2)$, where
\begin{equation}
  \label{eq:space-fourier}
  \S := \operatorname{span} \left\{\mathfrak{Re}\left(e^{i\tau_m t} e^{i\bm{k}\cdot \bm{x}} \right):\; -{n}/{2}\le k_j \le {n}/{2}-1, -{n_t}/{2}\le m \le n_t/2-1 \right\}/ \RR,
\end{equation}
where $\bm{k} :=2\pi (k_j)_{j=1,2} $ and $\tau_m := 2\pi m/(T-t_l)$. Then, a temporal continuous approximation $\bm{u}_{\N}:= \bm{u}_{\N}(t, \cdot)$ can be naturally defined by allowing $t$ vary continuously on $\T := [t_{\ell+1}-t_p, T+t_p]$ thanks to the spectral basis of $\S$, where $t_p$ is the temporal periodic padding in Section \ref{sec:sfno}. Define residual functional $R(\bm{u}_{\N})\in L^2(\mathcal{T}; \V')$: at $t\in \mathcal{T}$ and for $\bm{v}\in \V$
\begin{equation}
  \label{eq:functional-residual}
  R(\bm{u}_{\N})  : =\bm{f} - \partial_t \bm{u}_{\N} -  (\bm{u}_{\N}\cdot\nabla){\bm{u}_{\N}} + \nu \Delta {\bm{u}_{\N}}, \;\text{ and } \;R(\bm{u}_{\N}) (\bm{v}):=  \left\langle R(\bm{u}_{\N}), \bm{v} \right\rangle.
\end{equation}
$R(\bm{u}_{\N})$ measures how PDE \eqref{eq:velocity-pressure} is violated by the finite-dimensional approximation $\bm{u}_{\N}$, not in a localized/pointwise fashion, but rather in a global way by representing the error based on its inner product against arbitrary $\bm{v}$.
At a specific time $t$, the functional norm of $R (\bm{u}_{\N}) (t, \cdot)\in \V'$ defined as follows is then an excellent measure of the error to be a candidate for a loss function in view of Theorem \ref{theorem:error-estimate}:
\begin{equation}
  \label{eq:norm-residual-functional}
  \|R(\bm{u}_{\N})(t, \cdot)\|_{\V'} :=
  \sup\nolimits_{\bm{v}\in \mathcal{V}, \|\bm{v}\|_{\mathcal{V}} =1} \left\vert\left(R (\bm{u}_{\N}), \bm{v}\right)  \right\vert.
\end{equation}
\begin{theorem}[\emph{A posteriori} error bound for any fine-tuned approximations, informal version]
  \label{theorem:error-estimate}
  Let the weak solution to \eqref{eq:velocity-pressure} be $\bm{u} \in L^2(\mathcal{T}; \V)$, and $\partial_t \bm{u} \in L^{2} (\mathcal{T}; \V') $. For $\bm{u}$ that is sufficiently regular, the dual norm of the residual is efficient to estimate the error for any $\bm{u}_{\N}$:
  \begin{equation}
    \label{eq:error-estimate-lower}
    \|R(\bm{u}_{\N})\|_{L^2(\T; \V')}^2 \lesssim   \|\bm{u} - \bm{u}_{\N}\|^2_{L^{2} (\T; \V)}
    + \|\partial_t(\bm{u} - \bm{u}_{\N}) \|^2_{L^{2} (\T; \V')}.
  \end{equation}
  Moreover, if $\bm{u}$ and $\bm{u}_{\N}$ are ``sufficiently close'', then it is reliable to serve as an error measure:
  \begin{equation}
    \label{eq:error-estimate-upper}
    \|\bm{u} - \bm{u}_{\N} \|^2_{L^{\infty} (\T_m; \H)} + \|\bm{u} - \bm{u}_{\N}\|^2_{L^{2} (\T_m; \V)}
    \leq \bigl\|(\bm{u} - \bm{u}_{\N})(t_m, \cdot)\bigr\|^2_{\V}
    + C\int_{\T_m} \|R (\bm{u}_{\N})(t, \cdot)\|_{\V'}^2 \,\dd t.
  \end{equation}
  where $\T_m:= (t_m, t_{m+1}]$, and the constants depend on the regularity of the true solution $\bm{u}$.
\end{theorem}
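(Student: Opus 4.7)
The plan is to treat the estimate as a classical \emph{a posteriori} analysis adapted to Bochner-space trajectories, exploiting the fact that $\bm{u}_{\N}(t,\cdot)\in\W$ is divergence-free for every $t$ by construction of $\widetilde{Q}$. First I would write down the weak formulation for the true solution, namely $\langle\partial_t\bm{u},\bm{v}\rangle + ((\bm{u}\cdot\nabla)\bm{u},\bm{v}) + \nu(\nabla\bm{u},\nabla\bm{v}) = (\bm{f},\bm{v})$ for all $\bm{v}\in\V$, and subtract the pointwise-in-time pairing of \eqref{eq:functional-residual} with $\bm{v}$ to obtain the error identity
\begin{equation*}
  R(\bm{u}_{\N})(\bm{v}) = \langle\partial_t\bm{e},\bm{v}\rangle + \nu(\nabla\bm{e},\nabla\bm{v}) + \bigl((\bm{u}\cdot\nabla)\bm{u} - (\bm{u}_{\N}\cdot\nabla)\bm{u}_{\N},\,\bm{v}\bigr),
\end{equation*}
where $\bm{e}:=\bm{u}-\bm{u}_{\N}$. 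Here the pressure gradient drops out after testing against divergence-free $\bm{v}$. This identity is the backbone of both bounds.

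For the efficiency bound \eqref{eq:error-estimate-lower}, I would bound the right-hand side of the error identity term by term with $\|\bm{v}\|_{\V}=1$, using Cauchy--Schwarz on the diffusion term, the definition of the $\V'$ norm on $\partial_t\bm{e}$, and splitting the convection term as $(\bm{e}\cdot\nabla)\bm{u} + (\bm{u}_{\N}\cdot\nabla)\bm{e}$ followed by standard Sobolev/H\"older estimates (e.g.\ $H^1\hookrightarrow L^4$ in 2D). Taking the supremum over such $\bm{v}$, squaring, and integrating over $\T$ yields
\begin{equation*}
  \|R(\bm{u}_{\N})\|_{L^2(\T;\V')}^2 \lesssim \|\partial_t\bm{e}\|_{L^2(\T;\V')}^2 + (1+\|\bm{u}\|_{L^\infty(\T;\V)}^2 + \|\bm{u}_{\N}\|_{L^\infty(\T;\V)}^2)\,\|\bm{e}\|_{L^2(\T;\V)}^2,
\end{equation*}
which gives \eqref{eq:error-estimate-lower} once the regularity of $\bm{u}$ (and the boundedness of the ST-FNO output uniformly in time, absorbed into the implicit constant) is invoked.

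For the reliability bound \eqref{eq:error-estimate-upper}, the strategy is an energy estimate: choose $\bm{v}=\bm{e}$ in the error identity, which is admissible because both $\bm{u}$ and $\bm{u}_{\N}$ are divergence-free. The left-hand side becomes $\tfrac{1}{2}\tfrac{\dd}{\dd t}\|\bm{e}\|_{\H}^2 + \nu\|\nabla\bm{e}\|_{L^2}^2$. The convection term is split as above; the piece $((\bm{u}_{\N}\cdot\nabla)\bm{e},\bm{e})$ vanishes by the usual antisymmetry $b(\bm{w};\bm{e},\bm{e})=0$ for $\nabla\cdot\bm{w}=0$, and the remaining piece $((\bm{e}\cdot\nabla)\bm{u},\bm{e})$ is controlled by Ladyzhenskaya-type interpolation $\|\bm{e}\|_{L^4}^2 \lesssim \|\bm{e}\|_{\H}\|\nabla\bm{e}\|_{L^2}$, giving a bound $\lesssim \|\nabla\bm{u}\|_{L^\infty}\,\|\bm{e}\|_{\H}\|\nabla\bm{e}\|_{L^2}$ (this is where regularity of $\bm{u}$ enters). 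The pairing $R(\bm{u}_{\N})(\bm{e})$ is bounded by $\|R(\bm{u}_{\N})\|_{\V'}\|\bm{e}\|_{\V}$. Absorb the $\|\nabla\bm{e}\|_{L^2}^2$ terms on the left via Young's inequality (this step uses the ``sufficiently close'' hypothesis so that the coefficient in front of the quadratic term in $\bm{e}$ remains below $\nu/2$), integrate from $t_m$ to any $t\in\T_m$, and apply Gr\"onwall's inequality.

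The main obstacle is the nonlinear convection term $((\bm{e}\cdot\nabla)\bm{u},\bm{e})$: without a smallness assumption on $\bm{e}$ it cannot be absorbed into the diffusion on the left-hand side, which is precisely why \eqref{eq:error-estimate-upper} is only asserted under the ``sufficiently close'' condition; this is the familiar obstacle to unconditional reliability in \emph{a posteriori} estimation for Navier--Stokes. A secondary technical point is ensuring that $\bm{u}_{\N}$ has enough regularity in $t$ to make $\partial_t\bm{u}_{\N}\in L^2(\T;\V')$ meaningful, which is handled by the spectral representation \eqref{eq:space-fourier}: the finite trigonometric sum is smooth in $t$ so $\partial_t\bm{u}_{\N}$ is evaluated exactly via Parseval, and all dual norms reduce to weighted $\ell^2$ sums of Fourier coefficients.
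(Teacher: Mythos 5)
Your proposal is correct, and for the efficiency bound \eqref{eq:error-estimate-lower} it is essentially identical to the paper's argument: the same error identity obtained by subtracting the weak form from the residual pairing (with the pressure eliminated by divergence-free test functions and the diffusion term integrated by parts using periodicity), the same splitting of the convection difference into two trilinear terms, and the same term-by-term continuity estimates before taking the supremum and integrating in time. For the reliability bound \eqref{eq:error-estimate-upper} your route diverges from the paper's in one place: after testing with $\bm{e}=\bm{u}-\bm{u}_{\N}$ and killing $\bigl((\bm{u}\cdot\nabla)\bm{e},\bm{e}\bigr)$ by skew-symmetry, you control the surviving term $\bigl((\bm{e}\cdot\nabla)\bm{u},\bm{e}\bigr)$ by H\"older/Ladyzhenskaya, absorb half of $\nu\|\nabla\bm{e}\|^2$ by Young, and close with Gr\"onwall; the paper instead posits a G\aa rding-type inequality for the linearized form $B(\cdot,\cdot;\bm{u})$ (its Assumption (2.1.2), which encodes ``sufficiently close''/regularity), absorbs the convection term directly, and integrates without Gr\"onwall, obtaining a constant $C=C(\nu)$ and coefficient exactly $1$ on the initial-error term. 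The two are interchangeable for the informal statement: your version buys an essentially unconditional estimate (the Gr\"onwall factor $\exp\bigl(C\nu^{-1}\int_{\T_m}\|\nabla\bm{u}\|^2\,\dd t\bigr)$ is finite for any weak solution, so no smallness of $\bm{e}$ is actually consumed at the Young step — your attribution of the ``sufficiently close'' hypothesis to that absorption is slightly misplaced), at the price of a constant that depends exponentially on $\bm{u}$'s regularity and multiplies the $\|(\bm{u}-\bm{u}_{\N})(t_m,\cdot)\|^2_{\V}$ term as well; the paper's version keeps the clean constants displayed in \eqref{eq:error-estimate-upper} but needs the closeness/regularity assumption up front, exactly the trade-off its own remark after the proof discusses.
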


\subsection{Fine-tuning using negative Sobolev norm as loss}
\label{sec:fine-tuning}
The functional norm \eqref{eq:norm-residual-functional} is ``global'' because it does not have a natural $\ell^2$-like summation form where each summand can be evaluated in a localized neighborhood of grid points.
Nevertheless, thanks to the Gelfand triple, and viewing the Fourier transform as an automorphism in the tempered distribution space (e.g., see \citet{1975Peetre} and \citet[Chapter 3]{2016GelfandShilov}), we can define the pairing between $\V$ and $\V'$ as follows without getting into the intricate natures of the tempered distribution:
\begin{equation}
  \langle f,g\rangle_{\V, \V'} \; \text{``}\! = \!\text{''} \int_{\ZZ^2}(1+|\bm{k}|^2)^{-s} \hat{f}(\bm{k})\overline{\hat{g}(\bm{k})}(1+|\bm{k}|^2)^s \dd\bm{k}, \text{ where }
  \hat{v} := \mathcal{F}v \text{ for } v\in \V'.
\end{equation}
The spatial Sobolev space $H^s(\TT^2)$ can be alternatively identified using norm $\|\cdot\|_{s}$ and seminorm $|\cdot|_s$ (e.g., see \citet[Def. 3.2.2]{2009RuzhanskyTurunen}) as follows for any $s\in \RR$
\begin{equation}
  \label{eq:norm-fourier}
  \|f\|_{s}^2  :=  \sum_{\bm{k} \in\ZZ^2} (1+|\bm{k}|^2)^{s}  |\hat{f}(\bm{k})|^2,\; \text{ and }\; |f|_{s}^2 := \sum_{\bm{k}\in \mathbb{Z}_n^{2}\backslash \{\bm{0}\}}
  |\bm{k}|^{2s} \bigl| \hat{f}(\bm{k})\bigr|^2, \; s\neq 0.
\end{equation}
Moreover, we have the subsequent lemma in our specific case.
\begin{theorem}[Functional norm ``$ \simeq $'' negative norm]
  \label{theorem:norm-equivalence}
  If $f\in \H:=L^2(\TT^2)/\RR$, $\|f\|_{\H'} = |f|_{-1}$.
\end{theorem}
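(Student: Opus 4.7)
The plan is to unfold the dual norm via Parseval's identity and match it against the weighted $\ell^2$ sum that defines $|\cdot|_{-1}$ in \eqref{eq:norm-fourier}. First I would pin down the conventions implicit in the Gelfand triple $\V \hookrightarrow \H \hookrightarrow \V'$ used throughout the section: with $\H = L^2(\TT^2)/\RR$ and $\V$ the mean-zero subspace of $H^1(\TT^2)$, the natural norm on $\V$ is the seminorm $|v|_1 = \bigl(\sum_{\bm{k}\neq\bm{0}} |\bm{k}|^2|\hat v(\bm{k})|^2\bigr)^{1/2}$, which is a genuine norm by Poincar\'e. For $f \in \H$, the zero-mean condition forces $\hat f(\bm{0})=0$, so Parseval rewrites the pairing as $\langle f, v\rangle = \sum_{\bm{k}\neq\bm{0}} \hat f(\bm{k})\overline{\hat v(\bm{k})}$, the canonical extension of the $L^2$ inner product to the $\V$--$\V'$ pairing.

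Second, I would obtain the upper bound by splitting each summand as $\bigl(\hat f(\bm{k})/|\bm{k}|\bigr)\cdot\bigl(|\bm{k}|\overline{\hat v(\bm{k})}\bigr)$ and applying Cauchy--Schwarz:
\[
|\langle f, v\rangle| \;\le\; \Bigl(\sum_{\bm{k}\neq\bm{0}} |\bm{k}|^{-2}|\hat f(\bm{k})|^2\Bigr)^{1/2}\Bigl(\sum_{\bm{k}\neq\bm{0}} |\bm{k}|^2|\hat v(\bm{k})|^2\Bigr)^{1/2} \;=\; |f|_{-1}\cdot |v|_1.
\]
Taking the supremum over $v$ with $|v|_1 = 1$ gives $\|f\|_{\H'} \le |f|_{-1}$.

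Third, I would produce a matching lower bound by exhibiting an explicit extremizer. Set $\hat v^*(\bm{k}) := |f|_{-1}^{-1}\,|\bm{k}|^{-2}\overline{\hat f(\bm{k})}$ for $\bm{k}\neq\bm{0}$ and $\hat v^*(\bm{0})=0$; a direct Fourier calculation verifies $|v^*|_1 = 1$ and $\langle f, v^*\rangle = |f|_{-1}$, so the supremum is attained, yielding the reverse inequality and hence the equality.

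No step is genuinely deep: the argument is essentially the Riesz representation of the dual written out in the Fourier basis. The only point requiring care is the choice of norm on $\V$ --- the equality (rather than mere equivalence) asserted in the theorem hinges on using the seminorm $|\cdot|_1$ rather than the full $\|\cdot\|_1$, and the quotient by $\RR$ in the definition of $\H$ is precisely what makes this seminorm a genuine norm via Poincar\'e and keeps the zero Fourier mode consistently absent on both sides of the identity.
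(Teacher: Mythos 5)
Your proposal follows essentially the same route as the paper's proof: Parseval turns the duality pairing into a weighted $\ell^2$ sum over nonzero modes, Cauchy--Schwarz with the splitting $|\bm{k}|^{-1}\cdot|\bm{k}|$ gives $\|f\|_{\V'}\le |f|_{-1}$, and the supremum is attained at (a normalization of) $(-\Delta)^{-1}f$, which is precisely the test function the paper chooses. One small slip to fix: with your stated convention $\langle f,v\rangle=\sum_{\bm{k}\neq\bm{0}}\hat f(\bm{k})\overline{\hat v(\bm{k})}$, the Cauchy--Schwarz equality case requires $\hat v^*(\bm{k})\propto|\bm{k}|^{-2}\hat f(\bm{k})$ \emph{without} the complex conjugate (i.e.\ $v^*\propto(-\Delta)^{-1}f$); as written, your choice $\hat v^*(\bm{k})=|f|_{-1}^{-1}|\bm{k}|^{-2}\overline{\hat f(\bm{k})}$ yields $\langle f,v^*\rangle=|f|_{-1}^{-1}\sum_{\bm{k}\neq\bm{0}}|\bm{k}|^{-2}\hat f(\bm{k})^2$, which need not equal $|f|_{-1}$ because $\hat f(\bm{k})$ is generally complex even for real-valued $f$. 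Dropping the conjugate recovers the paper's argument verbatim.
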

Spatially, we realize a regularized negative Sobolev norm by the Fast Fourier Transform (FFT):
\begin{equation}
  \label{eq:norm-negative-discrete}
  \|f\|_{-1, \alpha, n}^2 :=  \sum\nolimits_{\bm{k}\in \mathbb{Z}_n^{2}\backslash \{\bm{0}\}}
  (\alpha+|\bm{k}|^2)^{-1}\bigl| \hat{f}(\bm{k})\bigr|^2, \text{ where } \ZZ_n^2 := (\ZZ/n\ZZ)^2, \text{ and } \alpha\geq 0.
\end{equation}
With this norm at hand, \eqref{eq:norm-residual-functional} and the Bochner norm in \eqref{eq:error-estimate-lower} of the residual are realized to serve as the loss function in the fine-tuning to ``refine'' the approximation in the spectral domain
\begin{equation}
  \label{eq:error-estimator}
  \eta_m(\bm{u}_{\N}, \partial_t\bm{u}_{\N}) := \|R(\bm{u}_{\N})(t_m, \cdot)\|_{-1, \alpha, n}.
\end{equation}

\vspace{-0.2in}
\paragraph{Parseval identity.} In the context of using an optimization algorithm to train an NN as a function approximator, it is known (e.g., \citet{2023SiegelHongJinEtAlGreedy}) that whether the integral in the loss function is accurately computed affects whether the NN can achieve the scientific computing level of accuracy.
For example, on a uniform grid, the accuracy of the mesh-weighted spatial $\ell^2$-norm as the numerical quadrature is highly affected by the \emph{local} smoothness of the function in consideration.
Nevertheless, computing the integral in the frequency domain yields exponentially convergent approximations~\citep[Theorem 3.1]{2014TrefethenWeideman} thanks to the Parseval identity.

\vspace{\parashrink}
\paragraph{Why functional-type norm for the residual evaluation?}  We note that in ``physics-informed'' learning of operators, for example, in \citet{2024LiEtALPINO}, the PDE residual is evaluated using $L^2$-norm as loss. In the meantime, \emph{positive} Sobolev norm, which is local in terms of differential operators, is used in \citet{2022LiEtAlChaotic}.
To our best knowledge, the $H^{-1}$-functional norm has not been applied in either function learning or operator learning problems. In fact, the relation of the Gelfand triple is so simple and elegant, leading to Theorem \ref{theorem:norm-equivalence}, that the functional norm is nothing but an inverse frequency weight in the frequency domain that emphasizes the learning of \emph{low frequency} information. This suits especially well for the learning problem of NSE. Quite contrary to the intuition of FNO variants having the frequency truncation that results error dominating in the high-frequency part, it has been discovered in \citet{2023LippeVeelingPerdikarisEtAl} that the error of operator learning for NSE is still dominant in the lower end of the spectrum. This has been corroborated in our study as well, see Figure~\ref{fig:enstrophy-spectra} and Figure~\ref{fig:example-2-fno-freq}. For a more detailed and mathematically enriched discussion on why functional norm is not widely popular in traditional numerical methods, we refer the reader to Appendix \ref{paragraph:functional-norm}.

\vspace*{-0.1in}
\subsection{New training and spectral fine-tuning paradigm}
\vspace*{-0.1in}

With the trajectory-to-trajectory learner and the error estimators, we propose a new training-fine-tuning paradigm. Another important motivation of this new paradigm is that, experimentally, all FNO variants capture the statistical properties of the 2D turbulence \citep{1987BenziPatarnelloSantangelostatistical,2012BoffettaEcke} quickly in training.
Even after \textbf{a single epoch}, the evaluations of ST-FNO converge to a reasonably small neighborhood of the ground truth in terms of the frequency signature of the energy cascade of the flow (see e.g., Figure \ref{fig:enstrophy-spectra}, and Appendix \ref{appendix:fine-tuning} for more details).
Then, $95\%$ of the FLOPs spent in training only contribute to marginal improvements. These observations motivate us to rethink a more efficient paradigm than end-to-end, in the meantime not needing to initiate the expensive training of another nonlinear corrector as PDE-Refiner~\citep{2023LippeVeelingPerdikarisEtAl} in the postprocessing phase.

\vspace*{\parashrink}
\paragraph{The computation of extra fields.} In evaluating the loss accurately and relying on this evaluation to apply the gradient method, another key barrier is that the extra field variables in evaluating the residual for the velocity \eqref{eq:functional-residual}, one needs to compute $\partial_t \bm{u}_{\N}$. 
Instead of a common approach of using NO to represent the extra field variables~\citep{2022WenLiAzizzadenesheliEtAl,2023BrandstetterEtAlClifford}, we opt to apply \emph{a traditional implicit-explicit (IMEX) numerical solver $B_{\gamma}(\cdot)$ with a fine time step}~\citep{2012WangNumerMath}, e.g., $\mathcal{O}((\Delta t)^\gamma)$ for $\gamma\geq 2$, to compute these extra field variables (Line~\ref{alg:line-solver} of Algorithm \ref{alg:ft}), while preserving the computational graph for auto-differentiation. We note that, in NSE simulations using traditional numerical solvers, for efficiency, the magnitude of this fine time step is never realistic or attainable due to time marching. Given the training data with trajectories at $\{t_{1}, \dots, t_{{\ell}}\}$ aiming to predict the trajectories at $\{t_{\ell+1}, \dots, t_{\ell+n_t}\}$, the new paradigm to train and fine-tune ST-FNO is outlined in Algorithm~\ref{alg:ft}. 
\begin{minipage}{0.96\linewidth}
  \begin{algorithm}[H]
    \caption{The new parallel-in-time spectral refining fine-tune strategy in small data regime.}
    \label{alg:ft}
    \begin{algorithmic}[1]
      \Require ST-FNO $G_{\theta,\Theta}:= \widetilde{Q}_{\theta}\circ G_{\Theta}$; time stepping scheme $B_{\gamma}(\cdot)$; optimizer $\mathcal{D}(\theta, \nabla_{\theta}(\cdot))$; training dataset: solution trajectories at $[t_1, \dots, t_{\ell'}]$ as input and  at $[t_{\ell'+1}, \dots, t_{\ell'+n_t'}]$ as output.
      \State Train the ST-FNO model until the energy signature matches the energy cascade.
      \State \revision{Freeze $\Theta$ in $\G_{\Theta}$ of ST-FNO $\G_{\theta,\Theta}$, keep $\widetilde{Q}_{\theta}$ trainable}.
      \State Cast all \texttt{nn.Module} involved and tensors to \texttt{torch.float64} and \texttt{torch.complex128} hereafter.
      \Require Evaluation dataset: solution trajectories at $[t_1, \dots, t_{\ell}]$ as input, output time step $n_t$.
      \For{$m=\ell, \cdots, \ell+n_t-1$}
      \State Extract the latent fields $\bm{v}^{(m+1)}_{\N}$ output of $G_{\Theta}$ at $t_{m+1}$ and hold them fixed.
      \EndFor
      \State By construction of ST-FNO:  $\bm{u}^{(m+1)}_{\N}(\theta) := \bm{u}^{(\ell)}_{\N} + \widetilde{Q}_{\theta} \bigl(\bm{v}^{(m+1)}_{\N}\bigr)$ for all $m$.
      \For{$j=1, \cdots, \texttt{Iter}_{\max}$}
      \State March one step with $(\Delta t)^{\gamma}$ using $B_{\gamma}$ and approximate the $\partial_t \bm{u}_{\N}$ as follows: $D_t \bm{u}_{\N}^{(m+1)}(\theta) := (\Delta t)^{-\gamma}(B_{\gamma} (\bm{u}^{(m+1)}_{\N}(\theta)) - \bm{u}^{(m+1)}_{\N}(\theta))$ for all $m$.
      \label{alg:line-solver}
      \State Compute $\eta^2 :=\sum_{m} \eta_{m}^2(\bm{u}^{(m+1)}_{\N}(\theta), D_t\bm{u}^{(m+1)}_{\N}(\theta))$ for all evaluation time steps.
      \label{alg:line-ft-eval}
      \State Apply the optimizer to update parameters in $\widetilde{Q}$: $\theta \gets \mathcal{D}(\theta, \nabla_{\theta}(\eta^2))$.
      \label{alg:line-optimizer}
      \State Forward pass only through $\widetilde{Q}$ to update $\bm{u}^{(m+1)}_{\N} \gets \bm{u}^{(\ell)}_{\N} + \widetilde{Q}_{\theta}(\bm{v}_{\N}^{(m+1)})$ for all $m$.
      \label{alg:line-ft-update}
      \EndFor
      \Ensure A sequence of velocity profiles at corresponding time steps $\{\bm{u}^{(m)}_{\N}\}_{m=\ell+1}^{\ell+n_t}$.
    \end{algorithmic}
  \end{algorithm}
\end{minipage}

\paragraph{Interpretations of the theoretical results.}
Theorem \ref{theorem:error-estimate} establishes that the functional norm of the residual $R(\bm{u}_{\N})$, without accessing $\bm{u}$, is a good representation of the error $\bm{u} - \bm{u}_{\N}$ in Bochner norms. Estimate \eqref{eq:error-estimate-lower} indicates that reducing the \emph{a posteriori} error estimator is a \emph{necessary} condition for reducing the true error. While estimate \eqref{eq:error-estimate-upper} is more delicate in that it is only reliable when $\bm{u}_{\N}$ gets ``close'' to $\bm{u}$. Theorem \ref{theorem:norm-equivalence} lays the foundation to accurately evaluate this functional norm via FFT.  Nevertheless, \eqref{eq:error-estimate-upper} serves as a guideline to design the ``refining'' procedure (lines \ref{alg:line-ft-update} in Algorithm \ref{alg:ft}), where the error estimator moves to refine the next time step once the error in the previous one becomes less than a given threshold.

\section{Numerical Experiments}
\subsection{Illustrative example: Taylor-Green vortex}
\label{sec:example-taylor-green}
In this illustrative example, we examine the 2D Taylor-Green vortex model~\citep{taylor1937mechanism}, whose analytical solution is \emph{known} and frequently employed as a benchmark for evaluating traditional numerical schemes for NSE~\citep{1977GottliebOrszag}. We create a toy train dataset with 10 trajectories on a $256^2$ grid with varying numbers of vortices per wavelength, and the test sample has an unseen number of vortices yet still can be resolved up to the Nyquist scale. For details please refer to Appendix \ref{appendix:taylor-green}. The FNO3d used in this example is scaled down to 1 layer.

\subsection{2D isotropic turbulence}
This meta-example contains a series of examples of isotropic turbulence \citep{1984McWilliams}, featured in various work such as \cite{kochkov2021machine,li2020fourier}. The energy and the enstrophy spectra satisfy the energy law of turbulence first proposed by A. N. Kolmogorov \citep{1941Kolmogorov}. The data are generated using a second-order time-stepping scheme that is proven in theory to preserve the inverse cascade of the energy/enstrophy spectra \citep{2012WangNumerMath,2012GottliebToneWangEtAlSINUM}. We consider two cases:
\begin{enumerate}[topsep=-1pt,
      align=left,
      itemsep=-1pt,
      leftmargin=1.5em,
      label=\textcolor{blue!70!black}{(\Roman*)},
      labelwidth=1em,
    ]
  \item NSE benchmark in \citet{li2020fourier}, $\nu=10^{-3}$,  $\omega_0 \sim \mathcal{N}(0,  (-\Delta + \tau^2 I)^{-\alpha/2})$, the energy density in wavenumber $k:=|\bm{k}|$ is $f(k) \sim (k^2 + \tau^2)^{-\alpha}$, no drag, fixed force with a low wavenumber. \label{list:example-fno3d}
  \item The famous decaying turbulence discovered in~\citet{1984McWilliams}, the initial power spectrum $|\hat{\psi}(k)|^2 \sim k^{-1}(\tau^2 + (k/k_0)^4)^{-1}$ is chosen that the decaying is slow and the enstrophy density evolves to a similar energy cascade to the Kolmogorov flow used as examples in \cite{kochkov2021machine,2023LippeVeelingPerdikarisEtAl,2023SunYangYoo}. \label{list:example-McWilliams}
\end{enumerate}
Tables \ref{table:example-fno3d} and \ref{table:example-McWilliams} report the results for example \ref{list:example-fno3d} and example \ref{list:example-McWilliams}, respectively. In Appendix \ref{appendix:arch-details},Table \ref{table:networks} and Table \ref{table:flops} report the architectural changes and the computational efficiency comparison with roll-out and traditional solver.

\section{Conclusion and Limitations}
We designed a new pipeline to train and fine-tune a new spatiotemporal modification of FNO to get close to the true solution (not the ground truth generated by the numerical solver) of NSE in the turbulent regime. The evaluation errors in benchmark problems are up to $10^5$ times better than the non-fine-tuned FNO variants trained by a simple end-to-end pipeline. Due the exploitation of the intricate connections with traditional spectral methods, e.g., the optimally learned parameters correspond to a Fourier-Galerkin projection~\citep{2021KovachkiLanthalerMishraJMLR} using the Fourier basis, only FNO-based neural operators benefit from this advancement. How to generalize this new pipeline to other types of operator learners in Appendix \ref{appendix:literature} will be worthy of exploration.

\begin{table}[htb]
  \centering
  \begin{minipage}[b]{0.495\hsize}\centering
    \caption{{\small Results for Taylor-Green vortex,
    the relative errorsa at the final time step, $\varepsilon:= \omega_{\text{true}} - \omega_{\N}$.  }}
    \label{table:taylor-green}
    \resizebox{!}{0.032\textheight}{
      \begin{tabular}{rccccc}\toprule
        & \multicolumn{2}{c}{Evaluation after training}
        & \multicolumn{2}{c}{After fine-tuning}
        \\
        \cmidrule(lr){2-3}
        \cmidrule(lr){4-5}
        & $\|{\varepsilon}\|_{L^2}$  & $\|R\|_{-1,n}$
        & $\|{\varepsilon}\|_{L^2}$  & $\|R\|_{-1,n}$
        \\
        \midrule
        ST-FNO3d  &   \num{1.94e-01} & \num{2.18e-01}  &  \num{1.24e-07}    &  \num{3.21e-07}
        \\
        PS+RK2 (GT)  &   \num{5.91e-6} & \num{1.16e-05}  &  N/A    &   N/A
        \\
        \bottomrule
      \end{tabular}
    }
  \end{minipage}
  \vspace{-2pt}
  \begin{minipage}[b]{0.495\hsize}\centering
    \caption{{\small Ablation study using forced turbulence, original example from \cite{li2020fourier}.
    $\varepsilon:= \omega_{\S} - \omega_{\N}$} }
    \label{table:example-fno3d}
    \resizebox{!}{0.032\textheight}{
      \begin{tabular}{rccccc}\toprule
        \multirow{3}{*}{ST-FNO3d}
        & \multicolumn{2}{c}{Evaluation after training}
        & \multicolumn{2}{c}{After fine-tuning}
        \\
        \cmidrule(lr){2-3}
        \cmidrule(lr){4-5}
        & $\|{\varepsilon}\|_{L^2}$ & $\|R\|_{-1,n}$
        & $\|{\varepsilon}\|_{L^2}$ & $\|R\|_{-1,n}$
        \\
        \midrule
        10 ep \& $L^2$ FT &    \num{2.08e-02}  &  \num{1.27e-02}    &   \num{2.82e-04}   & \num{2.78e-05}
        \\
        10 ep \& $H^{-1}$ FT   &  --  &  --    &    \num{3.16e-04}   & \num{4.59e-07}
        \\
        \bottomrule
      \end{tabular}
    }
  \end{minipage}%
\end{table}

\begin{table}[htbp]
  \caption{Evaluation metrics of the McWilliams isotropic turbulence example.
    All models are trained using on $64\times 64$ mesh and evaluated on $256\times 256$ mesh.
    $\varepsilon: = \bm{u}_{\S} - \bm{u}_{\N}$ or $\omega_{\S} - \omega_{\N}$. $\S$ stands for the Fourier approximation space \eqref{eq:space-fourier} on a $256\times 256$ fine grid. $r: =  R(\bm{u}_{\N})$ or $R(\omega_{\N})$. $\Y:= H^{-1}(\TT^2)$. All error norms are evaluated at the final time step. $H^{-1}$ appending model means the training uses the difference in the $H^{-1}$-norm as the loss function. Errors are measured for 32 trajectories in the test dataset. Fine-tuning uses 100 iterations of Adam optimizer in line \ref{alg:line-optimizer}.
  }
  \vspace*{-0.2em}
  \label{table:example-McWilliams}
  \begin{center}
    \resizebox{0.9\textwidth}{!}{%
      \begin{tabular}{rccccc}\toprule
        & \multicolumn{2}{c}{Evaluation after training}
        & \multicolumn{2}{c}{After fine-tuning}
        \\
        \cmidrule(lr){2-3}
        \cmidrule(lr){4-5}
        & $\|{\varepsilon}\|_{L^2}$ & $\|R\|_{-1,n}$
        & $\|{\varepsilon}\|_{L^2}$ & $\|R\|_{-1,n}$
        \\
        \midrule
        FNO3d 10 ep, $L^2$ train, no FT  & \num{4.32e-02} & \num{6.09e-02}  &  N/A & N/A
        \\
        FNO3d 100 ep, $L^2$ train, no FT   &  \num{3.87e-02} &  \num{2.44e-02}  &  N/A & N/A
        \\
        ST-FNO3d 10 ep, $H^{-1}$ train \& $H^{-1}$ FT   &  \num{6.54e-02} & \num{6.19e-02}  & \num{5.71e-03} & \num{2.24e-05}
        \\
        ST-FNO3d 10 ep, $L^2$ train \& FT   & \num{3.69e-02}  &  \num{2.35e-02}  & \num{1.79e-03}  &  \num{9.55e-05}
        \\
        ST-FNO3d 10 ep, $L^2$ train \& $H^{-1}$ FT    & -- &  --  &  \num{2.88e-04}  &  \num{4.02e-06}
        \\
        \bottomrule
      \end{tabular}
    }
  \end{center}
\end{table}

\newpage
\section*{Acknowledgments}
The research of Brarda and Xi is supported by the National Science Foundation award DMS-2208412.
The work of Li was performed under the auspices of
the U.S. Department of Energy by Lawrence Livermore National Laboratory under Contract DEAC52-07NA27344 (LLNL-CONF-872321) and was supported by the LLNL-LDRD program under Project No. 24ERD033. Cao is supported in part by the National Science Foundation award DMS-2309778.

\newpage
\appendix

\section{Notations}
\label{appendix:notations}

\begin{table}[htbp]
  \caption{Notations used in an approximate chronological order and their meaning in this work.}
  \label{table:notations}
  \begin{center}
    \resizebox{0.99\textwidth}{!}{%
      \begin{tabular}{cl}
        \toprule
        \textbf{Notation} &
        \multicolumn{1}{c}{\textbf{Meaning}}
        \\
        \midrule
        $\Omega$ & $\Omega = (0, 2\pi)^2$ or $(0,1)^2$ the modeling domain in $\RR^2$
        \\
        $\V, \H$ & Hilbert spaces defined on the domain $\Omega$ above,
        $f\in \H: \Omega \to \RR$
        \\
        $\mathcal{X}, \mathcal{Y}$ & product spaces $\sprod_{j \in \Lambda} \H $
        \\
        $||$ & concatenation, for $f_j\in \H$, $||_{j\in \Lambda} f_j \in \sprod_{j \in \Lambda} \H $
        \\
        $\nu$ & viscosity, the inverse of the Reynolds number, strength of diffusion
        \\
        $\TT^2$ & $\TT^2 := S^1\times S^1$ where $S^1$ is homeomorphic to $[0,1)$ with the point 1 being 0.
        \\
        $H^1(\TT^2)$ & $\{u\in H^1(\Omega): \, u \text{ satisfies the periodical boundary condition}\}$
        \\
        $\bm{H}^1(\TT^2)$ &   $\bm{H}^1(\TT^2) := H^1(\TT^2)\times H^1(\TT^2)$
        \\
        $\Rot \psi$ & $\Rot \psi := (\partial_y \psi, -\partial_x \psi)$ is the rotated gradient vector
        \\
        $L^p(\T; \V)$ &  the Bochner spaces containing $\{u: \T\!\to\! \V \, \big| \int_{\T}\|u(t, \cdot)\|_{\V}^{p}\,\mathrm {d}t<+\infty\}$, $p\in [1, \infty)$
        \\
        $L^{\infty }(\T;\V)$ & Bochner space containing
        $\{ u: \T\!\to\!  \H \, \big| \operatorname{ess\,sup}_{t\in \T}\|u(t, \cdot)\|_{\V}<+\infty \}$
        \\
        $\bm{u}$ & the true weak solution to the NSE  $\bm{u}(t,\cdot)\in \V$ for any $t$
        \\
        $\bm{u}_{\S}^{(l)}$ & the ground truth data generated at $t_l$ by the numerical solver in $\S\subset \V$
        \\
        $\bm{u}_{\N}^{(m)}$ & neural operator evaluations at $t_m$ that can be embedded in $\S\subset \V$
        \\
        $(u, v)$ or $(\bm{u},\bm{v})$ & the $L^2$-inner product on $\H$, $(u, v):= \int_{\Omega} uv\,\dd\bm{x}$
        \\
        $\|u\|_{s}$ &  the $H^s$-norm of $u$, computed by \eqref{eq:norm-fourier}, $\|u\|:= \|u\|_0$ falls back to $L^2$-norm
        \\
        $|u|_{s}$ &  the $H^s$-seminorm of $u$, computed by \eqref{eq:norm-fourier}, is a norm on $H^s(\TT^2)/\RR$
        \\
        $\lesssim$ & $a \lesssim b$ means that $\exists c$ independent of asymptotics if any such that $a \leq c b$
        \\
        $\simeq $ & $a\simeq b \Leftrightarrow a \lesssim b$ and $b\lesssim a$
        \\
        $\V'$ & dual space of $\V$, contains all continuous functionals $f$ such that $|f(v)| \lesssim \|v\|_{\V}$
        \\
        $\langle f, v \rangle$ & the pairing between $v\in\V$ and $f\in\V'$, can be identified as $(f, v)$ if $f\in \H$
        \\
        $\V \hookrightarrow \H$ & $\V$ is continuously embedded in $\H$ such that $\|u\|_{\H}\lesssim \|u\|_{\V}$
        \\
        $\V \Subset\H \Subset \V'$ & compact embeddings by Poincar\'{e} inequality, $\H = L^2(\TT^2)$, $\V = H^1(\TT^2)$
        \\
        $\bm{A}:\bm{B}$ & $\bm{A}:\bm{B} = \sum_{1\leq i,j\leq 2} A_{ij}B_{ij} $ for $\bm{A}, \bm{B}\in \RR^{2\times 2}$
        \\
        $\bm{a}\otimes \bm{b}$ & $\bm{a} \bm{b}^{\top}$ if both are viewed as column vectors
        \\
        \bottomrule
      \end{tabular}
    }
  \end{center}
\end{table}

\section{Detailed Literature Review and Motivations}
\label{appendix:literature}

\paragraph{Interplay of deep learning and PDEs.} PDE solvers are function learners to represent PDE solutions using neural networks \citep{han2018solving, raissi2019physics,2024ChenKoohyGpt}.
PDE discovery encompasses all the techniques dedicated to identifying and optimizing PDE coefficients from data \citep{brunton2016discovering, champion2019data}. Recently, hybrid solver approach has been explored in \citet{greenfeld2019learning,kochkov2021machine,2023HuangChangHeEtAl,2023TaghibakhshiNytkoZamanEtAl,2021TaghibakhshiMacLachlanOlsonEtAlOptimization,2024HuJinHybrid,2022HuangLiXiSISC}. Reinforcement learning has been applied in the field of mesh generation to build more efficient solving pipelines~\citep{2023YangDzanicPetersenEtAlReinforcement,2024GilletteKeithPetridesLearning}. For neural operators, remarkable outcomes are achieved in diverse applications, such as weather forecasting \citep{keisler2022forecasting} and turbulent fluids simulations \citep{2023ShuLiEtAlPhysics,li2023long,lienen2023generative,2025LiPatilOgokeEtAlJCP}, the methods based on neural operators have significantly influenced the progress of the interplay between PDE and deep learning.
This success was a natural outcome of several improvements brought to the field, for example, fast solution evaluations, a feature very appealing in many engineering applications \citep{zheng2023fast}; and the ability to provide mesh-free and resolution-independent solvers in cases of irregular domains \citep{2023HaoEtAlGNOT,li2024geometry}.

\paragraph{Neural operators.} Looking at neural operator architectures, we can identify two different approaches. The first one creates ``basis'' (or frames) through nonlinear approximators and aggregates them linearly. Developed architectures that belong to this category include Deep Operator Network (DeepONet) \citep{lu2021learning,goswami2023physics,wang2021learning}, wherein aggregation occurs via linear combination. Similarly, Fourier Neural Operator (FNO) \citep{li2020fourier}, and some of its variants \citep{gupta2021multiwaveletbased,rahman2023uno,tran2023factorized,li2024geometry,2022WenLiAzizzadenesheliEtAl}, aggregate different latent representations through convolution with a learnable kernel in the frequency domain.
Additional examples with a linear aggregation with a U-Net meta-architecture can be found in \citet{raonic2024convolutional,2024HeLiuXu}. \citet{bartolucci2024representation} further explores the error analyses for the linear aggregations through the lens of the frame(let) theory.
The second approach to designing neural operator architectures aims to obtain the ``basis'' through linear projections of the latent representations.
In this scenario, the aggregation is nonlinear, for example, using a signal-dependent kernel integral. In this category, the most notable example is the scaled dot-product attention operator in Transformer \citep{2017VaswaniEtAlAttention}, which builds an instance-dependent kernel.
In the context of operator learning applications, Transformer-based neural operators have been studied in \citet{2021CaoNeurIPS,kissas2022learning,li2023transformer,2023HaoEtAlGNOT,2023WuHuLuoEtAl,li2024scalable,fonseca2023continuous,li2024cafa,calvello2024continuum,2024LiuXuCaoEtAlJCP,2024XiaoHaoLinEtAl,2024HaoSuLiuEtAlDPOT,2024WuLuoWangEtAl,2025ShihPeyvanZhangEtAlCMAME}. It is also shown in \citet{2023LanthalerMolinaroHadornEtAl} that nonlinear aggregations outperform its linear counterparts in learning solutions with less regularity. More recently, for spatiotemporal problems, the state-space based neural operators explore an expansion to the natural operator exponential representation of the solution map, see e.g., \citet{2024ZhengLiweiNoXuEtAlNeurIPS,2024ChengHuangZhangEtAl,2024HuDaryakenariShenEtAl,2024RuizMarwahGuEtAl}.

\paragraph{Numerical methods for NSE.} The NSE falls into the category of a stiff PDE (system) $\partial_t u = Lu + N(u) + f$, where $f$ is the external forcing, $L$ and $N(\cdot)$ are a linear and a nonlinear operator, respectively. Trefethen noted back in~\citet{2005KassamTrefethenSISC} on the difficulty to design a time-stepping scheme as $N(\cdot)$ and $L$ have to be treated differently. There are a long history of numerical methods for NSE we draw inspiration from. Petrov-Galerkin methods have been developed for NSE in \citet{BoffiBrezziFortin,GiraultRaviart}. Nonlinear Galerkin method \citep{1990MarionTemam} inspires us to prove Theorem \ref{theorem:convergence}. \citet{1968Chorin} uses a clever trick to impose the divergence free condition without constructing a divergence-conforming finite element subspace.
\citet{1994ShenEfficient} designed various Galerkin methods in the space of orthogonal polynomials. Pioneered by Chorin, Shen, and E, projection methods are among the most popular schemes to solve NSE (see e.g., \citet{1995ELiuProjection} for a summary), also serves as an inspiration to add the Helmholtz layer. \citet{1992BernardiGiraultMadayMixed} proposed a mixed discretization for the vorticity-streamfunction formulation. 


\paragraph{Consistency-stability trade-offs.}
In view of the Lax equivalence principle (``consistency'' $+$ ``stability'' $\implies$ ``convergence'' \citep[Theorem 8]{2002Lax}), the improvement of the stability of the method has a trade-off with a method's consistency at the cost of the approximation capacity. Numerical methods for NSE is the epitome for such trade-off. For example, high-order explicit time stepping schemes offer better local truncation error estimates near boundary layers of the flow~\citep{1992LeleJcp}, yet the lack of stability is more severe and needs higher-order temporal smoothing.
On the other hand, implicit schemes can be unconditionally stable for stiff or highly transient NSE with relatively large time steps $\mathcal{O}(1)$.
The stability in implicit schemes becomes much less stringent on the time step, as no CFL condition is required.
However, the solution at the next time step requires solving a linear or nonlinear system. Thus, computationally implicit schemes are usually an order of magnitude more expensive compared to explicit schemes. The implicit-explicit (IMEX) stable solver chosen in the fine-tuning postprocessing is from \cite{2002WangLiuNumerMath,2012WangNumerMath}.

\paragraph{De-aliasing filter sacrifices consistency for stability.}

One famous example of the consistency-stability trade-off is the $3/2$-rule (also known as $2/3$-dealiasing filter) for the nonlinear convective term~\citep{1971OrszagAlias,1971PattersonOrszag,2009HouActaNumerica,1977GottliebOrszag} for pseudo-spectral methods~\citep{1971Orszag,1972Orszag}. The highest $1/3$ modes, the inclusion of which contributes to better approximation capacity, are filtered out to ensure long-term stability.
Compromises such as the CFL condition and the $3/2$-rule \emph{must} be made for traditional numerical schemes to be stable, keeping the balance between stability and accuracy. These constraints apply to traditional numerical methods because any solver has to march a consecutive multitude of time steps, which makes the error propagation operator's norm a product of many.
Numerical results suggest that for pseudo-spectral spatial discretization with no higher-order Fourier smoothing temporally, the dealiasing filter is indispensable \citep{1987Tadmor}, as the time marching may experience numerical instability \citep{1979KreissOliger,1994GoodmanHouTadmor} without it. This is due to the nonlinear interaction in the convective term, which is caused by the amplification of high-frequency ``aliasing'' errors when the underlying solution lacks sufficient smoothness.
In this study, the hybrid approach we adopted combines the strengths of NOs and traditional numerical solvers. There is no time marching consecutively for a multitude of time steps, which renders the method free of the stability constraints such as the CFL condition and the $3/2$-rule.

\paragraph{Why and why not functional-type a posteriori error estimation?}
\label{paragraph:functional-norm}
For the sake of presentation, we make some handy assumptions: (1) $A:\V\to \V'$ denotes a linear operator on the Gelfand triple;
(2) $u_{\S}$ is obtained through Galerkin methods \citep{Evans}, which is a projection onto a finite-dimensional approximation subspace of $\S \subset \V$; and (3) an evaluation of $u_{\N}$ can be continuously embedded into $\S$.
Then, we have the following enlightening Pythagorean-type identity for the true solution $u$ satisfying $Au = f$
\begin{equation}
  \label{eq:galerkin-orthogonality}
  \|u - u_{\S}\|_{a}^2 + \|u_{\S} - u_{\N}\|_{a}^2 = \|u - u_{\N}\|_{a}^2 \quad \text{ since } (u - u_{\S})\perp_a (u_{\S} - u_{\N})\in \V.
\end{equation}
Through the Riesz representation theorem, $u_{\S}$ is solved through $(u - u_{\S}, v)_a := \langle A(u) - A(u_{\S}), v\rangle  = 0$, for any $v\in \S$. This bilinear form $(\cdot, \cdot)_a$ induces a (semi)norm $\|\cdot\|_{a}$ inheriting the topology from $\V$.
Given this orthogonality, minimizing the difference between $u_{\N}$ and $u_{\S}$ becomes fruitless if $\|u - u_{\S}\|_a$ is relatively big in the first place, and unnecessary computational resources may have been spent to get closer to $u_{\S}$. Rather, \eqref{eq:galerkin-orthogonality} indicates that it is more efficient if one can design a method to reduce the error of $\|u -u_{\N} \|_a$ directly, while circumventing the fact that $u$ is not accessible.

Speaking of the \emph{a posteriori} error estimation in traditional PDE discretization, part of the goal is to help the adaptive mesh refinement to get a better local basis. In this regard, the global error functional in negative Sobolev spaces must be approximated using localized $L^2$ residuals to indicate where the mesh needs to be refined.
There are various compromises for this $H^{-1}$-to-$L^2$ representation to happen that renders the estimate inaccurate, such as discrete Poincar\'{e} constant \citep{2012VeeserVerfuerthIMA} or inverse inequalities \citep{1999CarstensenFunkenSISC,2009VeeserVerfuerthSINUM}, see also \citet[\S 1.6.2]{2013Verfuerth}. Functional-type \emph{a posteriori} error estimation \citep{2008Repin} consider the error as a functional, which is equivalent to error to bilinear form-associated norm as follows
\begin{equation*}
  R(u_{\S}) \in \V',\text{ and } \langle R(u_{\S}), v  \rangle = (u- u_{\S}, v)_a,
\end{equation*}
where the weak form of the PDE is
\begin{equation*}
  (u, v)_a = f(v) \;\forall v\in \V \text{ and } (u_{\S}, v)_a = f(v) \; \forall v\in \S\subset \V.
\end{equation*}
The common approach is using the help from extra ``flux'' or ``stress'' dual variables \citep[\S 6.4]{2008Repin}, see also \citet{repin2000posteriori} for how to get an accurate representation of the error functional in $L^2$. For example, for the Stokes flow, which can be viewed as a steady-state limit of viscous fluid, such that $\partial_t \bm{u} = 0$ and no nonlinear convection in \eqref{eq:velocity-pressure}, $Re\sim \mathcal{O}(1)$, \citet[\S 6.2]{2008Repin} estimates error of an $\bm{H}(\mathrm{div})$--$L^2$ mixed discretization as follows
\begin{equation*}
  \nu\|\nabla(\bm{u} - \bm{u}_{\S})\|
  \leq \|\bm{\sigma} + q\bm{I} -\nu\nabla \bm{u}_{\S} \| + C_{P}\|\nabla\cdot \bm{\sigma} + \bm{f} \|,
\end{equation*}
where $C_P$ is the Poincar\'{e} constant of the compact embedding and $(\bm{\sigma}, q)$ is a reconstruction field pair. However, the drawback of this approach is that an expensive global minimization problem needs to be solved, e.g., for $(\bm{\sigma}, q)$ above. Another main reason to introduce extra field variables is that, for finite element methods, the basis functions are local and do not have a globally continuous derivative, in that $\Delta \bm{u}_{\S}$ yields singular distributions, whose proper norm is $H^{-1}$-norm and cannot be evaluated by summing up element-wise $L^2$-norms. In computation, it has to be replace by $\nabla\cdot\bm{\sigma}$ where $\bm{\sigma} \in \bm{H}(\mathrm{div})$, and is constructed to be closer to the true solution's gradient $\nabla\bm{u}$ than $\nabla\bm{u}_{\S}$.
Meanwhile, for systems like NSE, the error estimation in Galerkin methods for NSE is further complicated by its saddle point nature from the divergence-free constraint, in that consistency has to be tweaked to ensure the Ladyzhenskaya-Babu\v{s}ka-Brezzi stability condition \citep[Chapter III \S 1 Sec 1.2]{GiraultRaviart}.
Recently, \citet{2024FanaskovEtAlFunctional} considers NN as a function learner to represent solutions of linear convection-diffusion equations, yet still falls into the traditional error estimation framework in that an extra flux variable is learned by NN, and the error if of typical accuracy of NN function learners $\mathcal{O}(10^{-3})$. In contrast, in our study, the need of extra ``stress'' or ``flux'' variables to build the residual functional is circumvented as well.

\paragraph{Error correction for Bayesian inverse problems.}
The \emph{a posteriori} error-correction approaches through sampling for the output of surrogate NNs in Bayesian inverse problems~\citep{yan2019adaptive}. More recently, in \citet{2023CaoEtAlResidual}, the error equation in an inverse problem is approximated by solving for a Galerkin projection of a linearized error equation, while in this paper, we leverage the Gelfand triple directly to compute a nonlinear Galerkin projection directly by minimizing the spectral norm.

\paragraph{Sobolev norms in operator learning}
In the context of function learning problems, \citet{2024DuChalapathiKrishnapriyan} applied an $L^2$-spectral loss for Physics-informed Neural Networks (PINN)~\citep{raissi2019physics} and found superior results over mesh-weighted spatial losses. \citet{2024DuChalapathiKrishnapriyan} realized the PDE residual as a functional on $L^2$ and exploited the Parseval identity on this space, which computes
$\sup_{v\in L^2} \langle R, v \rangle/ \|v\|_{L^2}$.
We note that this is \textbf{not} the natural space to measure the PDE residual according to the Galerkin formulation of NSE. Using the Poisson equation $-\Delta u = f$ as a simpler example, if $u\in H^1$, then the residual $f + \Delta u\in H^{-1}$. $-\Delta$ is a bounded operator from $H^1$ to $H^{-1}$ which guarantees the stability estimate $\|u\|_{1}\leq c\|f\|_{-1}$, while $-\Delta$ is \textbf{unbounded} from $L^2$ to $L^2$.
The dual space of $H^1(\mathbb{T}^2)$ is the natural space to consider the residual functional. The reason is that the solution to NSE belongs to $H^1(\mathbb{T}^2)$ at a given snapshot. The dual space of $H^1(\mathbb{T}^2)$ happens to be $H^{-1}$ using the Gelfand triple. In this paper, we compute $\sup_{v\in H^1} \langle R, v \rangle/\|v\|_{H^1}$. However, this $H^{-1}$-norm is a non-localizable functional norm in the spatial domain, and traditional wisdom (FEM or FVM) has to circumvent the direct evaluation of it by localizing the residual, which introduced inaccurate compromises in the residual-based error estimation.

\paragraph{A posteriori error estimation for flow problems}
There is a vast amount of literature for the \emph{a posteriori} error estimation for the viscous flow ($1/\text{Re}>0$) problems under the Hilbertian framework for traditional numerical methods. Our methods draw inspiration from these pioneers and try to address the drawbacks. The most popular \emph{a posteriori} error estimator for stationary Stokes problem is from \citet{1989VerfuerthNumerMath}, and it is of residual-type by computing a mesh-weighted $L^2$-norm elementwise, and the singular distribution $\Delta \bm{u}_{\S}$ is represented by the magnitude of flux jumps across the facets in this mesh. In \citet{1991BankWelfertposteriori}, a more accurate \emph{a posteriori} estimation technique is invented for Stokes problem in which a local problem is solved to represent the residual functional on a collection of neighboring elements. $L^2$ residual-type error estimation for stationary NSE is considered in \citet{1994OdenWuAinsworthposteriori}. To our best knowledge, no functional-type \emph{a posteriori} error estimation has been applied to solve the transient NSE, due to its in-efficiency for traditional finite element or finite volume methods. As at every time step, a global nonlinear problem has to be solved if one ought to evaluate the functional accurately using finite element local basis functions, whose computational cost is an order of magnitude higher than implicit Euler methods.

\paragraph{Hybrid methods for turbulent flow predictions.} There are quite a few approaches that combine NN-based learners with traditional time-marching solvers. Notable examples include:  solution from Direct Numerical Simulation (DNS) corrected using an NN \citep{2020UmBrandEtAlSolver};
interpolation to achieve spatial super-resolution using solutions produced by traditional numerical solvers marching on coarser grids \citep{kochkov2021machine}.
\citet{2023SunYangYoo} uses orthogonal polynomial features (HiPPO from \citet{2020GuEtAlHippo}) to achieve temporal super-resolution in addition to the spatial one. \citet{2023McGreivyHakim} explores how to incorporate energy-preserving schemes such as upwinding into the ML solvers through traditional numerical methods. \citet{dresdner2023learning} explores the possibility of using neural networks to correct the result of the traditional time-marching solvers.
\citet{2024QiSunGabor} identifies important frequency bands in FNO parametrization and improves FNO's rolling-out performance in NSE benchmarks.
\citet{2024WuWangWangEtAl} uses graph-based neural ODE to improve the out-of-distribution performance in fluid dynamics tasks.

\section{Data Generation}
\label{appendix:data}

\subsection{Vorticity--Streamfunction Formulation}
\label{appendix:VSFormulation}
Here we derive the vorticity-streamfunction formulation for the convenience of the readers. Consider the standard NSE in 3D, $\bm{u}$'s $z$-component is 0, and $\bm{u} = \bm{u}(t, x, y)$ has only planar dependence
\begin{equation}
  \left\{
    \begin{aligned}
      \dfrac{\partial\bm{u}}{\partial t} + (\bm{u}\cdot \nabla)\bm{u} &= -\dfrac{1}{\rho}\nabla p + \nu \Delta \bm{u} + \bm{f}.\\
      \hfill\nabla\cdot\bm{u} & =0
    \end{aligned}
    \right.
  \end{equation}
Taking $\curl (\cdot)$ on both sides, and we assume that the solution is sufficiently regular that the spatial and temporal derivative can interchange, as well as curl and the Laplacian can interchange, one gets the following equation by letting $\bm{\omega} = \nabla \times \bm{u}$
\begin{equation}\frac{\partial\bm{\omega}}{\partial t} + \nabla \times\nabla\left(\frac{\bm{u}^2}{2}\right) + \bm{u}\cdot\nabla\bm{\omega} - \bm{\omega}\cdot\nabla\bm{u}  = -\nabla \times\left( \frac{1}{\rho}\nabla p\right) + \nu \Delta \bm{\omega} + \curl \bm{f}.
\end{equation}
Next, one can have $\bm{\omega} \cdot\nabla \bm{u} = 0$, thus the equation becomes
\begin{equation}
  \frac{\partial\bm{\omega}}{\partial t} + \bm{u}\cdot\nabla\bm{\omega} = \bm{\omega}\cdot\nabla\bm{u} + \nu \Delta \bm{\omega} + \curl \bm{f}.
\end{equation}
We need to introduce a streamfunction $\psi$ to recover the velocity $\bm{u} = \nabla\!\times\! (0, 0, \psi)$. The main simplification comes from the assumption of dependence on $x$- and $y$-variable only. Thus, in 2D this becomes $\bm{u} = \Rot \psi$ (a $\pi/2$ rotation of the 2D gradient $\nabla \psi$). Therefore,
\[
  \bm{u} = (u_1, u_2, 0)\implies u_1 = \partial_y \psi, u_2 = -\partial_x \psi
\]
and the vector vorticity can be equivalently represented by a scalar vorticity $\omega$
\[
  \bm{\omega} = \curl \bm{u} = (0, 0, \operatorname{curl} (u_1, u_2)) = (0, 0, \partial_x u_2 - \partial_y u_1) =: (0, 0, \omega)
\]

The equation becomes a nonlinear system for $ \omega$ and $\psi$:
\begin{empheq}[left = \empheqlbrace]{align}
  \partial_t\omega + \Rot \psi\cdot\nabla{\omega} - \nu \Delta {\omega} &= \curl\bm{f},
  \label{eq:vorticity}
  \\
  \omega + \Delta \psi &= 0.
  \label{eq:streamfunction}
\end{empheq}

\subsection{Taylor-Green Vortex}
\label{appendix:taylor-green}

Taylor-Green vortex serves as one of the most well-known benchmarks for NSE numerical methods, as its flow type experiences from laminar, to transitional, and finally evolving into turbulent regime. We consider the trajectory before the breakdown phase. We opt to use a doubly periodic solution on $[0, 2\pi)^2$ such that the inflow/outflow occurs on the ``boundary'' (see Figure \ref{fig:taylor-green}). The exact solution is given by:
\begin{align*}
  \bm{u}(t, x, y) = e^{-2\kappa ^2\nu t}
  \begin{pmatrix}
    \sin(\kappa x)\cos(\kappa y) \\
    -\cos(\kappa x)\sin(\kappa y)
  \end{pmatrix}
  \text{ or }
  e^{-2\kappa ^2\nu t}
  \begin{pmatrix}
    -\cos(\kappa x)\sin(\kappa y) \\
    \sin(\kappa x)\cos(\kappa y),
  \end{pmatrix}
\end{align*}
where $\nu = 1/\text{Re}$ is chosen as $10^{-3}$.
For a sample trajectory with $\kappa=1$, please refer to Figure \ref{fig:taylor-green}.
We apply the pseudo-spectral method with RK2 time stepping for the convection term and Crank-Nicolson for the diffusion term. The dataset has 11 trajectories with $\kappa$ ranging from $1$ to $11$, among which $\kappa = 11$ is chosen as the test trajectory.

\begin{figure}
  \centering
  \includegraphics[width=0.8\textwidth]{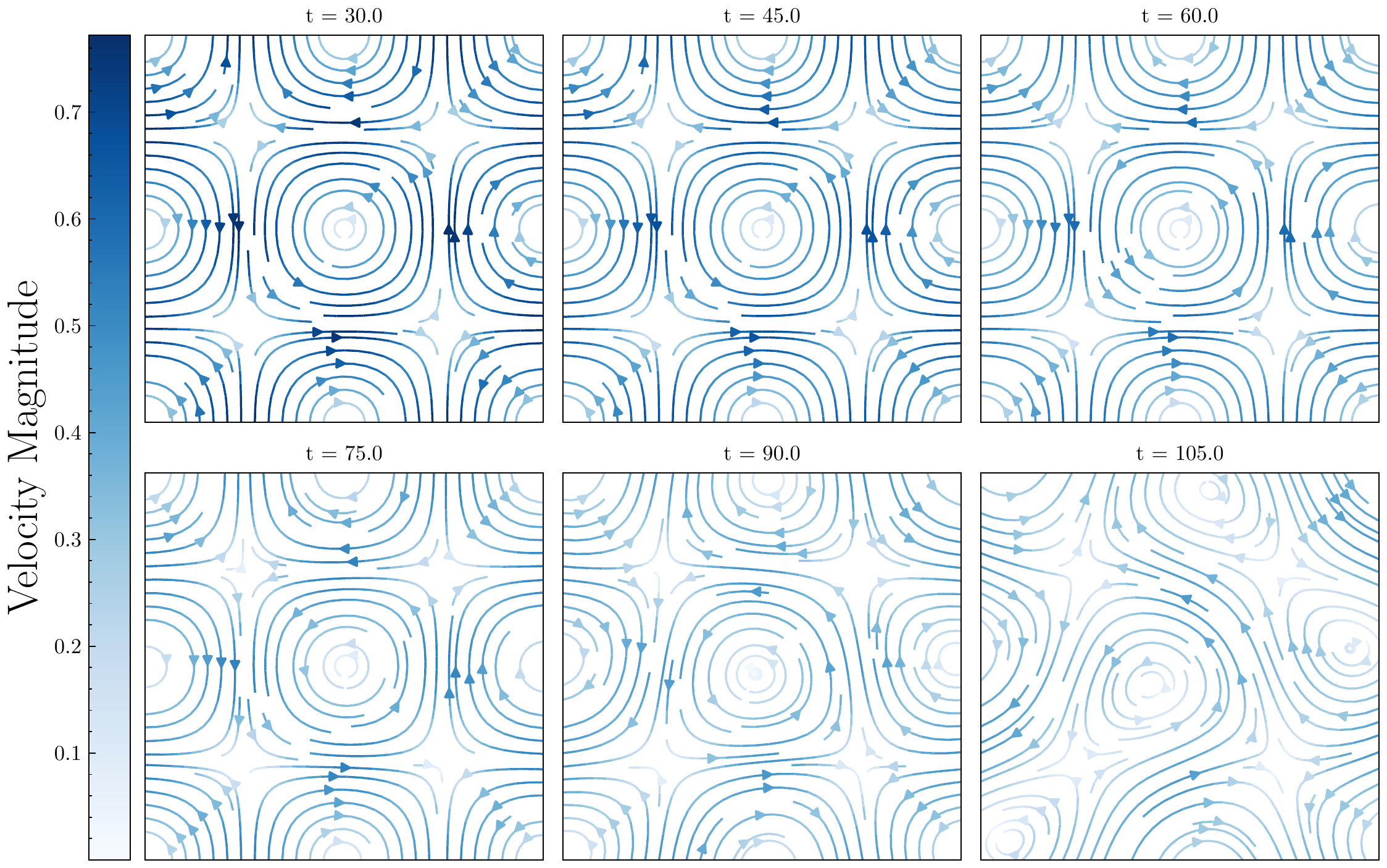}
  \caption{Ground truth streamlines for Taylor-Green vortex example.}
  \label{fig:taylor-green}
\end{figure}

\subsection{2D Isotropic Turbulence}

The example featured in the original FNO paper \citep{li2020fourier} can be viewed as a special case of the isotropic turbulence with a strongly regularized initial condition, no drag, and a force with low wavenumber. This is our example \ref{list:example-fno3d}. The example \ref{list:example-McWilliams} has weakly-regularized initial condition, small drag, and no external force. The initial condition spectra are chosen such that, after the warmup time, the spectra are the comparable with the Kolmogorov flow featured in \citet{kochkov2021machine}. The similarity is in the sense of the energy cascade formed in the Fourier domain (see Figure \ref{fig:enstrophy-spectra} and \ref{fig:enstrophy-spectra-single}). The training data used in both \ref{list:example-fno3d} and \ref{list:example-McWilliams} are generated using a pseudo-spectral method with a de-aliasing filter using an RK4 marching for the convection term and implicit Euler for the diffusion term. The Reynolds number used in this example is $1000$. The time steps for both are $\Delta t = 10^{-3}$ on a $256^2$ grid, then downsampled to a $64^2$ grid. The warmup time is $4.5$. The $\delta t$ for the ST-FNO training and evaluations are $\delta t = 55 \Delta t$. The testing data are generated on a $512^2$ grid with a $5\times 10^{-4}$ time step then downsampled to $256^2$.
There are 1024 trajectories in the training data. The input of model has 10 snapshots from $t = 4.5$ to $t=4.5 + 9\,\delta t$. The output ground truth has the 10 subsequent snapshots from $t=4.5+10\,\delta t$ to $t=4.5+19\,\delta t$. There are 32 trajectories in the testing data. Each of the testing data's trajectory runs from the same time interval, and has 40 snapshots in the input and 40 snapshots in the output. The initial energy distribution for the training and testing data trajectories are draw from the same energy distribution, see \eqref{eq:energy-density-initial}. For additional experiments using out-of-distribution data with different initial energy distribution and evaluation using trajectories with different Reynolds number than the training data, we refer the reader to Section \ref{sec:examples-additional}.

\begin{figure}[h]
  \begin{center}
    \begin{subfigure}[b]{0.32\linewidth}
      \centering
      \includegraphics[height=0.75\linewidth]{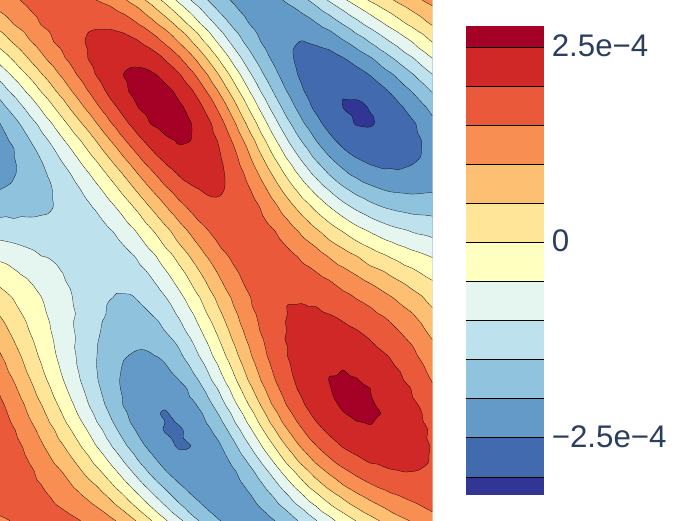}
      \caption{\label{fig:residual-gt}}
    \end{subfigure}%
    \;
    \begin{subfigure}[b]{0.32\linewidth}
      \centering
      \includegraphics[height=0.75\linewidth]{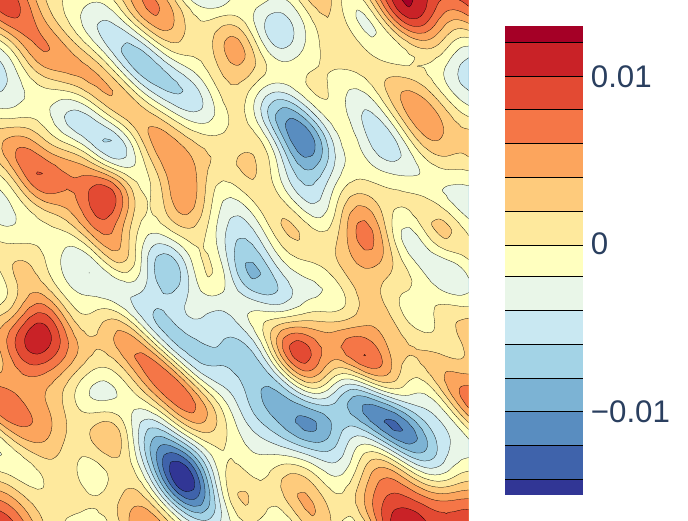}
      \caption{\label{fig:residual-fno}}
    \end{subfigure}%
    \;
    \begin{subfigure}[b]{0.32\linewidth}
      \centering
      \includegraphics[height=0.75\linewidth]{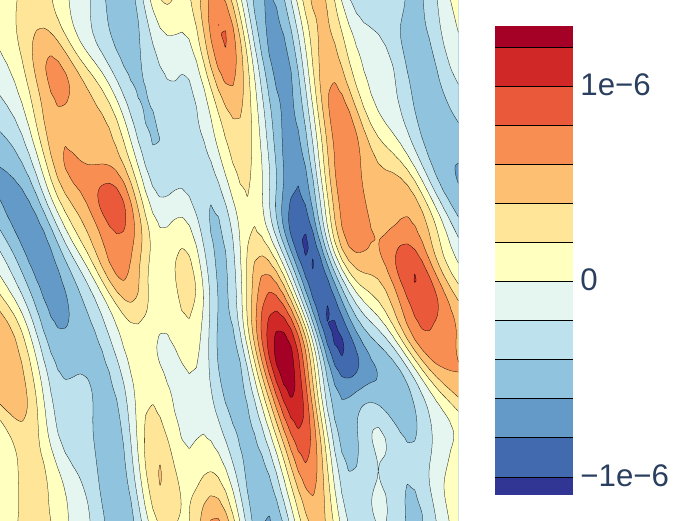}
      \caption{\label{fig:residual-ft}}
    \end{subfigure}%
  \end{center}
  \caption{Contours plots of pointwise values of residuals for Example \ref{list:example-fno3d}. \subref{fig:residual-gt}: the residual of the ground truth;  \subref{fig:residual-fno}: residual of ST-FNO, where the time derivative in the residual is using the ground truth's; \subref{fig:residual-ft}: the residual after fine-tuning for 10 ADAM iterations where the time derivative is computed using an extra-fine-step numerical solver.}
  \label{fig:example-2-fno}
\end{figure}

\begin{figure}[h]
  \begin{center}
    \begin{subfigure}[b]{0.45\linewidth}
      \centering
      \includegraphics[width=\linewidth]{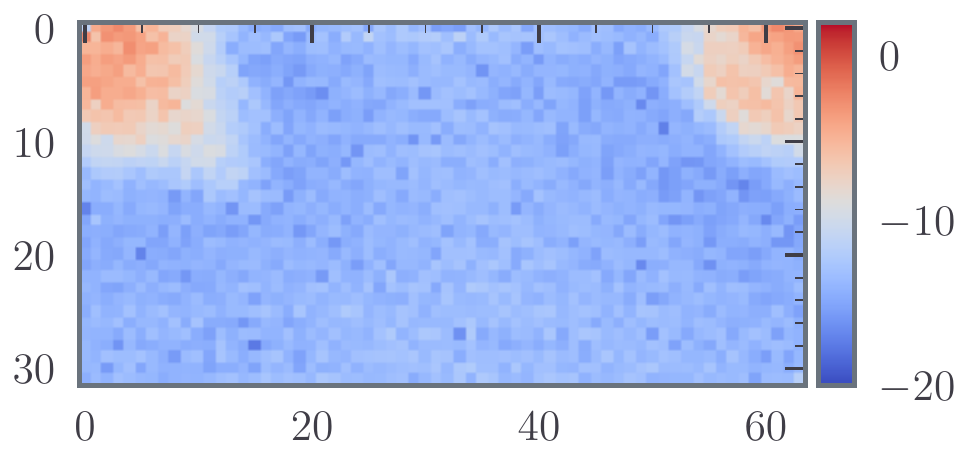}
      \caption{\label{fig:res-sfno-freq}}
    \end{subfigure}%
    \;
    \begin{subfigure}[b]{0.45\linewidth}
      \centering
      \includegraphics[width=\linewidth]{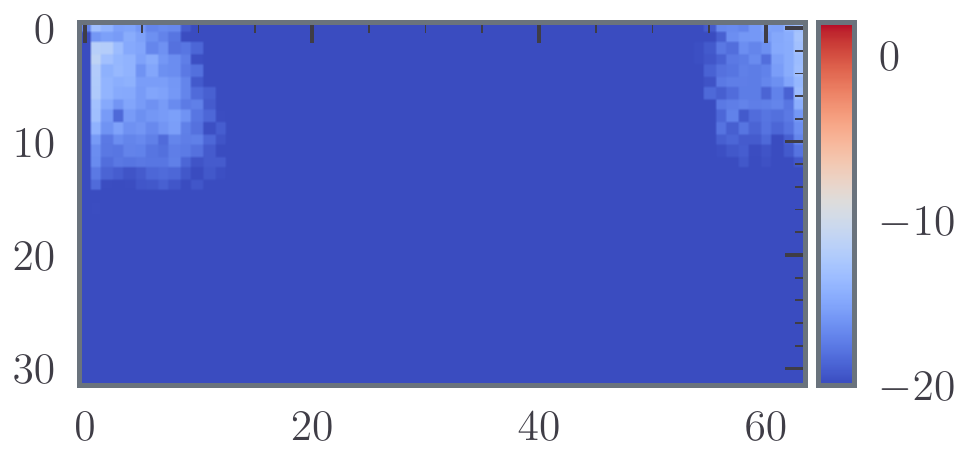}
      \caption{\label{fig:res-ft-freq}}
    \end{subfigure}%
  \end{center}
  \caption{Pointwise values of residual of ST-FNO in the frequency domain in $\ln$ scale of Example \ref{list:example-fno3d}. \subref{fig:res-sfno-freq}: the residual of ST-FNO before correction, time derivative computed using the ground truth;  \subref{fig:res-ft-freq}: the residual of ST-FNO after fine-tuning for 10 ADAM iterations where the time derivative is computed using an extra-fine-step numerical solver.}
  \label{fig:example-2-fno-freq}
\end{figure}

\section{Experiments}

\subsection{Training and evaluation}
\label{appendix:training}
The training uses \texttt{torch.optim.OneCycleLR} \citep{2019SmithTopin} learning rate strategy with a $20\%$ warm-up phase. AdamW is the optimizer with no extra regularization. The learning rate starts and ends with $10^{-3}\cdot lr_{\max}$. The $lr_{\max}=10^{-3}$. Despite using the negative Sobolev norm is quite efficient in fine-tuning, using it in training not be the most efficient in minimizing the $L^2$-norm due to the optimization being non-convex, we observe that all norms are equivalent ``bad'' due to nonlinearity of NOs.
The result demonstrated is obtained from fixing the random number generator seed to be $1127825$.
All models are trained on a single RTX A4500 or RTX A6000, the inference is done with a power limiter on the GPU to ensure fairness. The codes to replicate the experiments are open-source and publicly available at \url{https://github.com/scaomath/torch-cfd}.

\subsection{Models}
\label{appendix:arch-details}

\paragraph{Helmholtz layer for the velocity-pressure formulation.}
For the V-P formulation in a simplified connected convex or periodic domain, it is known that an exact divergence-free subspace $\W\subset \V$ for velocity means that the pressure field is not needed. The reason is that in the weak formulation, the pressure is a Lagrange multiplier to impose the divergence-free condition \citep[Chapter III 1 Section 1.1]{GiraultRaviart}.
Inspired by the postprocessing to eliminate $\nabla p$ \citep[eq. (15)]{1987KuTaylorHirsh} together with the neural Clifford layers~\citep{2023BrandstetterEtAlClifford}, we add a Helmholtz layer $S$ after each application of $\sigma_j\circ(W_j + K_j)$. $S$ performs a discrete Helmholtz decomposition \citep[Chapter 1 \S 1 Section 3.1]{GiraultRaviart} in the frequency domain to project the latent fields to be solenoidal.

\paragraph{Difference with Fourier Neural Operator 3d}
\label{paragraph:sfno-fno3d}
In view of \eqref{eq:kernel-semi-discrete}, the FFTs in FNO3d transforms are continuous integrals in the spatial dimensions yet a discrete sum in the temporal dimension due to the channel treatment. In the original FNO3d architecture, despite that no explicit restriction imposed on the output time steps, the $d_{\text{out}}$ cannot be trivially changed as the data are prepared by applying a pointwise Gaussian normalizer that depends on $d_{\text{out}}$.
Denote the lifting operator (channel expansion) in FNO3d by $P:\{\mathbf{a}_h\} \oplus \{\mathbf{p}_h\} \to \RR^{d_v \times d_{\text{out}} \times n\times n}$. Before the application of $P$ in FNO3d, $\mathbf{a}_h$ is artificially repeated $d_{\text{out}}$ times, then $P$ is in a space of linear operators $\simeq \RR^{(d_{\text{in}}+3)\times d_v }$. This observation makes it independent of the spatial discretization size $n\times n$ by dependent on $d_{\text{in}}$. In FNO3d, the channel reduction is a pointwise nonlinear universal approximator, yet in ST-FNO3d, this is linear, which eventually facilitate the convexity of the fine-tuning optimization problem.

\begin{figure}
  \centering
  \includegraphics[width=0.85\textwidth]{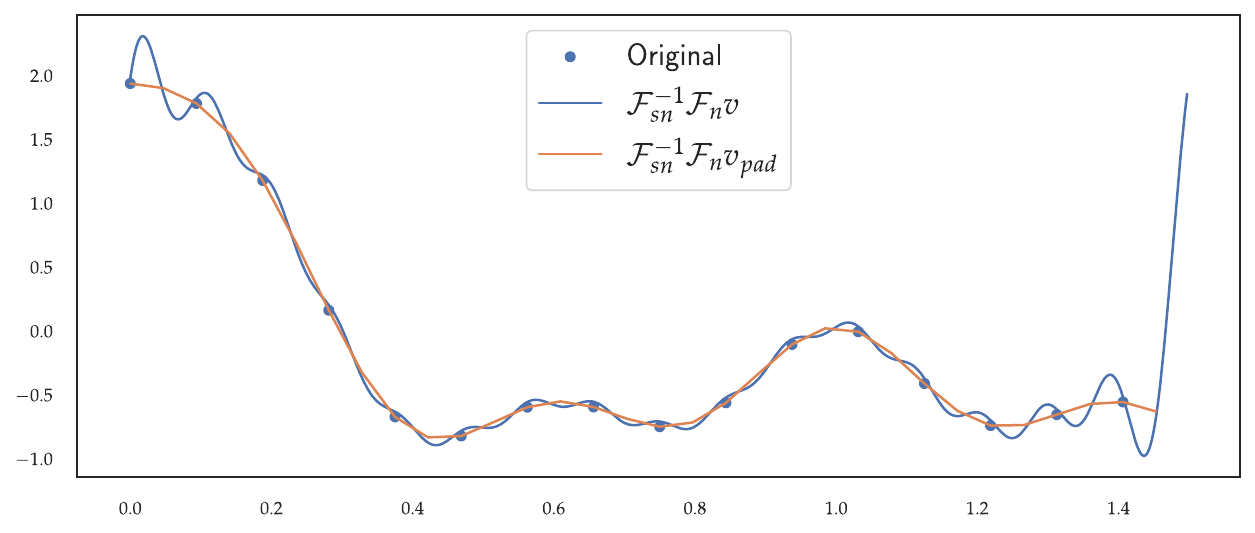}
  \caption{A simple illustration of necessity of padding using a 1D non-periodic data. FNO3d cannot handle temporal non-periodic data. The free temporal super-resolution by FFT/iFFT has Runge-like phenomena near the end points without padding. Padding in the ST-FNO3d modification resolves this problem. Since data themselves are intrinsically lower-dimensional, we opt not to pad the latent representations, but rather apply the padding only in the last spectral convolution layer to get the output.}
  \label{fig:padding}
\end{figure}

\begin{table}
  \caption{The detailed comparison of FNO3d and its spatiotemporal modification. Layer: \# of spectral convolution layers; ST-FNO has an extra layer of spectral convolution in a single channel with skip connection. Modes: $(\tau_{\max}, k_{\max})$. Pre-norm: whether a pointwise Gaussian normalizer is applied for the input data. Eval FLOPs: Giga FLOPs for one evaluation instance. FT: finetuning GFLOPs profiles per ADAM iterations for a $256^2$ grid for a single instance. A \texttt{torch.cfloat} type parameter entry counts as two parameters.
  }
  \label{table:networks}
  \begin{center}
    \resizebox{0.95\textwidth}{!}{%
      \begin{tabular}{rcccccccccc}\toprule
        & \multicolumn{5}{c}{Architectures}
        &  \multicolumn{2}{c}{GFLOPs}
        &  \multirow{3}{*}{\# params}
        \\
        \cmidrule(lr){2-6}
        \cmidrule(lr){7-8}
        &  layers & channel/width & modes & activation & pre-norm
        & Eval & FT &
        \\
        \midrule
        FNO3d Example 2 \ref{list:example-fno3d}       & $4$ & 20 & $(5, 8)$  & GELU & Y & $13.3$ & N/A   &  9.03m
        \\
        ST-FNO3d Example 2 \ref{list:example-fno3d}       & $4+\frac{1}{20}$ & 20 & $(5, 8)$  & GELU & N & $27.0$ &  $3.28$ &  9.02m
        \\
        FNO3d Example 2  \ref{list:example-McWilliams}       & $4$ & 10 & $(5, 32)$  & GELU & Y & 28.7 & N/A & 16.38m
        \\
        ST-FNO3d Example 2   \ref{list:example-McWilliams}       & $4+\frac{1}{10}$ & 10 & $(5, 32)$  & GELU & N & 42.5 & $3.4$ & 16.42m
        \\
        \bottomrule
      \end{tabular}
    }
  \end{center}
\end{table}

\begin{table}
  \caption{The FLOPs/runtime comparison. FNO3d with a roll-out prediction (10 steps to predict next 10), ST-FNO3d (spatiotemporal prediction), and a traditional numerical solver's time marching (IMEX 4-th order Runge-Kutta). Spatial grid size $256\times 256$, prediction time interval from $t = 4.5$ to $t=5.5$ (stable energy cascade regime), numerical solver $\Delta t=10^{-3}$, roll-out $\delta t = 10\Delta t$, ST-FNO3d $\delta t = 25 \Delta t$. Note that ST-FNO3d gets all $40$ steps in a single evaluation, the numerical solver has to march $1000$ steps, and the roll-out prediction inference is supposed to march 10 times yet becomes de-correlated around $t\approx 4.8$ after the third step. \revision{Runtimes are reported for a single instance averaging 20 runs with a 100w GPU power limiter on a single RTX 4500.}}
  \label{table:flops}
  \vspace*{-0.2em}
  \begin{center}
    \resizebox{\textwidth}{!}{%
      \begin{tabular}{rcccccccccc}\toprule
        & \multicolumn{3}{c}{Architectures}
        &  \multicolumn{3}{c}{GFLOPs}
        &   \multicolumn{2}{c}{\revision{Runtime ($\times 10^{-2}$ s)}}
        &  \multirow{3}{*}{\# params}
        \\
        \cmidrule(lr){2-4}
        \cmidrule(lr){5-7}
        \cmidrule(lr){8-9}
        &  layers & channel & modes
        & Eval/Step & FT & All & \revision{Single eval} & \revision{All} &
        \\
        \midrule
        FNO3d roll-out Example 2    & $4$ & 10 & $(5, 32)$  & 28.7  & N/A & 2.9M & \revision{$4.1\pm 1.5$} & \revision{$45.6\pm 6.7$} & 16.38m
        \\
        ST-FNO3d Example 2   & $4+\frac{1}{10}$ & 10 & $(5, 32)$   &  1.06 &  2.51 & 337.6 & \revision{$22.7\pm 4.0$} & \revision{$162 \pm 4.1$} & 16.42m
        \\
        IMEX Runge-Kutta       & N/A & N/A & N/A  & 3.98 & N/A & 4M & \revision{$0.449\pm 0.3$} & \revision{$450\pm 8.7 $} & 10
        \\
        \bottomrule
      \end{tabular}
    }
  \end{center}
\end{table}

\subsection{Fine-tuning}
\label{appendix:fine-tuning}

In implementations, we choose $\texttt{Iter}_{\max} = 100$ and the ADAM optimizer with learning rate $10^{-1}$ in Algorithm \ref{alg:ft}, i.e., an ADAM is simply run for 100 iterations to update parameters that count only a fraction of a spectral layer since the number of parameters accounts for only a single channel.

\paragraph{To train or to stop? Investigating the frequency domain signatures}
In Stuart's paper on the convergence of linear operator learning \citep{2023deHoopEtALConvergence}, the evaluation error scales with number of samples, in the few-data regime, the overfitting comes fast. Part of the reason is that the operator learning problems are in the ``small'' data regime.

Denote $\omega(t, \bm{x}):=\curl \bm{u}$, and $\hat{\omega}: = \F\omega$ where $\F$ is applied only in space, then we can define enstrophy spectra as follows:
\begin{equation*}
  \mathcal{E}(t, k) =
  \sum_{k-\delta k \le |\bm{k}| \le k+\delta k}   |\hat{\omega}(t, \bm{ k}) |^2,
  \;\text{ and } \;    \sum_k \mathcal{E}(t, k) := \int_{\Omega}
  |\omega|^2 \mathrm{d} \bm{x}
\end{equation*}

In Example \ref{list:example-McWilliams}, the enstrophy spectrum decays as $\mathcal{O}(k^{-3})$, which is slightly faster than kinetic energy. This is usually referred to as the direct cascade, as opposed to the inverse cascade $\mathcal{O}(k^{-5/3})$ discovered by Kolmogorov. For ST-FNO, it learns the frequency signature of the data after a single epoch. See Figure \ref{fig:enstrophy-spectra} and Figure \ref{fig:enstrophy-spectra-single}.

\begin{figure}[h]
  \begin{center}
    \begin{subfigure}[b]{0.45\linewidth}
      \centering
      \includegraphics[width=0.95\linewidth]{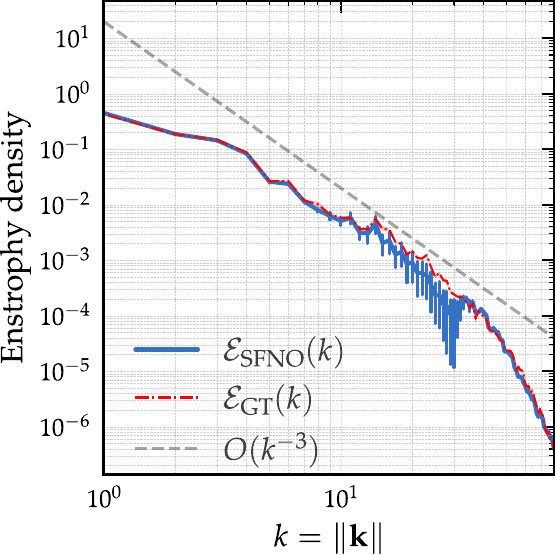}
      \caption{\label{fig:spec-mcwilliams-sfno-1ep}}
    \end{subfigure}%
    \hspace{0.3in}
    \begin{subfigure}[b]{0.45\linewidth}
      \centering
      \includegraphics[width=0.95\linewidth]{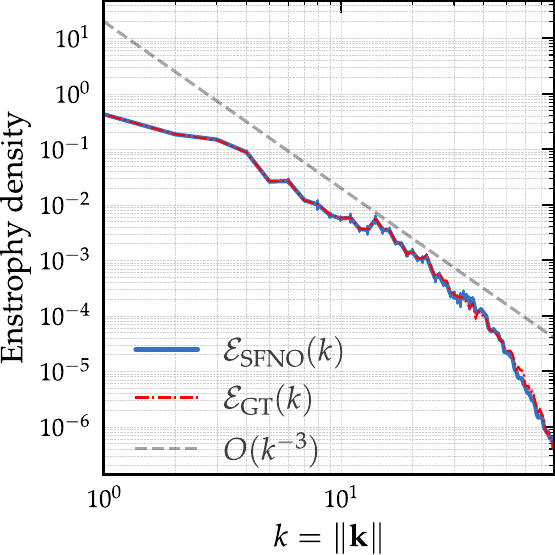}
      \caption{\label{fig:spec-mcwilliams-sfno-5ep}}
    \end{subfigure}%
  \end{center}
  \caption{Enstrophy spectrum density comparison for Example \ref{list:example-McWilliams} to illustrate the ``convergence'' of the ST-FNO training. The ST-FNO evaluation is on a $256\times 256$ grid for a fixed randomly chosen sample, and the training is on a $64\times 64$ grid starting from 10 different seeds at different time steps. The error bars are plotted with $+/-$ $10$ times the standard deviation from the mean to boost the visibility of the convergence.
  \subref{fig:spec-mcwilliams-sfno-1ep}: the comparison after $1$ epoch, the average relative $L^2$ difference with the ground truth is $2.150\times 10^{-1}$ ; \subref{fig:spec-mcwilliams-sfno-5ep}: the comparison after 10 epochs, the average relative $L^2$ difference with the ground truth is $8.051\times 10^{-2}$.}
  \label{fig:enstrophy-spectra}
\end{figure}

\begin{figure}[h]
  \begin{center}
    \begin{subfigure}[b]{0.32\linewidth}
      \centering
      \includegraphics[width=\linewidth]{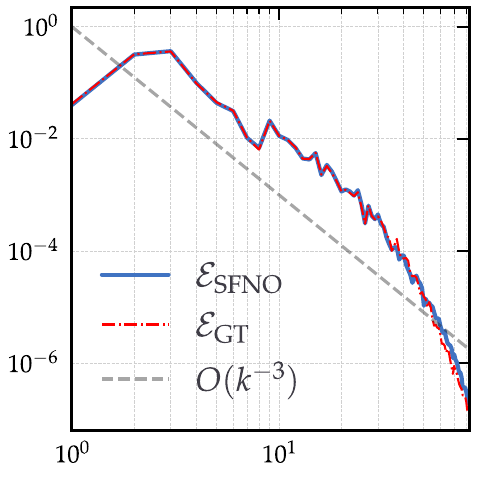}
      \caption{\label{fig:spec-sfno-0-t4}}
    \end{subfigure}%
    \;
    \begin{subfigure}[b]{0.32\linewidth}
      \centering
      \includegraphics[width=\linewidth]{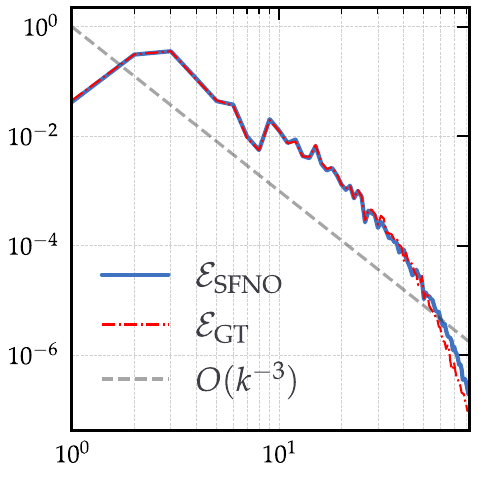}
      \caption{\label{fig:spec-sfno-0-t7}}
    \end{subfigure}%
    \;
    \begin{subfigure}[b]{0.32\linewidth}
      \centering
      \includegraphics[width=\linewidth]{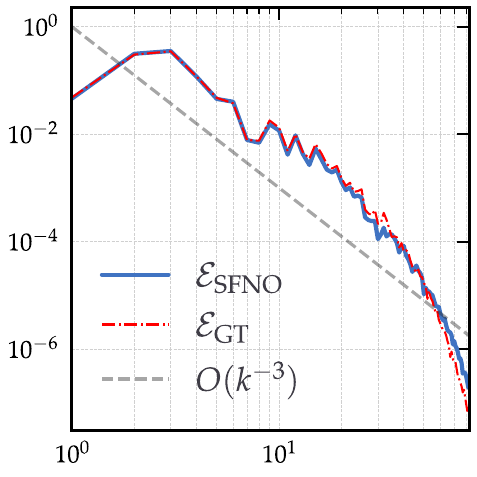}
      \caption{\label{fig:spec-sfno-0-t10}}
    \end{subfigure}%
    \\
    \begin{subfigure}[b]{0.32\linewidth}
      \centering
      \includegraphics[width=\linewidth]{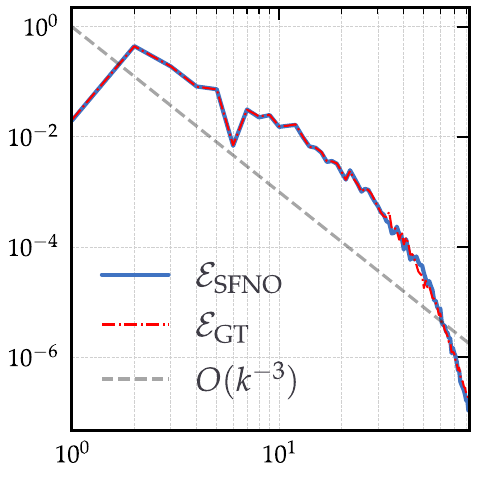}
      \caption{\label{fig:spec-sfno-5-t4}}
    \end{subfigure}%
    \;
    \begin{subfigure}[b]{0.32\linewidth}
      \centering
      \includegraphics[width=\linewidth]{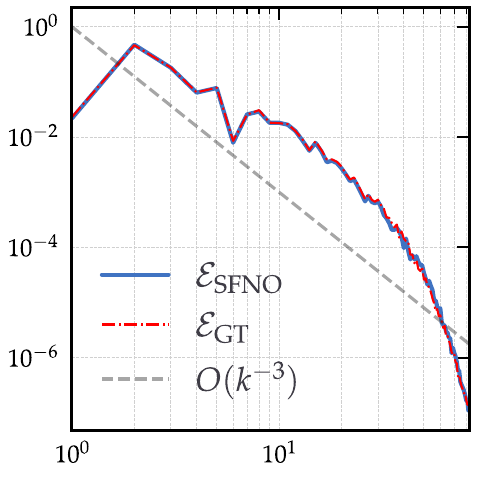}
      \caption{\label{fig:spec-sfno-5-t7}}
    \end{subfigure}%
    \;
    \begin{subfigure}[b]{0.32\linewidth}
      \centering
      \includegraphics[width=\linewidth]{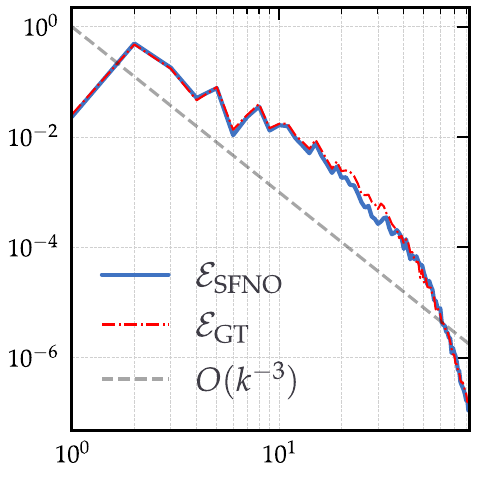}
      \caption{\label{fig:spec-sfno-5-t10}}
    \end{subfigure}%
  \end{center}
  \caption{Enstrophy spectrum density comparison for Example \ref{list:example-McWilliams} for two randomly selected trajectories for a given trained ST-FNO. Training and evaluation setups are identical to those in Figure \ref{fig:enstrophy-spectra}.
  \subref{fig:spec-sfno-0-t4}, \subref{fig:spec-sfno-0-t7}, \subref{fig:spec-sfno-0-t10}: the comparison of the spectra of the ST-FNO evaluation and the ground truth after 10 epochs of training at $t_{\ell+4}$, $t_{\ell+7}$, and $t_{\ell+10}$ of the first trajectory; \subref{fig:spec-sfno-5-t4}, \subref{fig:spec-sfno-5-t7}, \subref{fig:spec-sfno-5-t10}: the comparison for the second trajectory at the same time steps.}
  \label{fig:enstrophy-spectra-single}
\end{figure}

\subsection{Additional experiments}
\label{sec:examples-additional}

In this subsection, we perform the following additional experiments regarding the possibility of fine-tuning ST-FNO3d for ``out-of-distribution'' data. All ST-FNO3d models are trained for 15 epochs (5 more than the model in Table \ref{table:example-McWilliams}) with the data generated using the McWilliams's isotropic turbulence data from Example \ref{list:example-McWilliams} with $\nu=10^{-3}$. The initial energy distribution satisfies:
\begin{equation}
  \label{eq:energy-density-initial}
  |\hat{\psi}(k)|^2 \sim k^{-1}(\tau^2 + (k/k_0)^4)^{-1}.
\end{equation}

The parameter $k_0$ in Example \ref{list:example-McWilliams} roughly means that the initial energy is concentrated at $|\bm{k}| \approx k_0$. In Table \ref{table:example-McWilliams} $k_0$ is chosen as $4$ according to \citet{1984McWilliams}. For all examples, the same ``burn-in'' time is used ($t=4.5$), the training output time steps are $10$ and evaluation output time steps are $40$.
Please refer to Figure \ref{fig:mcwilliams-pw} for sample trajectories and Figure \ref{fig:mcwilliams-pw-energy} for the energy cascade after the ``burn-in'' time period.

\begin{enumerate}[align=left,
      topsep=0pt,
      leftmargin=2.5em,
    label=({Ex}{\arabic*})\;]
  \item \label{list:re1k-ood-k2} \revision{The initial energy is concentrated at $k_0 = 2$. A sample trajectory can be found in Figure \ref{fig:mcwilliams-pw2}.}
  \item \label{list:re1k-ood-k8} \revision{Same with above, $k_0 = 8$. A sample trajectory can be found in Figure \ref{fig:mcwilliams-pw8}.}
  \item \label{list:re5k} \revision{$k_0$ = 4, $\nu = 1/Re=1/5000$. In order to resolve the small viscosity, the mesh size becomes $512\times 512$.}
  \item \label{list:re10k-ood} \revision{$k_0$ = 8, $\nu = 1/Re=1/10000$, mesh size is $1024\times 1024$ for the same reason as above.}
\end{enumerate}

The evaluation and fine-tuning result can be found in Table \ref{table:example-McWilliams-additional}. We found that the new paradigm works reasonably well for first three example, yet fails to converge for the last example. For an example plot of the prediction comparison please refer to Figure \ref{fig:mcwilliams-re5k-preds}. The correlation graph between the ST-FNO3d's predictions and the ground truth can be found in Figure \ref{fig:correlation}, in which the FNO(2$+$1)d is adapted from the original FNO2d code. This roll-out-based model predicts the next snapshot based on 10 given snapshots as input, and the predicted snapshot is concatenated to the current input as the latest snapshot to form the input for the next prediction.

\begin{figure}[h]
  \begin{center}
    \begin{subfigure}[b]{0.95\linewidth}
      \centering
      \includegraphics[width=\linewidth]{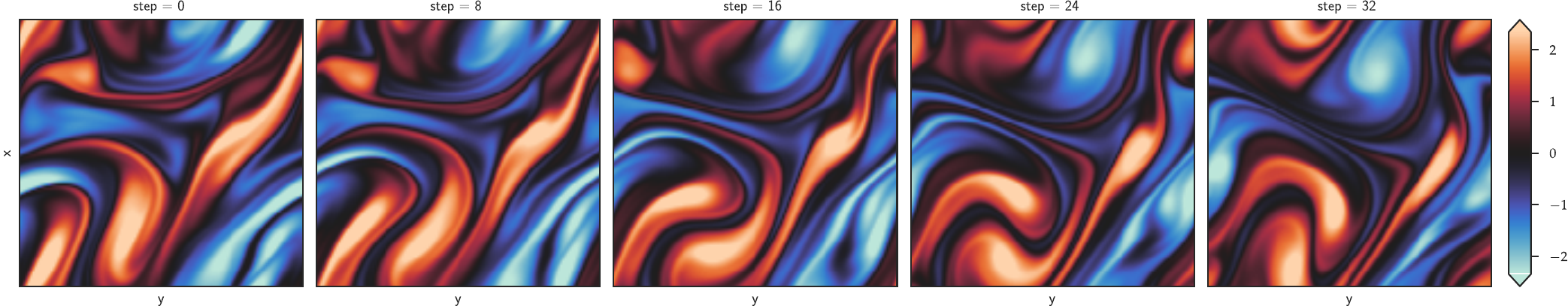}
      \caption{\label{fig:mcwilliams-pw2}}
    \end{subfigure}%
    \\
    \begin{subfigure}[b]{0.95\linewidth}
      \centering
      \includegraphics[width=\linewidth]{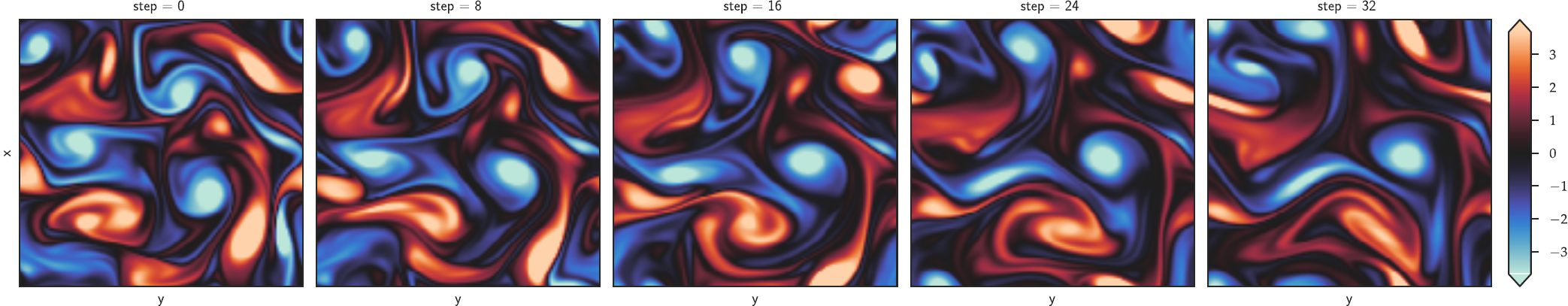}
      \caption{\label{fig:mcwilliams-pw4}}
    \end{subfigure}%
    \\
    \begin{subfigure}[b]{0.95\linewidth}
      \centering
      \includegraphics[width=\linewidth]{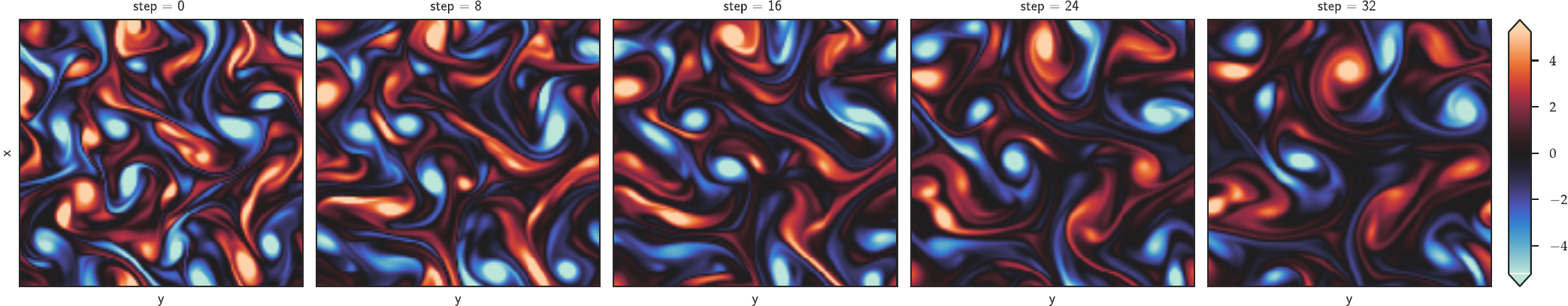}
      \caption{\label{fig:mcwilliams-pw8}}
    \end{subfigure}%
  \end{center}
  \caption{Examples trajectories of the evaluation datasets for additional experiments for Example \ref{list:example-McWilliams}. \subref{fig:mcwilliams-pw2}: $k_0=2$;  \subref{fig:mcwilliams-pw4}: $k_0=4$. \subref{fig:mcwilliams-pw8}: $k_0=8$.}
  \label{fig:mcwilliams-pw}
\end{figure}

\begin{figure}[h]
  \begin{center}
    \begin{subfigure}[b]{0.3\linewidth}
      \centering
      \includegraphics[width=\linewidth]{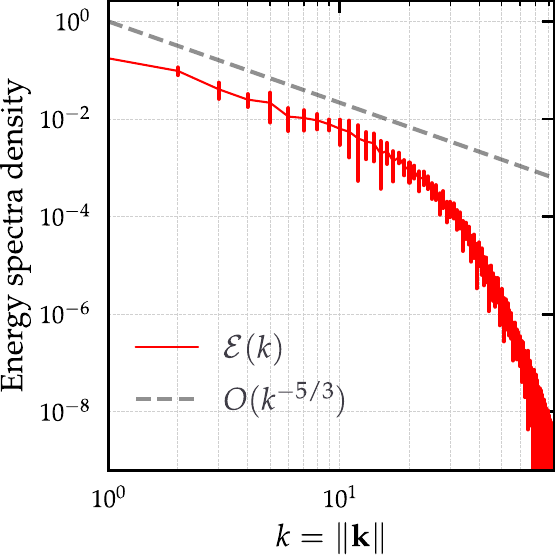}
      \caption{\label{fig:mcwilliams-pw2-es}}
    \end{subfigure}%
    \hspace{0.12in}
    \begin{subfigure}[b]{0.3\linewidth}
      \centering
      \includegraphics[width=\linewidth]{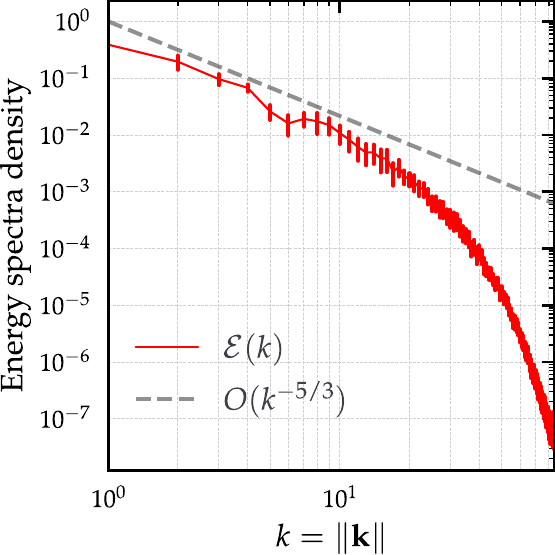}
      \caption{\label{fig:mcwilliams-pw4-es}}
    \end{subfigure}%
    \hspace{0.12in}
    \begin{subfigure}[b]{0.3\linewidth}
      \centering
      \includegraphics[width=\linewidth]{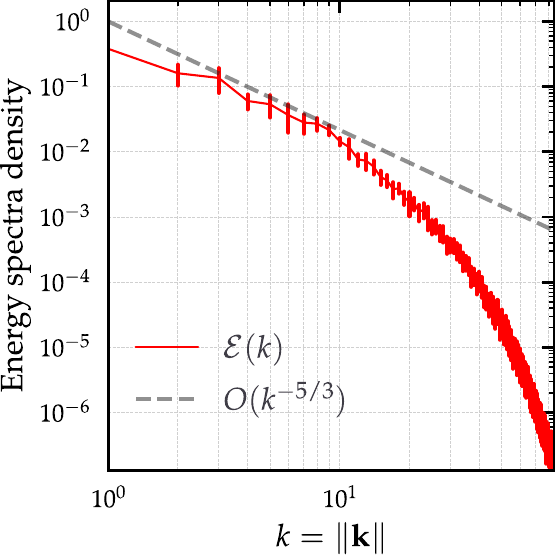}
      \caption{\label{fig:mcwilliams-pw8-es}}
    \end{subfigure}%
  \end{center}
  \caption{Energy spectrum densities datasets for additional experiments for Example \ref{list:example-McWilliams}. \subref{fig:mcwilliams-pw2-es}: $k_0=2$;  \subref{fig:mcwilliams-pw4-es}: $k_0=4$. \subref{fig:mcwilliams-pw8-es}: $k_0=8$.}
  \label{fig:mcwilliams-pw-energy}
\end{figure}

\begin{table}[htbp]
  \caption{Evaluation metrics of the McWilliams isotropic turbulence example for ``out-of-distribution'' data. For details please refer to Section \ref{sec:examples-additional}
    $\varepsilon: = \omega_{\S} - \omega_{\N}$.  $r: = R(\omega_{\N})$. $\Y:= H^{-1}(\TT^2)$. All error norms are evaluated at the final time step. The training is under the $L^2$-norm and the fine-tuning is under the $H^{-1}$-norm. Symbol ``--'' means the errors are the same as the entry on the row above.
  }
  \vspace*{-0.2em}
  \label{table:example-McWilliams-additional}
  \begin{center}
    \resizebox{0.9\textwidth}{!}{%
      \begin{tabular}{rccccccc}\toprule
        & \multicolumn{2}{c}{In-distribution evaluation}
        & \multicolumn{2}{c}{Out-of-distribution evaluation}
        & \multicolumn{2}{c}{After fine-tuning}
        \\
        \cmidrule(lr){2-3}
        \cmidrule(lr){4-5}
        \cmidrule(lr){6-7}
        & $\|{\varepsilon}\|_{L^2}$ & $\|R\|_{-1,n}$
        & $\|{\varepsilon}\|_{L^2}$ & $\|R\|_{-1,n}$
        & $\|{\varepsilon}\|_{L^2}$ & $\|R\|_{-1,n}$
        \\
        \midrule
        \ref{list:re1k-ood-k2} $k_0=2$   &  \num{5.39e-02} & \num{1.66e-02}  & \num{8.52e-02} & \num{4.25e-02} & \num{2.73e-02} & \num{5.05e-05}
        \\
        \ref{list:re1k-ood-k8} $k_0=8$    & --  &  --  & \num{1.46e-01} & \num{2.31e-02} & \num{2.06e-02}  &  \num{5.85e-06}
        \\
        \ref{list:re5k} $Re=5\cdot 10^3$    & -- &  --  & \num{2.94e-01} & \num{2.73e-02} & \num{2.50e-02}  &  \num{2.96e-04}
        \\
        \ref{list:re10k-ood} $Re=10^4$    & -- &  --  &  \num{4.92e-01}  &  \num{1.76e-01}  &  \num{2.355e+06}  &  \num{2.873e+04}
        \\
        \bottomrule
      \end{tabular}
    }
  \end{center}
\end{table}

\begin{figure}[h]
  \begin{center}
    \begin{subfigure}[b]{0.32\linewidth}
      \centering
      \includegraphics[width=\linewidth]{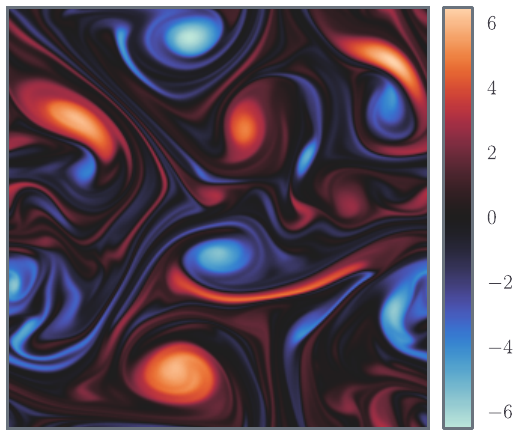}
      \caption{\label{fig:mcwilliams-re5k-gt}}
    \end{subfigure}%
    \;
    \begin{subfigure}[b]{0.32\linewidth}
      \centering
      \includegraphics[width=\linewidth]{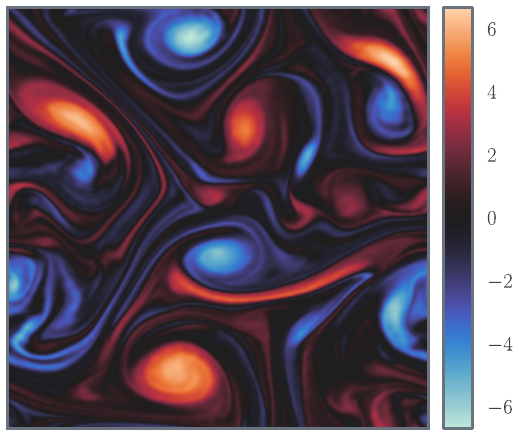}
      \caption{\label{fig:mcwilliams-re5k-no}}
    \end{subfigure}%
    \;
    \begin{subfigure}[b]{0.32\linewidth}
      \centering
      \includegraphics[width=\linewidth]{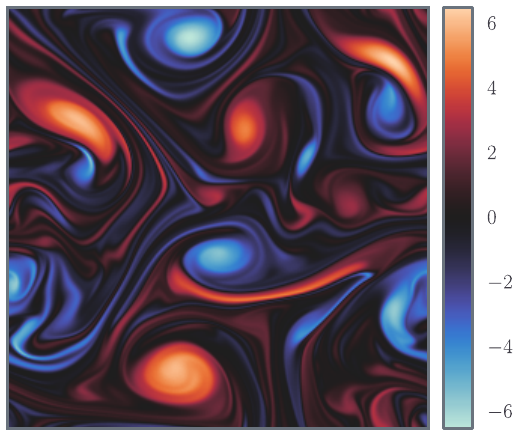}
      \caption{\label{fig:mcwilliams-re5k-ft}}
    \end{subfigure}%
  \end{center}
  \caption{The ground truth versus the ST-FNO3d evaluation/fine-tuning results for the $Re=5000$ additional experiment for Example \ref{list:example-McWilliams}. \subref{fig:mcwilliams-re5k-gt}: the ground truth at $t=5$ (prediction step $20$) generated by IMEX RK4;  \subref{fig:mcwilliams-re5k-no}: ST-FNO3d prediction. \subref{fig:mcwilliams-re5k-ft}: ST-FNO3d fine-tuned prediction.}
  \label{fig:mcwilliams-re5k-preds}
\end{figure}

\begin{figure}
  \centering
  \includegraphics[width=0.6\textwidth]{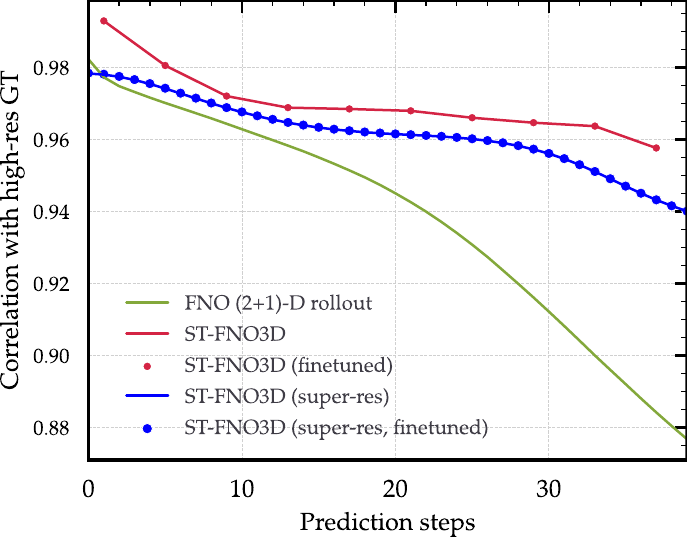}
  \caption{\revision{Comparison of the correlation with the ground truth (on $1024\times 1024$ grid) generated by IMEX RK4 solver.}}
  \label{fig:correlation}
\end{figure}

\section{Assumptions and Proofs}
\label{appendix:proofs}

\begin{assumption}[Assumptions for Theorems \ref{theorem:error-estimate}, \ref{theorem:norm-equivalence}, \ref{theorem:convergence}]
  \label{assumption:nse}
  The following notions and assumptions are adopted throughout the proof of the three theorems involved, all of which are common in the literature for NSE. While some of them can be proved, we opt for list them here to facilitate the presentation.
  \begin{enumerate}[align=left,
        topsep=0pt,
        leftmargin=1.2em,
      label=$\emph{(}$\subscript{E}{{\arabic*}}$\emph{)}$\;]
    \item \label{assumption:gelfand} The compact embeddings for Gelfand triple $\V \Subset\H \Subset \V'$ hold, where $\V = \bm{H}^1(\TT^2)$, and $\H = \bm{L}^2(\TT^2)$.
    \item \label{assumption:time} The time interval $\T$ in consideration is fixed in that we consider the ``refining'' problem for a fixed time interval but with variable time steps.
    \item \label{assumption:data} The initial condition $\bm{u}_0\in \V$, and $\bm{f}\in L^2(\T; \V')$.
  \end{enumerate}
\end{assumption}

\begin{lemma}[Skew-symmetry of the trilinear term]
  \label{lemma:skew-symmetry}
  For $\bm{z}, \bm{u}, \bm{v}\in \bm{H}^1(\TT^2)$, where $\TT^2$ denotes $\Omega:=(0,1)^2$ equipped with component-wise PBC. Denote
  \[
    c(\bm{z},\bm{u},\bm{v}) := \bigl((\bm{z}\cdot\nabla) \bm{u}, \bm{v}\bigr) = \int_{\Omega} ((\bm{z}\cdot\nabla) \bm{u})\cdot\bm{v} \,\dd \bm{x}.
  \]
  If $\nabla\cdot\bm{z}=0,$ then
  \begin{equation}
    \label{eq:identity-trilinear}
    c(\bm{z},\bm{u},\bm{v}) = -c(\bm{z},\bm{v},\bm{u}),
  \end{equation}
  and specifically $c(\bm{z},\bm{v},\bm{v}) = 0$.
\end{lemma}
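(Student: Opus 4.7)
The plan is standard: a single integration by parts in one coordinate direction, followed by invoking the divergence-free constraint on $\bm{z}$ to cancel the unwanted term. Periodicity of $\TT^2$ kills all boundary contributions, so no special care about traces is needed.

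First I would verify that $c$ is well-defined as a continuous trilinear form on $\bm{H}^1(\TT^2)^3$. In two dimensions the Sobolev embedding $H^1(\TT^2)\hookrightarrow L^4(\TT^2)$ holds, so by Hölder's inequality
\[
|c(\bm{z},\bm{u},\bm{v})| \leq \|\bm{z}\|_{L^4}\,\|\nabla \bm{u}\|_{L^2}\,\|\bm{v}\|_{L^4} \lesssim \|\bm{z}\|_{1}\|\bm{u}\|_{1}\|\bm{v}\|_{1}.
\]
This continuity allows me to reduce to smooth inputs by density: it suffices to prove \eqref{eq:identity-trilinear} under the additional assumption $\bm{z},\bm{u},\bm{v}\in C^\infty(\TT^2)^2$ with $\nabla\cdot\bm{z}=0$, and then pass to the $H^1$-limit using periodic mollification combined with the continuity bound above.

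In the smooth case, I would expand componentwise as
\[
c(\bm{z},\bm{u},\bm{v}) = \sum_{i,j=1}^{2}\int_{\Omega} z_j\,(\partial_j u_i)\,v_i\,\dd\bm{x},
\]
and integrate by parts in $x_j$. Since $\Omega = \TT^2$ carries periodic boundary conditions, all boundary contributions vanish, giving
\[
c(\bm{z},\bm{u},\bm{v}) = -\sum_{i,j}\int_{\Omega}(\partial_j z_j)\,u_i\,v_i\,\dd\bm{x} \;-\; \sum_{i,j}\int_{\Omega} z_j\,(\partial_j v_i)\,u_i\,\dd\bm{x}.
\]
The first sum factors as $\int_\Omega (\nabla\cdot\bm{z})(\bm{u}\cdot\bm{v})\,\dd\bm{x}$ and vanishes by the solenoidal hypothesis on $\bm{z}$; the second sum is precisely $-c(\bm{z},\bm{v},\bm{u})$. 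This establishes \eqref{eq:identity-trilinear}. Setting $\bm{u}=\bm{v}$ in the identity yields $c(\bm{z},\bm{v},\bm{v}) = -c(\bm{z},\bm{v},\bm{v})$, hence $c(\bm{z},\bm{v},\bm{v})=0$.

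The only mildly subtle step is the density/regularity justification, since the formal integration by parts is immediate for $C^\infty$ fields but needs a limit argument at $H^1$-regularity. Working on the torus removes any boundary-trace complication, and standard periodic mollification provides smooth approximants $\bm{z}_\varepsilon, \bm{u}_\varepsilon, \bm{v}_\varepsilon$ converging in $\V$; moreover mollification commutes with $\nabla\cdot$, so $\nabla\cdot\bm{z}_\varepsilon=0$ is preserved. The limit passage is then routine by the trilinear continuity bound noted above, and no regularity beyond $H^1$ is required.
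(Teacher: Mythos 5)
Your proof is correct and follows essentially the same route as the paper's: a product-rule/integration-by-parts identity, cancellation of the boundary contributions by periodicity, and the solenoidal condition on $\bm{z}$ to kill the remaining term. The only difference is that you add the (correct) continuity bound via $H^1(\TT^2)\hookrightarrow L^4(\TT^2)$ and a mollification-density argument to justify the formal computation at $H^1$ regularity, a step the paper leaves implicit.
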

\begin{proof} We note that this result is common in NSE literature, see e.g., \citet[Part I Sec 2.3]{1995Temam}, for homogeneous boundary or whole space. In this case, it suffices to verify that the boundary term vanishes. Here we still provide a short argument for the convenience of readers.
  First by the product rule, and a vector calculus identity of $\nabla\cdot(\bm{v}\cdot\bm{u})$, see e.g., \citet[Appendix II.3.2]{2012Balanis}.
  \begin{equation}
    \begin{aligned}
      \nabla\cdot \left( (\bm{v}\cdot\bm{u})\bm{z} \right)
      =&  \left( (\bm{z} \cdot \nabla)\bm{u}\right)\cdot \bm{v} + \left( (\bm{z} \cdot \nabla)\bm{v}\right)\cdot \bm{u}+ (\bm{v}\cdot\bm{u})\nabla\cdot \bm{z}.
    \end{aligned}\label{eq:vec-field-identity}
  \end{equation}
  By Gauss divergence theorem, we then obtain:
  \[ \int_\Omega \nabla\cdot \left( (\bm{v}\cdot\bm{u})\bm{z} \right)\,\dd \bm{x} = \int_{\partial \Omega} (\bm{v}\cdot\bm{u})\bm{z} \cdot \bm{n} \,\dd s  = \sum_{e_i\subset \partial \Omega, 1\leq i \leq 4}\int_{\partial \Omega} (\bm{v}\cdot\bm{u})\bm{z} \cdot \bm{n}_i \,\dd s ,
  \]
  with $\bm{n}$ is outer normal with respect to $\partial \Omega$, and $\bm{n}_i$ denotes that with respect to edge $e_i$. Now on $e_1:=\{x=0\}\times \{y\in (0,1)\}$, $\bm{n}_1 = (-1, 0)^{\top}$,  $\bm{w}|_{e_1} = \bm{w}|_{e_2}$ by PBC where $e_2:=\{x=1\}\times \{y\in (0,1)\}$ and $\bm{w}\in \{\bm{z},\bm{u},\bm{v}\}$. The integrals on $e_1$ and $e_2$ cancel with one another since $\bm{n}_2 = -\bm{n}_1$.
  Furthermore, if $\nabla\cdot\bm{z}=0$, integrating \eqref{eq:vec-field-identity} on $\Omega$ yields the desired result.
\end{proof}

\begin{lemma}[Poincar\'{e} inequality]
  \label{lemma:poincare}
  $\V/\RR \Subset \H/\RR$ is compact and the following Poincar\'{e} inequality holds with constant $1$:
  for any $v\in \V/\RR$
  \begin{equation}
    \|v\| \leq |v|_{1}
  \end{equation}
\end{lemma}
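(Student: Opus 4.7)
The plan is to diagonalize both the $L^2$-norm and the $H^1$-seminorm simultaneously in the Fourier basis, reducing the inequality to an elementary pointwise comparison of weighted $\ell^2$-sums. Since $\V/\RR$ identifies functions modulo additive constants, every equivalence class has a unique mean-zero representative, i.e., one with $\hat v(\bm{0})=0$. Using the Fourier characterizations in \eqref{eq:norm-fourier} together with the Parseval identity, one has
\begin{equation*}
\|v\|^2 \;=\; \sum_{\bm{k}\in \ZZ^2} |\hat v(\bm{k})|^2 \;=\; \sum_{\bm{k}\neq \bm{0}} |\hat v(\bm{k})|^2.
\end{equation*}
With the convention of \eqref{eq:norm-fourier} under which every nonzero integer mode satisfies $|\bm{k}|^2 \geq 1$, multiplying each summand termwise by the nonnegative factor $|\bm{k}|^2 \geq 1$ preserves the inequality, giving $\|v\|^2 \leq \sum_{\bm{k}\neq\bm{0}} |\bm{k}|^2 |\hat v(\bm{k})|^2 = |v|_1^2$, which is exactly the claim with constant $1$.

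For the compactness assertion $\V/\RR \Subset \H/\RR$, my plan is to invoke the classical Rellich-Kondrachov embedding $H^1(\TT^2)\Subset L^2(\TT^2)$; this automatically descends to the mean-zero subspace because $L^2$-limits of zero-mean functions are zero-mean. If a self-contained Fourier argument is preferred, then for any sequence $\{v_n\}$ bounded in $\V/\RR$, a standard diagonal extraction yields pointwise convergence of the Fourier coefficients $\hat v_n(\bm{k})\to \hat v(\bm{k})$ for each $\bm{k}$; the high-frequency tail is uniformly controlled via $\sum_{|\bm{k}|>N}|\hat v_n(\bm{k})|^2 \leq N^{-2}|v_n|_1^2$, so dominated convergence produces a strongly convergent subsequence in $\H/\RR$.

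The main (and quite minor) obstacle is bookkeeping the normalization convention in \eqref{eq:norm-fourier} consistently with the choice of torus in Table \ref{table:notations}: one needs the smallest nonzero mode to satisfy $|\bm{k}|^2 \geq 1$ so that the Poincaré constant is exactly $1$. Under the convention used throughout the paper, this is immediate and no delicate estimates are needed beyond the fact that the Fourier basis simultaneously diagonalizes $(\cdot,\cdot)_{\H}$ and the $H^1$-seminorm, turning both the inequality and the compactness statement into elementary statements about weighted sequence spaces.
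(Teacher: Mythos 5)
Your proof is correct and follows essentially the same route as the paper: identify the mean-zero representative so that $\hat v(\bm{0})=0$, then compare the weighted $\ell^2$-sums termwise using $|\bm{k}|^2\geq 1$ for nonzero integer modes. The additional Fourier-tail argument for compactness is a sound supplement to what the paper only asserts, but the core inequality is proved identically.
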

\begin{proof}
  A common textbook proof exploits the equivalence of sequential compactness and compactness, once the norm topology is introduced, we prove this in a very intuitive way under our special setting: by definition \eqref{eq:norm-fourier}, if $v\in \V/\RR$ then $\hat{v}(0,0) = 0$
  \begin{equation*}
    \|v\|_{0}^2 = \sum_{\bm{k} \in\ZZ^2\backslash\{\bm{0}\}}  |\hat{v}(\bm{k})|^2
    \leq \sum_{\bm{k}\in\ZZ^2\backslash\{\bm{0}\}}   |\bm{k}|^2  |\hat{v}(\bm{k})|^2 = |v|_{1}^2.
  \end{equation*}
\end{proof}

\subsection{Proof of Theorem \ref{theorem:error-estimate}}
\label{appendix:proof-error-estimate}
To prove Theorem \ref{theorem:error-estimate}, we need Lemmas \ref{lemma:estimate-trilinear}, \ref{lemma:estimate-energy}, and \ref{lemma:linearization}, all of which can be found in classical textbook on analysis and approximations of NSE, such as Temam~\citep{1995Temam}, or Girault \& Raviart~\citep{GiraultRaviart} for homogeneous Dirichlet boundary condition for the velocity field or slip boundary condition for the (pseudo-)stress tensor. We shall complement some proofs with simple adaptations to PBC whenever necessary for the convenience of readers.

Before presenting any lemmas, we need the following definitions, a weighted $\bm{H}^1$-norm on $\V = \bm{H}^1(\TT^2)$ for $\alpha>0$ on $\Omega$ is defined as
\begin{equation*}
  \|\bm{v}\|_{\alpha, 1} := \left(\|\bm{v}\|_{L^2}^2 + \|\alpha \nabla \bm{v}\|^2_{L^2} \right)^{1/2},
  \text{ and } |\bm{v}|_{\alpha, 1} :=  \|\alpha \nabla \bm{v}\|_{L^2}.
\end{equation*}

For $\bm{f}\in \V'$, define a weighted dual norm as follows:
\begin{equation}
  \label{eq:norm-dual-h1-weighted}
  \|\bm{f}\|_{\alpha^{-1}, -1} : = \sup_{\bm{v}\in \V} \frac{\langle\bm{f}, \bm{v} \rangle}{\|\bm{v}\|_{\alpha, 1}}
\end{equation}

\begin{lemma}[Contuinity and embedding results for the convection term]
  \label{lemma:estimate-trilinear}
  For trilinear convection term
  \begin{equation}
    \bigl((\bm{u}\cdot\nabla) \bm{v}, \bm{w}\bigr) \leq
    \|\bm{u}\|_{L^4} \,|\bm{v}|_{1} \|\bm{v}\|_{L^4}.
    \quad \text{ and }\quad  \bigl((\bm{u}\cdot\nabla) \bm{v}, \bm{w}\bigr) \leq
    |\bm{u}|_{1} \,|\bm{v}|_{1} |\bm{w}|_{1}
  \end{equation}
\end{lemma}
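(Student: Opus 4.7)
The plan is to derive both inequalities from H\"older's inequality, combined with a two-dimensional Sobolev embedding for the second. I read the slight typo in the third factor of the first inequality as the presumably-intended $\|\bm{w}\|_{L^4}$.

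For the first inequality, I would write the trilinear form componentwise as
\[
\bigl((\bm{u}\cdot\nabla)\bm{v}, \bm{w}\bigr) = \sum_{i,j=1}^{2}\int_{\Omega} u_{i}\,(\partial_{i}v_{j})\,w_{j}\,\dd\bm{x},
\]
and apply H\"older's inequality on each summand with conjugate exponents $(4, 2, 4)$. Bounding $\|\partial_{i}v_{j}\|_{L^{2}}\leq |\bm{v}|_{1}$ and summing over the four $(i,j)$ pairs yields the stated $\|\bm{u}\|_{L^{4}}\,|\bm{v}|_{1}\,\|\bm{w}\|_{L^{4}}$ bound up to a fixed dimensional constant.

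For the second inequality, the additional ingredient is the two-dimensional Ladyzhenskaya (Gagliardo--Nirenberg) inequality on $\TT^{2}$,
\[
\|\bm{v}\|_{L^{4}}^{2} \;\lesssim\; \|\bm{v}\|_{L^{2}}\,|\bm{v}|_{1},
\]
which is equivalent to the Sobolev embedding $\V\hookrightarrow \bm{L}^{4}(\TT^{2})$ in two dimensions. On the mean-zero subspace $\V/\RR$, Lemma~\ref{lemma:poincare} gives $\|\bm{v}\|_{L^{2}} \leq |\bm{v}|_{1}$, so that $\|\bm{v}\|_{L^{4}} \lesssim |\bm{v}|_{1}$. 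Substituting this into the first inequality for both $\bm{u}$ and $\bm{w}$ produces $\bigl((\bm{u}\cdot\nabla)\bm{v}, \bm{w}\bigr) \lesssim |\bm{u}|_{1}\,|\bm{v}|_{1}\,|\bm{w}|_{1}$, as claimed.

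The main subtlety is the mean-zero restriction: $|\cdot|_{1}$ is only a norm modulo constants, so the Poincar\'e step closes cleanly only on $\V/\RR$. This is natural in the periodic setting because the spatial mean of a divergence-free velocity on $\TT^{2}$ is time-invariant and may be normalized to zero, which is the standard functional-analytic setup for NSE used throughout the paper. The Ladyzhenskaya inequality itself is classical and can alternatively be verified directly from the Fourier characterization~\eqref{eq:norm-fourier} by a short $\ell^{2}$-interpolation argument, but that derivation is orthogonal to the bound being proved here and I would simply invoke it as a known result.
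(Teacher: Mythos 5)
Your proposal is correct and takes essentially the same route as the paper, whose proof is a one-line citation of H\"older's inequality for the first bound and of the Sobolev embedding results in Temam (Part I, \S 2.3, Lemma 2.1) for the second; your $(4,2,4)$ H\"older split and the 2D Ladyzhenskaya inequality combined with the Poincar\'e inequality on the mean-zero subspace are precisely the standard details behind those citations. You also correctly flag the typo in the statement (the final factor of the first bound should be $\|\bm{w}\|_{L^4}$) and the implicit constants, which the paper glosses over.
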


\begin{proof}
  The first result is obtained by the H\"{o}lder inequality, and the second holds with suite Sobolev embedding results~\citep[Part I, \S 2.3, Lemma 2.1]{1995Temam}.
\end{proof}

\begin{lemma}[Energy stability of NSE in a bounded domain]
  \label{lemma:estimate-energy}
  \begin{equation}
    \label{eq:estimate-energy}
    \|\bm{u}\|_{L^{\infty}(\T; \H)}^2
    \leq \|\bm{u}_0\|_{\H}^2 +  \|\bm{f}\|_{L^{2}(\T; \V')}^2.
  \end{equation}
\end{lemma}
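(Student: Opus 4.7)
The plan is to derive the standard energy identity by testing the weak form of \eqref{eq:velocity-pressure} with $\bm{v}=\bm{u}(t,\cdot)$ itself at each (almost every) $t\in\T$, then integrate in time. Pairing $\partial_t\bm{u}\in L^2(\T;\V')$ with $\bm{u}\in L^2(\T;\V)$ yields $\langle\partial_t\bm{u},\bm{u}\rangle = \tfrac{1}{2}\tfrac{d}{dt}\|\bm{u}\|_{\H}^2$ in the distributional sense on $\T$ (Lions--Magenes), while integration by parts under PBC turns the Laplacian contribution into $\nu|\bm{u}|_1^2$, which is nonnegative and will be retained on the left-hand side (or absorbed appropriately on the right).

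Two terms vanish identically. First, the pressure gradient pairs against a divergence-free test function: integration by parts under PBC gives $(\nabla p,\bm{u}) = -(p,\nabla\cdot\bm{u}) = 0$. Second, the convection term evaluates to $c(\bm{u},\bm{u},\bm{u})$ which, by Lemma \ref{lemma:skew-symmetry} applied with $\bm{z}=\bm{u}$ (divergence-free) and Lemma \ref{lemma:estimate-trilinear} (to certify that the trilinear form is well-defined in 2D), is exactly zero. One is left with
\begin{equation*}
  \tfrac{1}{2}\tfrac{d}{dt}\|\bm{u}(t,\cdot)\|_{\H}^2 + \nu|\bm{u}(t,\cdot)|_1^2 = \langle \bm{f}(t,\cdot),\bm{u}(t,\cdot)\rangle.
\end{equation*}

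The forcing pairing is controlled via the weighted dual norm \eqref{eq:norm-dual-h1-weighted}, namely $\langle\bm{f},\bm{u}\rangle \le \|\bm{f}\|_{\alpha^{-1},-1}\|\bm{u}\|_{\alpha,1}$ with $\alpha=\sqrt{2\nu}$. By Young's inequality (or equivalently Cauchy--Schwarz followed by the peak-split $ab\le \tfrac{a^2}{2}+\tfrac{b^2}{2}$) and the Poincaré inequality of Lemma \ref{lemma:poincare} to dominate $\|\bm{u}\|^2$ by $|\bm{u}|_1^2$ modulo constants, one can absorb $\tfrac{1}{2}\|\bm{u}\|_{\alpha,1}^2 \le \nu|\bm{u}|_1^2 + \tfrac{1}{2}\|\bm{u}\|^2$ into the dissipation on the left (after possibly a Grönwall step to eat the residual $\|\bm{u}\|^2$ term). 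Integrating in $t$ from $0$ to any $s\in\T$ makes the dissipation term drop out from the estimate and gives
\begin{equation*}
  \|\bm{u}(s,\cdot)\|_{\H}^2 \;\le\; \|\bm{u}_0\|_{\H}^2 + \|\bm{f}\|_{L^2(\T;\V')}^2,
\end{equation*}
where constants absorbed depend only on $\nu$ and the Poincaré constant (which is $1$ here). Taking the essential supremum over $s\in\T$ delivers \eqref{eq:estimate-energy}.

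The main technical obstacle is not the algebra but the regularity justifications: the chain rule $\langle\partial_t\bm{u},\bm{u}\rangle = \tfrac{1}{2}\tfrac{d}{dt}\|\bm{u}\|_{\H}^2$ must be legal (i.e., $\bm{u}\in C(\overline{\T};\H)$ after possibly modifying on a null set), and Lemma \ref{lemma:skew-symmetry} must apply to $\bm{u}(t,\cdot)$ for a.e.\ $t$, which in 2D follows from the continuity of the trilinear form in Lemma \ref{lemma:estimate-trilinear} together with $\bm{u}\in L^2(\T;\V)$. These are standard consequences of assumptions \ref{assumption:gelfand}--\ref{assumption:data}, and the PBC setting on $\TT^2$ makes all boundary terms cancel pairwise as in the proof of Lemma \ref{lemma:skew-symmetry}.
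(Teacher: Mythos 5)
The paper does not actually prove this lemma; it defers entirely to \citet[Section 3.1]{1995Temam}, and your argument is precisely the standard energy estimate that the citation stands for: test with $\bm{u}$, kill the pressure and convection terms by incompressibility and skew-symmetry, control the forcing by duality and Young, integrate. That is the right and essentially only route, and your handling of the vanishing terms and of the regularity caveats (the Lions--Magenes chain rule $\langle\partial_t\bm{u},\bm{u}\rangle=\tfrac12\tfrac{d}{dt}\|\bm{u}\|_{\H}^2$, continuity into $\H$) is correct. One substantive caution: your parenthetical ``possibly a Gr\"onwall step to eat the residual $\|\bm{u}\|^2$ term'' would actually weaken the conclusion, since Gr\"onwall introduces a factor $e^{C|\T|}$ that is not present in \eqref{eq:estimate-energy}. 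You should avoid it by splitting the forcing term against the dissipation alone: by Lemma \ref{lemma:poincare} the seminorm $|\cdot|_1$ controls the full $\V$-norm on the relevant quotient, so $\langle\bm{f},\bm{u}\rangle\le\|\bm{f}\|_{\V'}\,|\bm{u}|_1\le\tfrac{1}{4\nu}\|\bm{f}\|_{\V'}^2+\nu|\bm{u}|_1^2$, the $\nu|\bm{u}|_1^2$ cancels the dissipation exactly, and integration gives the additive bound with no exponential factor. The constant in front of $\|\bm{f}\|_{L^2(\T;\V')}^2$ then comes out as $\tfrac{1}{2\nu}$ rather than the literal $1$ in the statement; that discrepancy is an artifact of the paper's implicit (weighted) normalization of the dual norm, and is harmless since the lemma is only invoked later through unspecified constants $C_i(t,\bm{u}_0)$.
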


\begin{proof}
  \eqref{eq:estimate-energy} is a classical result, see e.g., \citet[Section 3.1]{1995Temam}.
\end{proof}

\begin{lemma}[Fr\'{e}chet derivative of the convection term]
  \label{lemma:linearization}
  Given $\bm{v}, \bm{u}\in \V$, the linearization of the difference of the convection term reads
  \begin{equation}
    \bm{v} \cdot \nabla \bm{v}  =  \bm{u}\cdot \nabla \bm{u} + D(\bm{u})(\bm{v}-\bm{u}) + \bm{r}\text{ where } D(\bm{u}) \bm{\xi} :=  \bigl(\bm{\xi} \cdot \nabla \bigr)\bm{u} + (\bm{u} \cdot \nabla) \bm{\xi},
  \end{equation}
  is the Fr\'{e}chet derivative
  and $\|\bm{r}\| \lesssim \|\bm{u} - \bm{v}\|\, \|\bm{u} - \bm{v}\|_1$.
\end{lemma}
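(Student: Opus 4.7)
The plan is to establish the identity by direct algebraic expansion and then bound the remainder using the 2D Ladyzhenskaya inequality together with the skew-symmetry available from Lemma \ref{lemma:skew-symmetry}.

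First, I would write $\bm{v} = \bm{u} + \bm{\xi}$ where $\bm{\xi} := \bm{v} - \bm{u} \in \V$, and expand bilinearly:
\begin{equation*}
  (\bm{u}+\bm{\xi})\cdot\nabla(\bm{u}+\bm{\xi})
  = \bm{u}\cdot\nabla\bm{u} + (\bm{\xi}\cdot\nabla)\bm{u} + (\bm{u}\cdot\nabla)\bm{\xi} + (\bm{\xi}\cdot\nabla)\bm{\xi}.
\end{equation*}
Regrouping the two middle terms as $D(\bm{u})\bm{\xi}$ identifies $\bm{r} = (\bm{\xi}\cdot\nabla)\bm{\xi}$ explicitly. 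The linearity of $\bm{\xi}\mapsto D(\bm{u})\bm{\xi}$ in $\bm{\xi}$ is immediate, and its continuity from $\V$ into $\V'$ follows from Lemma \ref{lemma:estimate-trilinear}; combined with the quadratic-in-$\bm{\xi}$ estimate for $\bm{r}$ below, this verifies that $D(\bm{u})$ is the Fr\'{e}chet derivative of $\bm{w}\mapsto \bm{w}\cdot\nabla\bm{w}$ at $\bm{u}$.

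Next, I would estimate $\bm{r}$ in the natural dual norm by testing against an arbitrary $\bm{w}\in \V$. Since both $\bm{u}$ and $\bm{v}$ are divergence-free in our setting (so $\nabla\cdot\bm{\xi}=0$), Lemma \ref{lemma:skew-symmetry} gives
\begin{equation*}
  \langle \bm{r},\bm{w}\rangle = \bigl((\bm{\xi}\cdot\nabla)\bm{\xi},\bm{w}\bigr) = -\bigl((\bm{\xi}\cdot\nabla)\bm{w},\bm{\xi}\bigr).
\end{equation*}
Applying H\"older's inequality as in Lemma \ref{lemma:estimate-trilinear} and then the 2D Ladyzhenskaya inequality $\|\bm{\eta}\|_{L^4}^2 \lesssim \|\bm{\eta}\|\,\|\bm{\eta}\|_1$ (which is admissible in the periodic $\V = \bm{H}^1(\TT^2)$ setting via the Gelfand triple \ref{assumption:gelfand} and the Poincar\'{e}-type estimate in Lemma \ref{lemma:poincare}) yields
\begin{equation*}
  \bigl|\langle\bm{r},\bm{w}\rangle\bigr| \leq \|\bm{\xi}\|_{L^4}\,|\bm{w}|_{1}\,\|\bm{\xi}\|_{L^4}
  \lesssim \|\bm{\xi}\|\,\|\bm{\xi}\|_{1}\,|\bm{w}|_{1}.
\end{equation*}
Taking the supremum over $\bm{w}\in\V$ with $\|\bm{w}\|_{\V}=1$ gives the claimed bound $\|\bm{r}\|\lesssim \|\bm{u}-\bm{v}\|\,\|\bm{u}-\bm{v}\|_1$, with the left-hand norm interpreted as the dual functional norm on $\V'$ consistent with how $\bm{r}$ enters the residual-based a posteriori framework of Section \ref{sec:error-estimate}.

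The only delicate step is choosing the correct interpretation of $\|\bm{r}\|$: a direct $L^2$-bound $\|\bm{r}\|_{L^2}\lesssim \|\bm{\xi}\|\,\|\bm{\xi}\|_1$ is \emph{not} available in 2D without extra $H^2$-regularity of $\bm{\xi}$, which is why the skew-symmetric test against $\bm{w}\in\V$ is essential. This is the main obstacle, and it is resolved precisely by transferring the gradient off of $\bm{\xi}$ via Lemma \ref{lemma:skew-symmetry} before invoking Ladyzhenskaya, thereby splitting the two gradients between $\bm{\xi}$ and $\bm{w}$ in such a way that only one factor of $\|\bm{\xi}\|_1$ survives.
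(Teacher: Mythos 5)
Your proposal is correct, and it is substantially more complete than the paper's own argument, which is a two-line appeal to bilinearity: the paper simply defines $F(\bm{u},\bm{G}):=\bm{u}^{\top}\bm{G}$, notes that expanding this bilinear map at $(\bm{v},\nabla\bm{u})$ "yields the desired result," and defers to Girault--Raviart; it never writes down $\bm{r}$ or proves the remainder bound. You do both: the explicit quadratic expansion identifying $\bm{r}=(\bm{\xi}\cdot\nabla)\bm{\xi}$ with $\bm{\xi}=\bm{v}-\bm{u}$ is exactly what the paper's bilinearity remark encodes, and your remainder estimate supplies the missing half of the lemma. Your observation that the bound cannot hold in $L^2$ as literally written (the paper's convention is $\|\cdot\|=\|\cdot\|_{L^2}$, and $\|(\bm{\xi}\cdot\nabla)\bm{\xi}\|_{L^2}$ is not controlled by $\|\bm{\xi}\|\,\|\bm{\xi}\|_1$ for mere $H^1$ fields in 2D) is a genuine and correct criticism of the statement; the dual-norm reading is the right one, both because that is how $\bm{r}$ enters the residual framework and because it makes the Fr\'{e}chet-differentiability claim ($\|\bm{r}\|_{\V'}\lesssim\|\bm{\xi}\|_{\V}^2=o(\|\bm{\xi}\|_{\V})$) meaningful for the map $\V\to\V'$. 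Your use of Lemma \ref{lemma:skew-symmetry} to move the derivative onto the test function before invoking Ladyzhenskaya is also the right move: estimating $((\bm{\xi}\cdot\nabla)\bm{\xi},\bm{w})$ directly via Lemma \ref{lemma:estimate-trilinear} only gives $\|\bm{\xi}\|^{1/2}\|\bm{\xi}\|_1^{3/2}$, whereas the skew-symmetric form places both $L^4$ factors on $\bm{\xi}$ and recovers the stated product $\|\bm{\xi}\|\,\|\bm{\xi}\|_1$. The one caveat is that this step requires $\nabla\cdot\bm{\xi}=0$, which you flag but which is not literally assumed in the lemma statement ($\V$ may be all of $\bm{H}^1(\TT^2)$); in the application to $\bm{u}-\bm{u}_{\N}$ with the divergence-free construction of $\widetilde{Q}$ this is harmless, but it is an added hypothesis relative to the bare statement.
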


\begin{proof}
  This result is normally used without proof in linearizing NSE, see e.g., \citet[Chapter 4 eq. (6.5.2)]{GiraultRaviart}. Define $F(\bm{u}, \bm{G}):= \bm{u}^{\top}\bm{G}$, then expanding $F$ at $\bm{v}$ and $\bm{G}=\nabla \bm{u}$ yields the desired result.
\end{proof}

\begin{manualtheorem}{2.1 Part I}[Functional-type a posterior error estimate is efficient (rigorous version)]
  Consider the Gelfand triple $\V \Subset\H \Subset \V'$, and the weak solution $\bm{u} \in L^2(\mathcal{T}; \V) \cap L^{\infty} (\mathcal{T}; \H) $ to \eqref{eq:velocity-pressure} be sufficiently regular, then the dual norm of the residual is efficient to estimate the error:
  \begin{equation}
    \label{eq:estimate-error-efficiency}
    \|\mathcal{R}(\bm{u}_{\N})\|_{L^2(\T; \V')}^2 \lesssim
    \|\partial_t(\bm{u} - \bm{u}_{\N}) \|^2_{L^{2} (\T; \V')} +  \|\bm{u} - \bm{u}_{\N}\|^2_{L^{2} (\T; \V)}.
  \end{equation}
  The constant depend on the regularity of the true solution $\bm{u}$.
\end{manualtheorem}

\begin{proof}
  The proof of the lower bound (efficiency) \eqref{eq:estimate-error-efficiency} follows the skeleton laid out in \citet[Lemma 4.1]{2003Verfuerthposteriori} for diffusion and temporal derivative terms, while not needing to extend approximation using an affine linear extension operator as \citet{2003Verfuerthposteriori} does. The treatment of the nonlinear convection term follows \citet[Lemma 7]{2015Fischerposteriori} for the steady-state NSE (no temporal derivatives), barring the technicality of the divergence-free condition.
  Consider at $t\in \T$, for any test function $\bm{v}\in \bm{H}^1(\TT^2)$ and $\nabla\cdot \bm{v}=0$,
  integrating $\bm{v}$ against $R(\bm{u}_{\N})$ in \eqref{eq:functional-residual} yields
  \begin{equation*}
    \begin{aligned}
      \langle R(\bm{u}_{\N}) , \bm{v}\rangle
      =& \bigl(\bm{f} - \partial_t \bm{u}_{\N} -  (\bm{u}_{\N}\cdot\nabla){\bm{u}_{\N}}
      + \nu \Delta {\bm{u}_{\N}}, \bm{v} \bigr)
      \\
      = & \bigl(\partial_t (\bm{u} - \bm{u}_{\N})  , \bm{v} \bigr) - \bigl(\nu\Delta (\bm{u} - \bm{u}_{\N}) , \bm{v} \bigr)
      \\
      & +  \bigl(  (\bm{u}\cdot\nabla){\bm{u} }  - (\bm{u}_{\N}\cdot\nabla){\bm{u}_{\N}} , \bm{v} \bigr).
    \end{aligned}
  \end{equation*}
  Integrating by parts for the diffusion term, using the same argument for PBC as in Lemma \ref{lemma:skew-symmetry}, and inserting $(\bm{u}\cdot\nabla){\bm{u}_{\N}}$ yield
  \begin{equation}
    \label{eq:residual-equation}
    \begin{aligned}
      \langle R(\bm{u}_{\N}) , \bm{v}\rangle =&
      \underbrace{\bigl(\partial_t (\bm{u} - \bm{u}_{\N})  , \bm{v} \bigr)}_{(\text{I})} + \underbrace{\bigl(\nu\nabla (\bm{u} - \bm{u}_{\N}) , \nabla \bm{v} \bigr)}_{(\text{II})}
      \\
      & + \underbrace{\bigl(  (\bm{u}\cdot\nabla)(\bm{u} - \bm{u}_{\N}), \bm{v}  \bigr)}_{(\text{III})}
      + \underbrace{\left( \left((\bm{u} - \bm{u}_{\N})\cdot\nabla\right){\bm{u}_{\N}} , \bm{v} \right)}_{(\text{IV})}.
    \end{aligned}
  \end{equation}
  Applying the definition of a weighted dual norm \eqref{eq:norm-dual-h1-weighted} on $(\text{I})$  we have
  \begin{equation*}
    (\text{I})
    \leq \left\|\partial_t (\bm{u} - \bm{u}_{\N}) (t,\cdot)  \right\|_{\nu^{-1/2}, -1}
    \|\bm{v}\|_{\nu^{1/2}, 1}.
  \end{equation*}
  For $(\text{II})$ we simply have
  \begin{equation*}
    (\text{II}) \leq \|\nu^{1/2} \nabla (\bm{u} - \bm{u}_{\N})\|_{L^2}
    \|\nu^{1/2}\nabla \bm{v}\|_{L^2}.
  \end{equation*}
  For $(\text{III})$, applying Lemma \ref{lemma:estimate-trilinear} and the stability estimate in Lemma \ref{lemma:estimate-energy} for $\bm{u}$:
  \begin{equation*}
    (\text{III}) \leq C  \|\bm{u}\|_{\nu^{-1},1} \|\bm{u} - \bm{u}_{\N}\|_{\nu^{1/2}, 1} \|\bm{v}\|_{\nu^{1/2}, 1}
    \leq \nu^{-1} C_1(t, \bm{u}_0) \|\bm{u} - \bm{u}_{\N}\|_{\nu^{1/2}, 1} \|\bm{v}\|_{\nu^{1/2}, 1}.
  \end{equation*}
  For $(\text{IV})$, applying Lemma \ref{lemma:estimate-trilinear} and the stability estimate in Lemma \ref{lemma:estimate-energy} for $\bm{u}_{\N}$:
  \begin{equation*}
    (\text{IV}) \leq C \|\bm{u} - \bm{u}_{\N}\|_{\nu^{1/2}, 1} \|\bm{u}_{\N}\|_{\nu^{-1}, 1} \|\bm{v}\|_{\nu^{1/2}, 1} \leq \nu^{-1} C_2(t, \bm{u}_0) \|\bm{u} - \bm{u}_{\N}\|_{\nu^{1/2}, 1} \|\bm{v}\|_{\nu^{1/2}, 1}.
  \end{equation*}
  Applying the Cauchy-Schwarz inequality, one has
  \begin{equation*}
    |\langle R(\bm{u}_{\N}) , \bm{v}\rangle|^2
    \leq
    C\left(\left\|\partial_t (\bm{u} - \bm{u}_{\N}) (t,\cdot)  \right\|_{\nu^{-1/2}, -1}^2
    + \|\bm{u} - \bm{u}_{\N}\|_{\nu^{1/2}, 1}^2 \right)  \|\bm{v}\|_{\nu^{1/2}, 1}^2.
  \end{equation*}
  Finally, equipping $\V'$ with the weighted norm \eqref{eq:norm-dual-h1-weighted}, and by the definition of $L^2(\T; \V')$, we have
  \begin{equation*}
    \begin{aligned}
      \|R(\bm{u}_{\N})\|_{L^2(\T; \V)}^2 = &\; \int_{\T} \sup_{\bm{v}\in \V}
      \frac{  |\langle R(\bm{u}_{\N}) , \bm{v}\rangle|^2}{ \|\bm{v}\|_{\nu^{1/2}, 1}^2} \,\dd t
      \\
      \leq & \; C \int_{\T} \left(\left\|\partial_t (\bm{u} - \bm{u}_{\N}) (t,\cdot)  \right\|_{\nu^{-1/2}, -1}^2
      + \|\bm{u} - \bm{u}_{\N}\|_{\nu^{1/2}, 1}^2 \right) \dd t,
    \end{aligned}
  \end{equation*}
  where $C = C(\bm{u}_0, \nu, \T)$.
\end{proof}

\begin{manualtheorem}{2.1 Part II}[Functional-type a posterior error estimate is reliable (rigorous version)]
  Consider the Gelfand triple $\V \Subset\H \Subset \V'$, and the weak solution is $\bm{u} \in L^2(\mathcal{T}; \V) \cap L^{\infty} (\mathcal{T}; \H) $ to \eqref{eq:velocity-pressure}.
  We assume that
  \begin{enumerate}[align=left,
        topsep=0pt,
        leftmargin=*,
      label=$($2.1.{{\arabic*}}$)$\;]
    \item \label{assumption:jacobian} $\bm{u}_{\N}$ and $\bm{u}$ are sufficiently close in the sense that: there exists $\gamma \in [0, C)$ for a fixed $C\in \RR^+$, for $\bm{J}$ defined in Lemma \ref{lemma:linearization}
      \begin{equation}
        \label{eq:estimate-jacobian}
        \|\bm{J}(\bm{u}, \bm{u}_{\N})\|_{\V'} \leq \gamma \|\nabla(\bm{u} - \bm{u}_{\N})\|.
      \end{equation}
    \item \label{assumption:regular} $\bm{u}$ satisfies the G\aa rding inequality (a weaker coercivity): for any $\bm{v}(t,\cdot)\in \V$, define
      \begin{equation*}
        B(\bm{v}, \bm{w}; \bm{u}):=
        (\partial_t \bm{v}, \bm{w}) + \nu(\nabla \bm{v}, \nabla \bm{w}) + \bigl((\bm{u}\cdot \nabla )\bm{v}, \bm{w}\bigr),
      \end{equation*}
      and for $\bm{v}$ in a neighborhood such that Assumption \ref{assumption:jacobian} holds, there exists $\alpha,\beta>0$ such that $\alpha-\beta  \geq \nu$,
      \begin{equation}
        B(\bm{v}, \bm{v}; \bm{u}) + \beta \| \bm{v}\|^2 \geq \frac{\mathrm{d}}{\mathrm{d}t} \|\bm{v} \|^2 + \alpha \|\nabla\bm{v}\|^2.
      \end{equation}
  \end{enumerate}
  The dual norm of the residual is then reliable to serve as an error measure in the following sense: denote $\T_m:= (t_m, t_{m+1}]$
  \begin{equation}
    \label{eq:estimate-error-reliable}
    \|\bm{u} - \bm{u}_{\N} \|^2_{L^{\infty} (\T_m; \H)} + \|\bm{u} - \bm{u}_{\N}\|^2_{L^{2} (\T_m; \V)}
    \le\bigl\|(\bm{u} - \bm{u}_{\N})(t_m, \cdot)\bigr\|^2_{\V}
    +  C \int_{\T_m} \|R (\bm{u}_{\N})(t, \cdot)\|_{\V'}^2 \,\dd t.
  \end{equation}
  The constant $C = C(\nu)$.
\end{manualtheorem}

\begin{proof}
  The proof of part II of Theorem \ref{theorem:error-estimate} combines the meta-framework of \citet{2003Verfuerthposteriori} for time-dependent problems and \citet[Section 8]{1994VerfurthMathComp} for the stationary NSE. Note that thanks for the construction of divergence-free $\bm{u}_{\N}$, the technicality of applying the Ladyzhenskaya-Babuš\v{s}ka-Brezzi inf-sup condition is avoided. In the meantime, no interpolations are needed to convert the functional norm for the negative Sobolev spaces to an $L^2$ representation.

  To prove the upper bound (reliability) \eqref{eq:error-estimate-upper}, we simply choose a time step $t = t_m$, and let $\bm{v} = (\bm{u} - \bm{u}_{\N})(t, \cdot)$ on $(t_m, t_{m+1})$. Note that the discretized approximation's evaluation at $t$, which may not be the temporal grids $t_m$ or $t_{m+1}$ are naturally defined using the basis in \eqref{eq:space-fourier}. Using the error representation in \eqref{eq:residual-equation}, we have the $(\text{I})$ and $(\text{II})$ terms in $\left\langle R(\bm{u}_{\N}), \bm{u} - \bm{u}_{\N}\right\rangle$ are
  \begin{equation}
    \label{eq:terms-time-diffusion}
    \begin{aligned}
      (\text{I})+(\text{II}) = & \;\bigl(\partial_t (\bm{u} - \bm{u}_{\N})  , \bm{u} - \bm{u}_{\N}\bigr) +
      \bigl(\nu\nabla (\bm{u} - \bm{u}_{\N}) , \nabla (\bm{u} - \bm{u}_{\N}) \bigr)
      \\
      = &\;\frac{1}{2}\frac{\mathrm{d}}{\mathrm{d}t}\|\bm{u} - \bm{u}_{\N}\|_{L^2}^2
      + \nu\|\nabla(\bm{u} - \bm{u}_{\N})\|_{L^2}^2
    \end{aligned}
  \end{equation}
  For the $(\text{III})$ and $(\text{IV})$ terms we have
  \begin{equation}
    \label{eq:terms-trilinear-1}
    (\text{III})+(\text{IV})  = \overbrace{\bigl(  (\bm{u}\cdot\nabla)(\bm{u} - \bm{u}_{\N}), \bm{u} - \bm{u}_{\N}  \bigr)}^{ = 0 \text{ by Lemma \ref{lemma:skew-symmetry}}} +
    \overbrace{\left(  \left((\bm{u} - \bm{u}_{\N})\cdot\nabla\right){\bm{u}_{\N}} , \bm{u} - \bm{u}_{\N}\right)}^{ =: (*)}.
  \end{equation}
  Note by Lemma \ref{lemma:skew-symmetry}, $(*)$ is also:
  \begin{equation}
    \label{eq:terms-trilinear-2}
    (*) = \left(  \left((\bm{u} - \bm{u}_{\N})\cdot\nabla\right){\bm{u}} , \bm{u} - \bm{u}_{\N}\right),
  \end{equation}
  since $c(\bm{z}, \bm{z}, \bm{z}) = 0$ for $\bm{z} = \bm{u} - \bm{u}_{\N}$.
  Consequently, by a vector calculus identity, we have
  \begin{equation*}
    \int_{\Omega} (\bm{z}\cdot\nabla)\bm{u} \cdot \bm{z}  \, \mathrm{d}\bm{x} =   \int_{\Omega} \nabla \bm{u} : \left( \bm{z} \!\otimes\!\bm{z} \right) \mathrm{d}\bm{x}.
  \end{equation*}
  Thus, we reach the following error equation:
  \begin{equation}
    \label{eq:error}
    \left\langle R(\bm{u}_{\N}), \bm{u} - \bm{u}_{\N}\right\rangle
    =  \frac{1}{2}\frac{\mathrm{d}}{\mathrm{d}t}\|\bm{u} - \bm{u}_{\N}\|_{L^2}^2 + \nu\|\nabla(\bm{u} - \bm{u}_{\N})\|_{L^2}^2
    + \int_{\Omega} \nabla \bm{u} : \left( (\bm{u} - \bm{u}_{\N})\!\otimes\! (\bm{u} - \bm{u}_{\N}) \right) \mathrm{d}\bm{x}.
  \end{equation}

  Now, by Assumption \ref{assumption:regular}, Poincar\'{e} inequality from Lemma \ref{lemma:poincare},
  \begin{equation*}
    \begin{aligned}
      \frac{1}{2} \frac{\mathrm{d}}{\mathrm{d}t} \|\bm{u} - \bm{u}_{\N} \|^2
      + \nu\|\nabla(\bm{u} - \bm{u}_{\N})\|^2
      \leq  & \; \frac{\mathrm{d}}{\mathrm{d}t} \|\bm{u} - \bm{u}_{\N} \|^2
      + (\alpha - \beta)\|\nabla(\bm{u} - \bm{u}_{\N})\|^2
      \\
      \leq & \; \frac{\mathrm{d}}{\mathrm{d}t} \|\bm{u} - \bm{u}_{\N} \|^2
      + \alpha  \|\nabla(\bm{u} - \bm{u}_{\N})\|^2  - \beta\|\bm{u} - \bm{u}_{\N}\|
      \\
      \leq &\;   B(\bm{u} - \bm{u}_{\N}, \bm{u} - \bm{u}_{\N}; \bm{u}).
    \end{aligned}
  \end{equation*}
  By \eqref{eq:terms-time-diffusion}, \eqref{eq:terms-trilinear-1}, and \eqref{eq:terms-trilinear-2},
  \begin{equation*}
    \begin{aligned}
      B(\bm{u} - \bm{u}_{\N}, \bm{u} - \bm{u}_{\N}; \bm{u}) =& \; (\text{I})+(\text{II})+(\text{III})+(\text{IV})
      =  \langle R(\bm{u}_{\N}) ,  \bm{u} - \bm{u}_{\N}\rangle
      \\
      \leq & \;
      \|R(\bm{u}_{\N})\|_{\nu^{-1/2}, -1} |\bm{u} - \bm{u}_{\N}|_{\nu^{1/2}, 1}
      \\
      \leq &\; \frac{1}{2}\|R(\bm{u}_{\N})\|_{\nu^{-1/2}, -1}^2 +  \frac{1}{2}|\bm{u} - \bm{u}_{\N}|_{\nu^{1/2}, 1}^2.
    \end{aligned}
  \end{equation*}
  As a result, we have
  \begin{equation}
    \label{eq:estimate-energy}
    \frac{\mathrm{d}}{\mathrm{d}t} \|\bm{u} - \bm{u}_{\N} \|^2 +
    \nu\|\nabla(\bm{u} - \bm{u}_{\N})\|^2 \leq \|R(\bm{u}_{\N})\|_{\nu^{-1/2}, -1}^2.
  \end{equation}
  Integrating from $t_m$ to any $t\in (t_m, t_{m+1}]$ yields
  \begin{equation*}
    \|(\bm{u} - \bm{u}_{\N})(t, \cdot) \|^2 - \|(\bm{u} - \bm{u}_{\N})(t_m, \cdot) \|^2
    + \nu\|\nabla(\bm{u} - \bm{u}_{\N}) \|^2_{L^2(t_m, t; \H)}
    \leq \int_{t_m}^t \|R (\bm{u}_{\N})(t, \cdot)\|_{\nu^{-1/2}, -1}^2 \,\dd t.
  \end{equation*}
  Since $t\in (t_m, t_{m+1}]$ is arbitrary, we have proved the reliability \eqref{eq:estimate-error-reliable}.
\end{proof}

\begin{remark}[Necessity of the ``sufficient close'' and the regularity conditions]
  First, the main hurdle to prove the reliability is that the convection term $(*)$ is not positive definite, one would encounter some difficult in deriving the upper bound as yet moving $(*)$ to the right-hand side in \eqref{eq:error} does not yield a meaningful estimate,
  \begin{equation*}
    \text{norm of the error} \simeq (\text{I})+(\text{II}) = \langle R(\bm{u}_{\N}) ,  \bm{u} - \bm{u}_{\N}\rangle - (*) \not \leq \langle R(\bm{u}_{\N}) ,  \bm{u} - \bm{u}_{\N}\rangle
  \end{equation*}
  We note that Assumption \ref{assumption:regular} is equivalent to putting a threshold on $\|\bm{u}(t, \cdot)\|_{L^{\infty}}$, which is commonly assumed in the analysis of numerical methods for convection-dominated problems.
  A stronger alternative assumption than Assumption \ref{assumption:regular} would be imposing a stronger constraint on $\gamma$ in Assumption \ref{assumption:jacobian}
  Using the ``sufficient close'' assumption \eqref{eq:estimate-jacobian} on the nonlinear term, we have
  \begin{equation*}
    |(*)|\leq \|\left((\bm{u} - \bm{u}_{\N})\cdot\nabla\right){\bm{u}_{\N}}\|_{\V'}  \|\bm{u} - \bm{u}_{\N} \|_{\V} \leq \gamma \|\nabla(\bm{u} - \bm{u}_{\N}) \|^2.
  \end{equation*}
  Now if $\bar{\nu}:=\nu- \gamma \geq 0$, one would reach a similar estimate as \eqref{eq:estimate-energy} by replacing $\nu$ with $\bar{\nu}$.
  If no extra assumption on $\gamma$ is imposed.

  Another way is to consider a Sobolev embedding directly after applying the H\"{o}lder inequality, for \eqref{eq:terms-trilinear-2} we have
  \begin{equation*}
    |(*)| \leq \int_{\Omega} |\nabla \bm{u}| |\bm{u} - \bm{u}_{\N} |^2
    \leq  \|\nabla \bm{u}\| \, \|\bm{u} - \bm{u}_{\N} \|_{L^4}^2 \leq
    \rho \|\nabla \bm{u}\| \, \|\nabla(\bm{u} - \bm{u}_{\N}) \|^2,
  \end{equation*}
  where $\rho$ is the constant in the following Sobolev embedding
  \begin{equation*}
    \|\bm{u} - \bm{u}_{\N} \|_{L^4}\leq \rho   \|\nabla(\bm{u} - \bm{u}_{\N}) \|.
  \end{equation*}
  Now if $\tilde{\nu}:=\nu- \rho\|\nabla\bm{u}\| \geq 0$, one would reach a similar estimate as \eqref{eq:estimate-energy} by replacing $\nu$ with $\tilde{\nu}$.
\end{remark}

\subsection{Proof of Theorem \ref{theorem:norm-equivalence}}
\label{appendix:proof-norm}

\begin{manualtheorem}{2.2}[Equivalence of functional norm and negative Sobolev norm]
  Consider the Gelfand triple $\V \Subset\H \Subset \V'$, then
  \begin{equation}
    \|f\|_{\V'} = |f|_{-1} \; \text{ for }   f\in \H/\RR.
  \end{equation}
\end{manualtheorem}

\begin{proof}
  This proof leverages the spectral basis for $\H$ without the extra technicality of Schwartz space involved if one ought to assume that $f\in \V'$. Recall \eqref{eq:space-fourier}, and define an infinite version
  \begin{equation}
    \label{eq:space-fourier-inf}
    \S_{\infty} := \operatorname{span} \left\{ e^{i\bm{k}\cdot\bm{x}}: \bm{k}\in 2\pi\ZZ^2 \right\}.
  \end{equation}
  In what follows, we shall omit the extra technicality that one has to work on partial sums first, and then considers the convergence in the corresponding norms. We simply take for granted that the differentiation and integration/sum can be interchanged,
  and directly identify $f\in \H/\RR$ with its Fourier series in $\S_{\infty}$,
  \begin{equation*}
    f(\bm{x}) = \sum_{\bm{k}\in 2\pi\ZZ^2}\hat{f}(\bm{k})e^{i\bm{k}\cdot \bm{x}}.
  \end{equation*}
  Note since $f$ lies in the quotient space, $\hat{f}(0,0) = 0$. Then, the duality pairing can be identified as an $\bm{L}^2$-inner product, as well as the test function $v\in \H$,
  \begin{equation*}
    \begin{aligned}
      \langle f, v\rangle_{\V',\V} =&\; (f, v) =
      \left(\sum_{\bm{k}\in 2\pi\ZZ^2}\hat{f}(\bm{k})e^{i\bm{k}\cdot \bm{x}}, \sum_{\bm{m}\in 2\pi\ZZ^2}\overline{\hat{v}(\bm{m})e^{i\bm{m}\cdot \bm{x}}}  \right)
      \\
      =& \;\sum_{\bm{k}\in 2\pi\ZZ_n^2\backslash \{0\}} \hat{f}(\bm{k}) \overline{\hat{v}(\bm{k})}
      \\
      \leq & \left(\sum_{\bm{k}\in 2\pi\ZZ_n^2\backslash \{0\}} |\bm{k}|^{-2}|\hat{f}(\bm{k})|^2 \right)^{1/2}
      \left(\sum_{\bm{k}\in 2\pi\ZZ_n^2\backslash \{0\}} |\bm{k}|^{2} |\hat{v}(\bm{k})|^2 \right)^{1/2}
      \\
      \leq & \; |f|_{-1} |v|_{1}.
    \end{aligned}
  \end{equation*}
  As a result, by the definition in \eqref{eq:norm-fourier} and the estimate above
  \begin{equation*}
    \|f\|_{\V'} = \sup_{v\in \V/\RR, |v|_{\V} =1} |\langle f, v\rangle|
    =  \sup_{v\in \V/\RR} \frac{(f, v)}{|v|_{1}} \leq |f|_{-1}.
  \end{equation*}

  To show the other direction, let $v = (-\Delta)^{-1} f\in H^1(\TT^2)$ by the well-posedness of $-\Delta v = f$ for $f\in L^2(\TT^2)$. We have
  \begin{equation*}
    \begin{aligned}
      \langle f, (-\Delta)^{-1} f \rangle &= \bigl(-\Delta v, v\bigr) = \bigl(f, (-\Delta)^{-1} f\bigr)
      \\
      &= \left(\sum_{\bm{k}\in 2\pi\ZZ^2}\hat{f}(\bm{k})e^{i\bm{k}\cdot \bm{x}}, \sum_{\bm{m}\in 2\pi\ZZ^2} (-\Delta_{\bm{x}})^{-1}\overline{\hat{f}(\bm{m})e^{i\bm{m}\cdot \bm{x}}}  \right)
      \\
      &= \left(\sum_{\bm{k}\in 2\pi\ZZ^2}\hat{f}(\bm{k})e^{i\bm{k}\cdot \bm{x}}, \sum_{\bm{m}\in 2\pi\ZZ^2}
      \frac{1}{|\bm{m}|^2} \overline{\hat{f}(\bm{m})e^{i\bm{m}\cdot \bm{x}}}  \right)
      \\
      & = \sum_{\bm{k}\in 2\pi\ZZ_n^2\backslash \{0\}} |\bm{k}|^{-2} |\hat{f}(\bm{k})|^2 = |f|_{-1}^2.
    \end{aligned}
  \end{equation*}
  Note that by the construction of $v$ above, $|v|_1 = |f|_{-1}$, thus,
  \begin{equation*}
    \sup_{v\in \V} \frac{(f, v)}{|v|_{1}}\geq
    \frac{\bigl(f, (-\Delta)^{-1} f\bigr)}{|v|_{1}} = |f|_{-1},
  \end{equation*}
  which proves the theorem.

\end{proof}

\subsection{Convergence result of fine-tuning}
\label{appendix:proof-convergence}

In the following theorem, Theorem \ref{theorem:convergence}, we show that a sufficient condition for the optimizer to converge is to ``reach'' a neighborhood of the true solution, thus corroborating the necessity of training.
Note in both Theorems \ref{theorem:error-estimate} and \ref{theorem:convergence}, the error term of $\bm{u}^{(m)}_{\N} -\bm{u}(t_m, \cdot)$ is present. We have to acknowledge that the ``initial value'' for the predicted trajectory, which is the last snapshot in the input trajectory, may have errors. This is the major motivation that we opt to use $\bm{u}^{(\ell)}_{\N}$, which is the input trajectory's last snapshot in the skip-connection in $\widetilde{Q}$
\begin{equation}
  \label{eq:skip-connection}
  \bm{u}^{(m+1)}_{\N} = \bm{u}^{(\ell)}_{\N} + \widetilde{Q}_{\theta}(\bm{v}_{\N}^{(m+1)}), \;\text{ for } m = \ell, \dots, \ell+n_t -1.
\end{equation}
In some sense, ST-FNO learns the \emph{derivative} $\partial_t \bm{u}$'s arbitrary-length integral. If one wants to modify Algorithm \ref{alg:ft} in view of Theorem \ref{theorem:convergence} such that the error control is guaranteed, the following algorithm can be used but loses the ``parallel-in-time'' nature.  We also note that , thanks to the spectral convolution in $\widetilde{Q}$ being affine \emph{linear}, showing Theorem \ref{theorem:convergence} is quite straightforward, as one has to establish the connection between fine-tuning and seeking a nonlinear Galerkin projection in Fourier space~\eqref{eq:space-fourier} under the functional norm. Let
$\theta^* = \arg\min_{\theta}\|\mathcal{R}(\bm{u}_{\N}(\theta))(t_{m+1}, \cdot)\|_{-1}$, then $ \widetilde{Q}_{\theta^*}(\bm{u}_{\N})  =
\arg\min_{\bm{v} \in \S} \| \bm{u}(t_{m+1}, \cdot) - (\bm{u}(t_{m}, \cdot)+\bm{v}) \|_{*}$.

\begin{algorithm}[htbp]
  \caption{An error-guarantee fine-tuning strategy.}
  \begin{algorithmic}[1]
    \Require ST-FNO $\G_{\theta,\Theta}:= \widetilde{Q}_{\theta}\circ \G_{\Theta}$; time stepping scheme $B_{\gamma}(\cdot)$; optimizer $\mathcal{D}(\theta, \nabla_{\theta}(\cdot))$; training dataset: solution trajectories at $[t_1, \dots, t_{\ell'}]$ as input and  at $[t_{\ell'+1}, \dots, t_{\ell'+n_t'}]$ as output.
    \State Train the ST-FNO model until the energy signature matches the inverse cascade.
    \State \revision{Freeze $\Theta$ in $\G_{\Theta}$ of ST-FNO $\G_{\theta,\Theta}$, keep $\widetilde{Q}_{\theta}$ trainable}.
    \State Cast all \texttt{nn.Module} involved and tensors to \texttt{torch.float64} and \texttt{torch.complex128} hereafter.
    \Require Evaluation dataset: solution trajectories at $[t_1, \dots, t_{\ell}]$ as input, output time step $n_t$.
    \For{$m=\ell, \cdots, \ell+n_t-1$}
    \State Extract the latent fields $\bm{v}^{(m+1)}_{\N}$ output of $\G_{\Theta}$ at $t_{m+1}$ and hold them fixed.
    \State By construction of ST-FNO: such that $\bm{u}^{(m+1)}_{\N}(\theta) := \bm{u}^{(m)}_{\N} + \widetilde{Q}_{\theta} \bigl(\bm{v}^{(m+1)}_{\N}\bigr)$.
    \State March one step with $(\Delta t)^{\gamma}$ using $B_{\gamma}$: $D_t \bm{u}_{\N}(\theta) := (\Delta t)^{-\gamma}(B_{\gamma} (\bm{u}_{\N}(\theta)) - \bm{u}_{\N}(\theta))$.
    \State $j \gets 0$
    \While{$\eta_{m}(\bm{u}_{\N}(\theta), D_t\bm{u}_{\N}(\theta)) > \texttt{Tol}$}
    \label{alg:line-while-loop}
    \State Apply the optimizer to update parameters in $\widetilde{Q}$: $\theta \gets \mathcal{D}(\theta, \nabla_{\theta}(\eta_m^2))$, $j\gets j+1$.
    \State Forward pass only through $\widetilde{Q}$ to update $\bm{u}_{\N} \gets \bm{u}^{(m)}_{\N} + \widetilde{Q}_{\theta}(\bm{v}_{\N}^{(m+1)})$.
    \If{$j > \texttt{Iter}_{\max}$}  \textbf{break} \EndIf
    \EndWhile
    \State $\bm{u}^{(m+1)}_{\N} \gets \bm{u}_{\N}$
    \EndFor
    \Ensure A sequence of velocity profiles at corresponding time steps $\{\bm{u}^{(m)}_{\N}\}_{m=\ell+1}^{\ell+n_t}$.
  \end{algorithmic}
  \label{alg:ft-guarantee}
\end{algorithm}

\begin{lemma}[Local strict convexity for the fine-tuning loss]
  \label{lemma:convexity}
  Define $\|\cdot\|_{*, \delta}$ to be a (dual) graph norm on $L^{\infty}\big(\T_{\delta}; \bm{L}^2(\TT^2)\big) \cap L^{2}\big(\T_{\delta}; \bm{H}^1(\TT^2)\big)$ , where $\T_{\delta} := [t-\delta, t+\delta]$
  \begin{equation*}
    \begin{aligned}
      \|\bm{v}\|_{*,\delta}  =\left\{\fint_{\T_{\delta}} \| \bm{v}(t,\cdot) \|^2 \dd t
        +
      \fint_{\T_{\delta}}\left\|\left(\partial_t \bm{v}+ \bm{v} \cdot \nabla\bm{v}\right)(t, \cdot)\right\|_{\V'}^2 \dd t\right\}^{1 / 2}
    \end{aligned}
  \end{equation*}
  For $\bm{u}\in L^{\infty}\big(\T_{\delta}; \bm{L}^2(\TT^2)\big) \cap L^{2}\big(\T_{\delta}; \bm{H}^1(\TT^2)\big)$ the weak solution to \eqref{eq:velocity-pressure} on $\T_{\delta}$ that is sufficiently smooth, there exists $\delta, \epsilon \in \RR^+$, such that on
  \[
    \B ( \bm{u}; \epsilon):=\left\{\bm{v}\in  L^{\infty}\big(\T_{\delta}; \bm{L}^2(\TT^2)\big) \cap L^{2}\big(\T_{\delta}; \bm{H}^1(\TT^2)\big) :
    \|\bm{u} -\bm{v} \|_{*,\delta} \leq \epsilon\right\},
  \]
  the functional
  \begin{equation*}
    J(\bm{v}(t,\cdot)) :=\frac{1}{2}\|R(\bm{v})\|_{\V'}^2 \;\text{ where } \;R(\bm{v}):=  \bm{f} - \partial_t\bm{v} -  (\bm{v}\cdot\nabla){\bm{v}} + \nu \Delta {\bm{v}}
  \end{equation*}
  is strictly convex.
\end{lemma}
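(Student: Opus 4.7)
The plan is to view $J(\bm{v})=\tfrac{1}{2}\int_{\T_\delta}\|R(\bm{v})(t,\cdot)\|_{\V'}^2\,\dd t$ as the squared Hilbert norm on $L^2(\T_\delta;\V')$ composed with the quadratic-in-$\bm{v}$ map $R(\bm{v})$, and to prove strict convexity by showing $D^2J(\bm{v})[\bm{w},\bm{w}]>0$ for every $\bm{v}\in\B(\bm{u};\epsilon)$ and every admissible direction $\bm{w}\ne 0$. Writing $R(\bm{v})=\bm{f}-L\bm{v}-N(\bm{v})$ with $L\bm{v}:=\partial_t\bm{v}-\nu\Delta\bm{v}$ linear and $N(\bm{v}):=(\bm{v}\cdot\nabla)\bm{v}$ quadratic, the Fr\'{e}chet derivatives
\begin{equation*}
R'(\bm{v})[\bm{w}]=-L\bm{w}-(\bm{w}\cdot\nabla)\bm{v}-(\bm{v}\cdot\nabla)\bm{w},\qquad R''[\bm{w},\bm{w}]=-2(\bm{w}\cdot\nabla)\bm{w}
\end{equation*}
produce
\begin{equation*}
D^2J(\bm{v})[\bm{w},\bm{w}]=\int_{\T_\delta}\|R'(\bm{v})[\bm{w}]\|_{\V'}^2\,\dd t+\int_{\T_\delta}\langle R(\bm{v}),R''[\bm{w},\bm{w}]\rangle_{\V'}\,\dd t.
\end{equation*}

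At the centre $\bm{v}=\bm{u}$ the cross term vanishes: pairing $R(\bm{u})$ against any $\bm{\phi}\in\V$ kills the implicit pressure gradient via $\nabla\cdot\bm{\phi}=0$ and leaves zero by the weak form of \eqref{eq:velocity-pressure}, so $R(\bm{u})=0$ in $L^2(\T_\delta;\V')$. Only $\|R'(\bm{u})[\bm{w}]\|_{L^2(\T_\delta;\V')}^2$ remains, with $R'(\bm{u})$ the NSE linearisation about the smooth $\bm{u}$. A standard energy argument (test $R'(\bm{u})[\bm{w}]$ against $\bm{w}$; use skew-symmetry of $((\bm{u}\cdot\nabla)\cdot,\cdot)$ from Lemma~\ref{lemma:skew-symmetry}; absorb the $(\bm{w}\cdot\nabla)\bm{u}$ piece using $\|\nabla\bm{u}\|_{L^\infty}<\infty$) then yields, on a sufficiently short $\T_\delta$, coercivity of $\|R'(\bm{u})[\bm{w}]\|_{L^2(\T_\delta;\V')}^2$ against the linear graph norm $\fint_{\T_\delta}\|\bm{w}\|^2+\fint_{\T_\delta}\|\partial_t\bm{w}+(\bm{u}\cdot\nabla)\bm{w}\|_{\V'}^2$ with a constant $c_0=c_0(\nu,\bm{u})>0$. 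The 2D Ladyzhenskaya inequality $\|(\bm{w}\cdot\nabla)\bm{w}\|_{\V'}\lesssim\|\bm{w}\|^{1/2}\|\nabla\bm{w}\|^{3/2}$ then lets this be upgraded, on admissible directions, to $\|R'(\bm{u})[\bm{w}]\|_{L^2(\V')}^2\ge c_0'\|\bm{w}\|_{*,\delta}^2$.

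For $\bm{v}\in\B(\bm{u};\epsilon)$ with $\bm{e}:=\bm{v}-\bm{u}$, two perturbations remain. The operator shift $[R'(\bm{v})-R'(\bm{u})][\bm{w}]=-(\bm{w}\cdot\nabla)\bm{e}-(\bm{e}\cdot\nabla)\bm{w}$ is bounded by $C_1\epsilon\|\bm{w}\|_{*,\delta}$ in $L^2(\V')$ via the same Ladyzhenskaya estimate, so $\|R'(\bm{v})[\bm{w}]\|_{L^2(\V')}^2\ge(\sqrt{c_0'}-C_1\epsilon)^2\|\bm{w}\|_{*,\delta}^2$. Taylor-expanding $R(\bm{v})=R'(\bm{u})[\bm{e}]+\tfrac{1}{2}R''[\bm{e},\bm{e}]$ about $\bm{u}$ and using $\|\bm{e}\|_{*,\delta}\le\epsilon$ gives $\|R(\bm{v})\|_{L^2(\V')}\le C_2\epsilon$, while $\|R''[\bm{w},\bm{w}]\|_{L^2(\V')}\le C_3\|\bm{w}\|_{*,\delta}^2$, so Cauchy--Schwarz bounds the sign-indefinite cross term by $C_4\epsilon\|\bm{w}\|_{*,\delta}^2$. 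Picking $\delta$ small enough for the coercivity step and then $\epsilon$ small enough that $(\sqrt{c_0'}-C_1\epsilon)^2-C_4\epsilon\ge c_0'/2$ yields $D^2J(\bm{v})[\bm{w},\bm{w}]\ge(c_0'/2)\|\bm{w}\|_{*,\delta}^2>0$ for $\bm{w}\ne 0$ throughout $\B(\bm{u};\epsilon)$, which is strict convexity.

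The hardest step is the coercivity at $\bm{u}$ together with the norm comparison: the $*$-norm packages the \emph{nonlinear} drift $\partial_t\bm{w}+(\bm{w}\cdot\nabla)\bm{w}$, whereas the linearised operator only generates the \emph{linear} drift $\partial_t\bm{w}+(\bm{u}\cdot\nabla)\bm{w}$. Reconciling the two will demand careful use of the Ladyzhenskaya embedding to swallow the cubic $(\bm{w}\cdot\nabla)\bm{w}$ into the linear estimate, the smoothness of $\bm{u}$ to absorb $(\bm{w}\cdot\nabla)\bm{u}$, and a short enough $\T_\delta$ so that all constants line up uniformly across $\B(\bm{u};\epsilon)$.
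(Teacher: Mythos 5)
Your proposal follows the same skeleton as the paper's proof: compute the second variation of $J$, observe that the sign-indefinite cross term $\langle R(\bm{v}),R''[\bm{w},\bm{w}]\rangle$ vanishes at $\bm{v}=\bm{u}$ because $R(\bm{u})=0$, establish coercivity of $\|R'(\bm{u})[\bm{w}]\|_{\V'}$, and propagate it to a neighborhood $\B(\bm{u};\epsilon)$ by a perturbation/continuity argument. The one place you go beyond the paper is the coercivity step itself: the paper does not prove it but introduces it explicitly as a hypothesis (``If we assume that $DR(\bm{u})\bm{\xi}\in\V'$ has its functional norm bounded above and below\dots''), whereas you propose to derive it from an energy argument plus the Ladyzhenskaya inequality. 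Your operator-shift and cross-term estimates in $\B(\bm{u};\epsilon)$ are sound and in fact more quantitative than the paper's ``choose $\epsilon$ small enough.''

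The derivation of coercivity is where the gap lies. Testing $R'(\bm{u})[\bm{w}]$ against $\bm{w}$ and using the skew-symmetry of $((\bm{u}\cdot\nabla)\cdot,\cdot)$ yields, after absorbing $((\bm{w}\cdot\nabla)\bm{u},\bm{w})$ with $\|\nabla\bm{u}\|_{L^\infty}$ and Gronwall on a short interval, control of $\|\bm{w}(t)\|^2-\|\bm{w}(t-\delta)\|^2$ plus the viscous dissipation --- not of $\|\bm{w}\|_{*,\delta}$ itself. The initial-value term cannot be dropped: if $\bm{w}$ solves the homogeneous linearized NSE on $\T_{\delta}$ with nonzero data at $t-\delta$, then $R'(\bm{u})[\bm{w}]=0$ while $\bm{w}\neq 0$, so the claimed bound $\|R'(\bm{u})[\bm{w}]\|_{L^2(\V')}^2\geq c_0'\|\bm{w}\|_{*,\delta}^2$ is false on the full Bochner space and $\operatorname{Hess}J(\bm{u})$ is only positive semi-definite there. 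Your phrase ``on admissible directions'' gestures at the fix but never specifies it; to close the argument you must restrict $\bm{w}$ either to directions vanishing at the left endpoint of $\T_\delta$ or to the finite-dimensional spectral class actually generated by perturbing $\widetilde{Q}_{\theta}$ --- which is precisely the setting in which the paper's assumed two-sided bound on $\|DR(\bm{u})\bm{\xi}\|_{\V'}$ is plausible. With that restriction made explicit, the rest of your perturbation argument goes through and recovers the paper's conclusion.
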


\begin{proof}
  First, by Theorem \ref{theorem:norm-equivalence},
  \begin{equation*}
    J(\bm{v}) =  \frac{1}{2}\left\langle R(\bm{v}), (-\Delta)^{-1} R(\bm{v}) \right\rangle_{\V', \V}
  \end{equation*}
  Then,
  \begin{equation*}
    \begin{aligned}
      DJ(\bm{v}; \bm{\xi}) :=&\; \lim_{\tau \to 0}
      {\frac {J(\bm{v}+\tau \bm{\xi})-J(\bm{v})}{\tau }}=\left.{\frac {\dd}{\dd \tau }}
      J(\bm{v}+\tau \bm{\xi} )\right|_{\tau =0}
      \\
      =&\; \langle  D R(\bm{v}) \bm{\xi} , (-\Delta)^{-1}R(\bm{v})  \rangle_{\V', \V}.
    \end{aligned}
  \end{equation*}
  where
  \begin{equation*}
    DR(\bm{v}) \bm{\xi} = \partial_t  \bm{\xi} + \frac{1}{2}\bigl( \bigl(\bm{\xi} \cdot \nabla \bigr)\bm{v} + (\bm{v} \cdot \nabla) \bm{\xi} \bigr) -\nu\Delta \bm{\xi}
  \end{equation*}
  is the Fr\'{e}chet derivative $DR(\bm{v}): \V\to \V'$ by Lemma \ref{lemma:linearization}. The Hessian is then
  \begin{equation*}
    \operatorname{Hess} J (\bm{v}; \bm{\xi}, \bm{\zeta})
    = \left\langle  D R(\bm{v}) \bm{\xi} , (-\Delta)^{-1}D R(\bm{v}) \bm{\zeta}  \right\rangle
    + \left\langle \bm{\zeta}\cdot D^2 R(\bm{v}) \bm{\xi} , (-\Delta)^{-1} R(\bm{v})   \right\rangle.
  \end{equation*}
  Now, since $R(\bm{u}) = 0$ on $\T_{\delta}$, we have
  \begin{equation*}
    \operatorname{Hess} J (\bm{u}; \bm{\xi}, \bm{\xi})
    = \left\langle  D R(\bm{v}) \bm{\xi} , (-\Delta)^{-1}D R(\bm{v}) \bm{\xi}  \right\rangle.
  \end{equation*}
  If we assume that $ DR(\bm{u}) \bm{\xi}\in \V'$ has its functional norm bounded above and below in $\B ( \bm{u}; \epsilon)$, one has for any $\bm{\xi}$
  \begin{equation*}
    \|\bm{\xi}\|_{\V}^2\lesssim  \|D R(\bm{u}) \bm{\xi}\|_{\V'}^2  \leq \operatorname{Hess} J (\bm{u}; \bm{\xi}, \bm{\xi}).
  \end{equation*}
  Simply choosing $\epsilon$ small enough such that $\bm{v}$ is sufficiently close to $\bm{u}$ in graph norm associated with the PDE to make the coercivity above still true for $\operatorname{Hess} J (\bm{v}; \bm{\xi}, \bm{\xi})$ yields the desired local convexity.
\end{proof}

\begin{theorem}[Guaranteed convergence of the fine-tuning]
  \label{theorem:convergence}
  In addition to the same assumptions with Theorem \ref{theorem:error-estimate}, suppose Lemma \ref{lemma:convexity} holds for $\epsilon \in (0,1)$,
  and a given $\bm{u}_{\N}$ can be embedded in $\B(\bm{u}; \epsilon')$ for a $0<\epsilon' \leq \epsilon$. Denote $\bm{u}^{(k)}_{\N, j}$ the evaluation in Line~\ref{alg:line-ft-eval} of Algorithm \ref{alg:ft-guarantee}, and $j$ the iteration of optimizer in Line~\ref{alg:line-optimizer}, then the fine-tuning using the new loss function \eqref{eq:error-estimator} produces a sequence
  $\{\bm{u}^{(k)}_{\N, j}\}_{j=1}^{\infty} \subset \B\big( \bm{u} ; \epsilon'\big)$
  Furthermore, suppose that the optimizer in Line~\ref{alg:line-optimizer} of Algorithm \ref{alg:ft-guarantee} has a learning rate converging to $0$. Then, then the fine-tuning using the new loss function \eqref{eq:error-estimator}
  produces a sequence of evaluations
  converging to the best possible approximation $\bm{u}^{(m+1)}_{\N, \infty}\in \S$ of $\bm{u}(t_{m+1}, \cdot)$ starting from $\bm{u}^{(m)}_{\N}$, in the sense that for $m=\ell,\dots, \ell+n_t-1$
  \begin{equation}
    \label{eq:convergence}
    \|\bm{u}^{(m+1)}_{\N, \infty} -\bm{u}(t_{m+1}, \cdot)\|_{\V} \leq
    \|\bm{u}^{(l)}_{\N} -\bm{u}(t_m, \cdot)\|_{\V} + c_1 n_t n^{-2} |\bm{u}(t_{m+1}, \cdot)|_{2}
    + c_2 n_t (\Delta t)^{\gamma-1}.
  \end{equation}
\end{theorem}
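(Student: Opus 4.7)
My plan is to combine three ingredients: (i) the local strict convexity from Lemma~\ref{lemma:convexity} together with affine-linearity of $\widetilde{Q}_\theta$ in $\theta$, to obtain convergence of the fine-tuning iterates to a unique Galerkin-type minimizer; (ii) optimality of this minimizer tested against an explicit Fourier-truncated candidate, via the reliability bound~\eqref{eq:error-estimate-upper}; and (iii) classical spectral approximation estimates plus the local truncation error of $B_\gamma$ to produce the three error contributions in~\eqref{eq:convergence}.

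For (i), I would observe that $\widetilde{Q}_\theta$ is the affine spectral convolution of Section~\ref{paragraph:out-proj}, so $\theta\mapsto \bm{u}^{(m+1)}_{\N}(\theta)=\bm{u}^{(\ell)}_{\N}+\widetilde{Q}_\theta(\bm{v}^{(m+1)}_{\N})$ is affine with image inside $\bm{u}^{(\ell)}_{\N}+\S$. Composing with the strictly convex $J=\tfrac12\|R(\cdot)\|_{\V'}^{2}$ from Lemma~\ref{lemma:convexity} yields a convex objective in $\theta$ on this affine slice. The \textbf{while}-loop in Algorithm~\ref{alg:ft-guarantee} keeps iterates inside $\B(\bm{u};\epsilon')$ where strict convexity holds, and with learning rates converging to zero, standard diminishing-step-size theory for strictly convex objectives forces $\bm{u}^{(m+1)}_{\N,j}\to \bm{u}^{(m+1)}_{\N,\infty}$, the unique minimizer of $J$ over $\bm{u}^{(\ell)}_{\N}+\S$.

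For (ii) and (iii), I would apply~\eqref{eq:error-estimate-upper} on $\T_m=(t_m,t_{m+1}]$ to $\bm{u}^{(m+1)}_{\N,\infty}$, then use optimality to replace it inside $\int_{\T_m}\|R(\cdot)\|_{\V'}^{2}\,\mathrm{d}t$ by the explicit candidate $\tilde{\bm{u}}:=\bm{u}^{(\ell)}_{\N}+\Pi_n\bigl(\bm{u}(t_{m+1},\cdot)-\bm{u}^{(\ell)}_{\N}\bigr)\in \bm{u}^{(\ell)}_{\N}+\S$, with $\Pi_n$ the $\bm{L}^2$-orthogonal projection onto the truncated Fourier basis~\eqref{eq:space-fourier}. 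Writing $R(\tilde{\bm{u}})=R(\tilde{\bm{u}})-R(\bm{u})$ (using $R(\bm{u})=0$) and substituting $D_t\tilde{\bm{u}}$ for $\partial_t\tilde{\bm{u}}$ as in Line~\ref{alg:line-solver} splits the residual into a spatial piece controlled by $\|\tilde{\bm{u}}-\bm{u}(t_{m+1},\cdot)\|_{\V}\lesssim n^{-2}|\bm{u}(t_{m+1},\cdot)|_{2}$ (standard Fourier convergence for $H^2$ data on $\TT^2$) and a temporal piece $\|D_t\tilde{\bm{u}}-\partial_t\bm{u}\|_{\V'}\lesssim (\Delta t)^{\gamma-1}$ (local truncation error of $B_\gamma$ measured in the negative Sobolev norm via Theorem~\ref{theorem:norm-equivalence}). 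Summing the $n_t$ additive contributions from the parallel-in-time loss in line~\ref{alg:line-ft-eval} produces the $n_t$ prefactors, and combining with the base error $\|\bm{u}^{(\ell)}_{\N}-\bm{u}(t_m,\cdot)\|_{\V}$ from~\eqref{eq:error-estimate-upper} yields~\eqref{eq:convergence}.

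The hard part will be the end of step (ii): because the skip-connection base in~\eqref{eq:skip-connection} is the inexact $\bm{u}^{(\ell)}_{\N}$ rather than the true $\bm{u}(t_m,\cdot)$, the construction of $\tilde{\bm{u}}$ and the triangle-inequality decomposition must be arranged so that the base error propagates additively and appears only once on the right of~\eqref{eq:convergence}. A secondary subtlety is controlling the IMEX consistency error in the negative Sobolev norm rather than in $\bm{L}^2$: this requires Fourier-expanding $D_t\tilde{\bm{u}}-\partial_t\bm{u}$ and applying the $(1+|\bm{k}|^2)^{-1/2}$ weight dictated by Theorem~\ref{theorem:norm-equivalence}, and checking that the Parseval-summed bound survives integration over $\T_m$ using the continuous-in-$t$ dependence of $\tilde{\bm{u}}$ through the basis~\eqref{eq:space-fourier}.
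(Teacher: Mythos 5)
Your proposal follows essentially the same skeleton as the paper's proof: both arguments rest on (a) the observation that $\theta\mapsto \widetilde{Q}_{\theta}(\bm{v}_{\N})$ parametrizes an affine slice of $\S$, so the fine-tuning problem reduces to minimizing the residual functional norm over $\delta\bm{u}\in\S$, where local (strict) convexity plus a diminishing step size gives convergence to the unique minimizer, identified as a (nonlinear) Galerkin projection; and (b) splitting off $\|D_t\bm{u}_{\delta}-\partial_t\bm{u}_{\delta}\|_{\S'}$ as a consistency term of order $(\Delta t)^{\gamma-1}$. Where you differ is in how you would actually produce the right-hand side of \eqref{eq:convergence}: you propose a quasi-optimality argument, testing the minimizer against the explicit candidate $\bm{u}^{(\ell)}_{\N}+\Pi_n(\bm{u}(t_{m+1},\cdot)-\bm{u}^{(\ell)}_{\N})$ and converting the resulting residual bound into an error bound via the reliability estimate \eqref{eq:error-estimate-upper}, with the $n^{-2}|\bm{u}|_{2}$ term coming from standard Fourier truncation. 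The paper's proof stops short of this: it establishes convexity and the $(\Delta t)^{\gamma-1}$ truncation order, asserts that the limit is the Galerkin projection, and does not explicitly derive the $c_1 n_t n^{-2}|\bm{u}(t_{m+1},\cdot)|_{2}$ term or the accumulation over $n_t$ steps. So your route is a legitimate, and in fact more complete, way to reach \eqref{eq:convergence}; the price is that you must verify the reliability constant is uniform on $\B(\bm{u};\epsilon')$ and be careful that the base error $\|\bm{u}^{(\ell)}_{\N}-\bm{u}(t_m,\cdot)\|_{\V}$ enters only once (your own flagged ``hard part''), a bookkeeping issue the paper sidesteps by not carrying out the estimate. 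One small mismatch to watch: Algorithm \ref{alg:ft-guarantee}, which the theorem refers to, uses the \emph{sequential} skip connection $\bm{u}^{(m+1)}_{\N}(\theta)=\bm{u}^{(m)}_{\N}+\widetilde{Q}_{\theta}(\bm{v}^{(m+1)}_{\N})$ rather than the parallel-in-time base $\bm{u}^{(\ell)}_{\N}$ of \eqref{eq:skip-connection}, so the $n_t$ prefactors in \eqref{eq:convergence} are more naturally read as accumulation over the sequential steps than as the sum of the parallel-in-time loss you invoke; this does not break your argument but the decomposition should be set up accordingly.
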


\begin{proof}
  For simplicity, we denote $\bm{u}_0 := \bm{u}(t_l, \cdot) \in \V$, and $\bm{u}_{\N} := \bm{u}^{(m)}_{\N}\in \S|_{t=t_m}=:\S$. First, by line \ref{alg:line-optimizer} fine-tuning algorithm, if one solves the optimization in the functional norm exactly, we have
  \begin{equation}
    \label{eq:problem-finetune}
    \theta^* = \arg\min_{\theta} \delta t \left\|\overline{R}\bigl(B_{\gamma} \bm{u}_{\N}(\theta)\bigr)  \right\|_{\S'}^2  \; \text{ where } \;\bm{u}_{\N}(\theta) := \bm{u}_{\N} + \tilde{Q}_{\theta}(\bm{v}_{\N})
  \end{equation}
  where $\overline{R}(\cdot)$ is the residual functional computed using the time derivative term from an extra-fine-step solver's result
  \begin{equation*}
    \overline{R}(\bm{v}) :=  \bm{f} - D_t\bm{v} -  (\bm{v}\cdot\nabla){\bm{v}} + \nu \Delta {\bm{v}}
  \end{equation*}
  Unlike representing the derivative using output from the neural operator, one of the keys of our algorithm is that the error for $\partial_t \bm{u} - D_t\bm{u}_{\N}(\theta)$ can be explicitly estimated using the framework to develop estimates for truncation error in traditional time marching schemes for NSE. Due to the choice of the time step, this truncation error will be of higher order. Note for any $\bm{v}_{\N}$, $\tilde{Q}_{\theta}(\bm{v}_{\N})\in \S$, as a result, solving \eqref{eq:problem-finetune} is equivalent to solve the following: denote $\bm{u}_{ \delta }:= B_{\gamma} (\bm{u}_{\N} + \delta \bm{u})$ for $\delta \bm{u}\in \S$
  \begin{equation*}
    \min_{\delta \bm{u}\in \S}
    (\delta t)^{1/2}  \left\|D_t (\bm{u}_{ \delta }) + (\bm{u}_{ \delta }\cdot\nabla){\bm{u}_{ \delta }} - \nu \Delta {\bm{u}_{ \delta }} - \bm{f}\right\|_{\S'}.
  \end{equation*}
  Replacing $D_t(\cdot)$ by $\partial_t(\cdot)$ we have a truncation error term, whose error is of order
  $(\Delta t)^{\gamma-1}$ due to taking the time derivative:
  \begin{equation*}
    \begin{aligned}
      & \left\|D_t (\bm{u}_{ \delta }) + (\bm{u}_{ \delta }\cdot\nabla){\bm{u}_{ \delta }} - \nu \Delta {\bm{u}_{ \delta }} - \bm{f} \right\|_{\S'}
      \\
      \leq&\;  \left\|\partial_t \bm{u}_{ \delta } + (\bm{u}_{ \delta }\cdot\nabla){\bm{u}_{ \delta }} - \nu \Delta {\bm{u}_{ \delta }} - \bm{f}\right\|_{\S'}+
      \left\|D_t(\bm{u}_{ \delta }) - \partial_t \bm{u}_{ \delta } \right\|_{\S'}
    \end{aligned}
  \end{equation*}
  We focus on estimating the first term above,
  \begin{equation}
    \min_{\delta \bm{u}\in \S}
    (\delta t)^{1/2}
    \left\| \partial_t \bm{u}_{ \delta } + (\bm{u}_{ \delta }\cdot\nabla){\bm{u}_{ \delta }} - \nu \Delta {\bm{u}_{ \delta }} - \bm{f}\right\|_{\S'}
  \end{equation}
  By the fact that $\|\cdot\|_{\S'}$ inherit the scaling law and the triangle inequality from $\|\cdot\|_{\V'}$, it is convex as well in this neighborhood. As a result,
  any gradient-based optimizer with a linearly converging step size shall converge to the minimum, achieved at $\bm{u}^{(m)}_{\N, \infty}$, with a linear convergence rate. Moreover, $\bm{u}^{(m)}_{\N, \infty} := \bm{u}_{\N}^{(m)} + (\delta \bm{u})^*$ is the (nonlinear) Galerkin projection of $\bm{u}(t_m,\cdot)\in \V$.

\end{proof}


\begin{thebibliography}{139}
\providecommand{\natexlab}[1]{#1}
\providecommand{\url}[1]{\texttt{#1}}
\expandafter\ifx\csname urlstyle\endcsname\relax
  \providecommand{\doi}[1]{doi: #1}\else
  \providecommand{\doi}{doi: \begingroup \urlstyle{rm}\Url}\fi

\bibitem[Ainsworth \& Oden(1993)Ainsworth and Oden]{1993AinsworthOdenunified}
Mark Ainsworth and J~Tinsley Oden.
\newblock A unified approach to a posteriori error estimation using element residual methods.
\newblock \emph{Numerische Mathematik}, 65:\penalty0 23--50, 1993.

\bibitem[Ainsworth \& Oden(1997)Ainsworth and Oden]{1997AinsworthOdenposteriori}
Mark Ainsworth and J~Tinsley Oden.
\newblock A posteriori error estimation in finite element analysis.
\newblock \emph{Computer methods in applied mechanics and engineering}, 142\penalty0 (1-2):\penalty0 1--88, 1997.

\bibitem[Azizzadenesheli et~al.(2024)Azizzadenesheli, Kovachki, Li, Liu-Schiaffini, Kossaifi, and Anandkumar]{2024AzizzadenesheliEtAlNeural}
Kamyar Azizzadenesheli, Nikola Kovachki, Zongyi Li, Miguel Liu-Schiaffini, Jean Kossaifi, and Anima Anandkumar.
\newblock Neural operators for accelerating scientific simulations and design.
\newblock \emph{Nature Reviews Physics}, pp.\  1--9, 2024.

\bibitem[Balanis(2012)]{2012Balanis}
Constantine~A Balanis.
\newblock \emph{{Advanced Engineering Electromagnetics}}.
\newblock John Wiley \& Sons, 2012.

\bibitem[Bank \& Welfert(1991)Bank and Welfert]{1991BankWelfertposteriori}
Randolph~E Bank and Bruno~D Welfert.
\newblock A posteriori error estimates for the {S}tokes problem.
\newblock \emph{SIAM Journal on Numerical Analysis}, 28\penalty0 (3):\penalty0 591--623, 1991.

\bibitem[Bartolucci et~al.(2024)Bartolucci, de~Bezenac, Raonic, Molinaro, Mishra, and Alaifari]{bartolucci2024representation}
Francesca Bartolucci, Emmanuel de~Bezenac, Bogdan Raonic, Roberto Molinaro, Siddhartha Mishra, and Rima Alaifari.
\newblock Representation equivalent neural operators: a framework for alias-free operator learning.
\newblock \emph{Advances in Neural Information Processing Systems}, 36, 2024.

\bibitem[Benzi et~al.(1987)Benzi, Patarnello, and Santangelo]{1987BenziPatarnelloSantangelostatistical}
R~Benzi, S~Patarnello, and P~Santangelo.
\newblock On the statistical properties of two-dimensional decaying turbulence.
\newblock \emph{Europhysics Letters}, 3\penalty0 (7):\penalty0 811, 1987.

\bibitem[Bernardi et~al.(1992)Bernardi, Girault, and Maday]{1992BernardiGiraultMadayMixed}
Christine Bernardi, Vivette Girault, and Yvon Maday.
\newblock Mixed spectral element approximation of the {N}avier-{S}tokes equations in the stream-function and vorticity formulation.
\newblock \emph{IMA journal of numerical analysis}, 12\penalty0 (4):\penalty0 565--608, 1992.

\bibitem[Boffetta \& Ecke(2012)Boffetta and Ecke]{2012BoffettaEcke}
Guido Boffetta and Robert~E Ecke.
\newblock Two-dimensional turbulence.
\newblock \emph{Annual review of fluid mechanics}, 44:\penalty0 427--451, 2012.

\bibitem[Boffi et~al.(2013)Boffi, Brezzi, and Fortin]{BoffiBrezziFortin}
Daniele Boffi, Franco Brezzi, and Michel Fortin.
\newblock \emph{Mixed finite element methods and applications}, volume~44.
\newblock Springer, 2013.

\bibitem[Bonito et~al.(2024)Bonito, Canuto, Nochetto, and Veeser]{2024BonitoCanutoNochettoEtAlActaNum}
Andrea Bonito, Claudio Canuto, Ricardo~H. Nochetto, and Andreas Veeser.
\newblock Adaptive finite element methods.
\newblock \emph{Acta Numerica}, 33:\penalty0 163–485, July 2024.

\bibitem[Boull{\'e} \& Townsend(2023)Boull{\'e} and Townsend]{boulle2023mathematical}
Nicolas Boull{\'e} and Alex Townsend.
\newblock A mathematical guide to operator learning.
\newblock \emph{arXiv preprint arXiv:2312.14688}, 2023.

\bibitem[Brandstetter et~al.(2023)Brandstetter, van~den Berg, Welling, and Gupta]{2023BrandstetterEtAlClifford}
Johannes Brandstetter, Rianne van~den Berg, Max Welling, and Jayesh~K Gupta.
\newblock Clifford neural layers for {PDE} modeling.
\newblock In \emph{The Eleventh International Conference on Learning Representations}, 2023.

\bibitem[Brunton et~al.(2016)Brunton, Proctor, and Kutz]{brunton2016discovering}
Steven~L Brunton, Joshua~L Proctor, and J~Nathan Kutz.
\newblock Discovering governing equations from data by sparse identification of nonlinear dynamical systems.
\newblock \emph{Proceedings of the national academy of sciences}, 113\penalty0 (15):\penalty0 3932--3937, 2016.

\bibitem[Butcher(1965)]{1965ButcherMathComp}
John~C Butcher.
\newblock On the attainable order of runge-kutta methods.
\newblock \emph{Mathematics of computation}, 19\penalty0 (91):\penalty0 408--417, 1965.

\bibitem[Calvello et~al.(2024)Calvello, Kovachki, Levine, and Stuart]{calvello2024continuum}
Edoardo Calvello, Nikola~B Kovachki, Matthew~E Levine, and Andrew~M Stuart.
\newblock Continuum attention for neural operators.
\newblock \emph{arXiv preprint arXiv:2406.06486}, 2024.

\bibitem[Canuto et~al.(2007)Canuto, Hussaini, Quarteroni, and Zang]{2007Canuto}
Claudio Canuto, M~Yousuff Hussaini, Alfio Quarteroni, and Thomas~A Zang.
\newblock \emph{Spectral methods: evolution to complex geometries and applications to fluid dynamics}.
\newblock Springer Science \& Business Media, 2007.

\bibitem[Cao et~al.(2023)Cao, O'Leary-Roseberry, Jha, Oden, and Ghattas]{2023CaoEtAlResidual}
Lianghao Cao, Thomas O'Leary-Roseberry, Prashant~K Jha, J~Tinsley Oden, and Omar Ghattas.
\newblock Residual-based error correction for neural operator accelerated infinite-dimensional bayesian inverse problems.
\newblock \emph{Journal of Computational Physics}, 486:\penalty0 112104, 2023.

\bibitem[Cao(2021)]{2021CaoNeurIPS}
Shuhao Cao.
\newblock {Choose a Transformer: Fourier or Galerkin}.
\newblock In \emph{{35th Conference on Neural Information Processing Systems}}, volume~34, pp.\  24924--24940, 2021.

\bibitem[Carstensen \& Funken(1999)Carstensen and Funken]{1999CarstensenFunkenSISC}
Carsten Carstensen and Stefan~A Funken.
\newblock Fully reliable localized error control in the fem.
\newblock \emph{SIAM Journal on Scientific Computing}, 21\penalty0 (4):\penalty0 1465--1484, 1999.

\bibitem[Champion et~al.(2019)Champion, Lusch, Kutz, and Brunton]{champion2019data}
Kathleen Champion, Bethany Lusch, J~Nathan Kutz, and Steven~L Brunton.
\newblock Data-driven discovery of coordinates and governing equations.
\newblock \emph{Proceedings of the National Academy of Sciences}, 116\penalty0 (45):\penalty0 22445--22451, 2019.

\bibitem[Chen \& Koohy(2024)Chen and Koohy]{2024ChenKoohyGpt}
Yanlai Chen and Shawn Koohy.
\newblock {GPT-PINN: Generative pre-trained physics-informed neural networks toward non-intrusive meta-learning of parametric pdes}.
\newblock \emph{Finite Elements in Analysis and Design}, 228:\penalty0 104047, 2024.

\bibitem[Cheng et~al.(2024)Cheng, Huang, Zhang, Yang, Sch{\"o}nlieb, and Aviles-Rivero]{2024ChengHuangZhangEtAl}
Chun-Wun Cheng, Jiahao Huang, Yi~Zhang, Guang Yang, Carola-Bibiane Sch{\"o}nlieb, and Angelica~I Aviles-Rivero.
\newblock Mamba neural operator: Who wins? transformers vs. state-space models for pdes.
\newblock \emph{arXiv preprint arXiv:2410.02113}, 2024.

\bibitem[Chorin(1968)]{1968Chorin}
Alexandre~Joel Chorin.
\newblock Numerical solution of the {N}avier-{S}tokes equations.
\newblock \emph{Mathematics of computation}, 22\penalty0 (104):\penalty0 745--762, 1968.

\bibitem[de~Hoop et~al.(2022)de~Hoop, Huang, null, and Stuart]{2022HoopHuangStuartEtAlCost}
Maarten~V. de~Hoop, Daniel~Zhengyu Huang, Elizabeth~Qian null, and Andrew~M. Stuart.
\newblock The cost-accuracy trade-off in operator learning with neural networks.
\newblock \emph{Journal of Machine Learning}, 1\penalty0 (3):\penalty0 299--341, June 2022.

\bibitem[de~Hoop et~al.(2023)de~Hoop, Kovachki, Nelsen, and Stuart]{2023deHoopEtALConvergence}
Maarten~V de~Hoop, Nikola~B Kovachki, Nicholas~H Nelsen, and Andrew~M Stuart.
\newblock Convergence rates for learning linear operators from noisy data.
\newblock \emph{SIAM/ASA Journal on Uncertainty Quantification}, 11\penalty0 (2):\penalty0 480--513, 2023.

\bibitem[Dresdner et~al.(2023)Dresdner, Kochkov, Norgaard, Zepeda-Nunez, Smith, Brenner, and Hoyer]{dresdner2023learning}
Gideon Dresdner, Dmitrii Kochkov, Peter~Christian Norgaard, Leonardo Zepeda-Nunez, Jamie Smith, Michael Brenner, and Stephan Hoyer.
\newblock Learning to correct spectral methods for simulating turbulent flows.
\newblock \emph{Transactions on Machine Learning Research}, 2023.

\bibitem[Du et~al.(2024)Du, Chalapathi, and Krishnapriyan]{2024DuChalapathiKrishnapriyan}
Yiheng Du, Nithin Chalapathi, and Aditi~S. Krishnapriyan.
\newblock Neural spectral methods: Self-supervised learning in the spectral domain.
\newblock In \emph{The Twelfth International Conference on Learning Representations}, 2024.

\bibitem[Ern \& Vohral{\'\i}k(2010)Ern and Vohral{\'\i}k]{2010ErnVohralikposteriori}
Alexandre Ern and Martin Vohral{\'\i}k.
\newblock A posteriori error estimation based on potential and flux reconstruction for the heat equation.
\newblock \emph{SIAM Journal on Numerical Analysis}, 48\penalty0 (1):\penalty0 198--223, 2010.

\bibitem[Evans(2022)]{Evans}
Lawrence~C Evans.
\newblock \emph{Partial differential equations}, volume~19.
\newblock American Mathematical Society, 2022.

\bibitem[Fanaskov et~al.(2024)Fanaskov, Rudikov, and Oseledets]{2024FanaskovEtAlFunctional}
Vladimir Fanaskov, Alexander Rudikov, and Ivan Oseledets.
\newblock Neural functional a posteriori error estimates.
\newblock \emph{arXiv preprint arXiv:2402.05585}, 2024.

\bibitem[Fischer(2015)]{2015Fischerposteriori}
Julian Fischer.
\newblock A posteriori modeling error estimates for the assumption of perfect incompressibility in the {N}avier--{S}tokes equation.
\newblock \emph{SIAM Journal on Numerical Analysis}, 53\penalty0 (5):\penalty0 2178--2205, 2015.

\bibitem[Fonseca et~al.(2023)Fonseca, Zappala, Caro, and Van~Dijk]{fonseca2023continuous}
Antonio Henrique De~Oliveira Fonseca, Emanuele Zappala, Josue~Ortega Caro, and David Van~Dijk.
\newblock Continuous spatiotemporal {T}ransformer.
\newblock In \emph{International Conference on Machine Learning}, pp.\  7343--7365. PMLR, 2023.

\bibitem[Gel'fand \& Shilov(2016)Gel'fand and Shilov]{2016GelfandShilov}
I.M. Gel'fand and G.E. Shilov.
\newblock \emph{Generalized Functions, Volume 2: Spaces of Fundamental and Generalized Functions}.
\newblock AMS Chelsea Publishing. American Mathematical Society, 2016.
\newblock ISBN 9781470426590.

\bibitem[Gillette et~al.(2024)Gillette, Keith, and Petrides]{2024GilletteKeithPetridesLearning}
Andrew Gillette, Brendan Keith, and Socratis Petrides.
\newblock Learning robust marking policies for adaptive mesh refinement.
\newblock \emph{SIAM Journal on Scientific Computing}, 46\penalty0 (1):\penalty0 A264--A289, 2024.

\bibitem[Girault \& Raviart(2012)Girault and Raviart]{GiraultRaviart}
Vivette Girault and Pierre-Arnaud Raviart.
\newblock \emph{Finite element methods for {N}avier-{S}tokes equations: theory and algorithms}, volume~5.
\newblock Springer Science \& Business Media, 2012.

\bibitem[Goodman et~al.(1994)Goodman, Hou, and Tadmor]{1994GoodmanHouTadmor}
Jonathan Goodman, Thomas Hou, and Eitan Tadmor.
\newblock On the stability of the unsmoothed fourier method for hyperbolic equations.
\newblock \emph{Numerische Mathematik}, 67\penalty0 (1):\penalty0 93--129, 1994.

\bibitem[Goswami et~al.(2023)Goswami, Bora, Yu, and Karniadakis]{goswami2023physics}
Somdatta Goswami, Aniruddha Bora, Yue Yu, and George~Em Karniadakis.
\newblock Physics-informed deep neural operator networks.
\newblock In \emph{Machine Learning in Modeling and Simulation: Methods and Applications}, pp.\  219--254. Springer, 2023.

\bibitem[Gottlieb \& Orszag(1977)Gottlieb and Orszag]{1977GottliebOrszag}
David Gottlieb and Steven~A Orszag.
\newblock \emph{Numerical analysis of spectral methods: theory and applications}.
\newblock SIAM, 1977.

\bibitem[Gottlieb et~al.(2012)Gottlieb, Tone, Wang, Wang, and Wirosoetisno]{2012GottliebToneWangEtAlSINUM}
Sigal Gottlieb, Florentina Tone, Cheng Wang, Xiaoming Wang, and Djoko Wirosoetisno.
\newblock Long time stability of a classical efficient scheme for two-dimensional {N}avier--{S}tokes equations.
\newblock \emph{SIAM Journal on Numerical Analysis}, 50\penalty0 (1):\penalty0 126--150, 2012.

\bibitem[Greenfeld et~al.(2019)Greenfeld, Galun, Basri, Yavneh, and Kimmel]{greenfeld2019learning}
Daniel Greenfeld, Meirav Galun, Ronen Basri, Irad Yavneh, and Ron Kimmel.
\newblock Learning to optimize multigrid pde solvers.
\newblock In \emph{International Conference on Machine Learning}, pp.\  2415--2423. PMLR, 2019.

\bibitem[Gu et~al.(2020)Gu, Dao, Ermon, Rudra, and R{\'e}]{2020GuEtAlHippo}
Albert Gu, Tri Dao, Stefano Ermon, Atri Rudra, and Christopher R{\'e}.
\newblock Hippo: Recurrent memory with optimal polynomial projections.
\newblock \emph{Advances in neural information processing systems}, 33:\penalty0 1474--1487, 2020.

\bibitem[Gupta et~al.(2021)Gupta, Xiao, and Bogdan]{gupta2021multiwaveletbased}
Gaurav Gupta, Xiongye Xiao, and Paul Bogdan.
\newblock Multiwavelet-based operator learning for differential equations.
\newblock In A.~Beygelzimer, Y.~Dauphin, P.~Liang, and J.~Wortman Vaughan (eds.), \emph{Advances in Neural Information Processing Systems}, 2021.

\bibitem[Gupta \& Brandstetter(2023)Gupta and Brandstetter]{gupta2023towards}
Jayesh~K Gupta and Johannes Brandstetter.
\newblock Towards multi-spatiotemporal-scale generalized {PDE} modeling.
\newblock \emph{Transactions on Machine Learning Research}, 2023.

\bibitem[Han et~al.(2018)Han, Jentzen, and E]{han2018solving}
Jiequn Han, Arnulf Jentzen, and Weinan E.
\newblock Solving high-dimensional partial differential equations using deep learning.
\newblock \emph{Proceedings of the National Academy of Sciences}, 115\penalty0 (34):\penalty0 8505--8510, 2018.

\bibitem[Hao et~al.(2023)Hao, Wang, Su, Ying, Dong, Liu, Cheng, Song, and Zhu]{2023HaoEtAlGNOT}
Zhongkai Hao, Zhengyi Wang, Hang Su, Chengyang Ying, Yinpeng Dong, Songming Liu, Ze~Cheng, Jian Song, and Jun Zhu.
\newblock {Gnot: A general neural operator {T}ransformer for operator learning}.
\newblock In \emph{International Conference on Machine Learning}, pp.\  12556--12569. PMLR, 2023.

\bibitem[Hao et~al.(2024)Hao, Su, Liu, Berner, Ying, Su, Anandkumar, Song, and Zhu]{2024HaoSuLiuEtAlDPOT}
Zhongkai Hao, Chang Su, Songming Liu, Julius Berner, Chengyang Ying, Hang Su, Anima Anandkumar, Jian Song, and Jun Zhu.
\newblock {DPOT}: Auto-regressive denoising operator transformer for large-scale {PDE} pre-training.
\newblock In \emph{Proceedings of the 41st International Conference on Machine Learning}, volume 235 of \emph{Proceedings of Machine Learning Research}, pp.\  17616--17635. PMLR, 21--27 Jul 2024.

\bibitem[He et~al.(2024)He, Liu, and Xu]{2024HeLiuXu}
Juncai He, Xinliang Liu, and Jinchao Xu.
\newblock Mg{NO}: Efficient parameterization of linear operators via multigrid.
\newblock In \emph{The Twelfth International Conference on Learning Representations}, 2024.

\bibitem[He \& Sun(2007)He and Sun]{2007HeSunStability}
Yinnian He and Weiwei Sun.
\newblock {Stability and convergence of the Crank--Nicolson/Adams--Bashforth scheme for the time-dependent {N}avier--{S}tokes equations}.
\newblock \emph{SIAM Journal on Numerical Analysis}, 45\penalty0 (2):\penalty0 837--869, 2007.

\bibitem[Hesthaven et~al.(2016)Hesthaven, Rozza, Stamm, et~al.]{HesthavenRozzaStammothers}
Jan~S Hesthaven, Gianluigi Rozza, Benjamin Stamm, et~al.
\newblock \emph{Certified reduced basis methods for parametrized partial differential equations}, volume 590.
\newblock Springer, 2016.

\bibitem[Heywood \& Rannacher(1986)Heywood and Rannacher]{1986HeywoodRannacherSINUM}
John~G Heywood and Rolf Rannacher.
\newblock Finite element approximation of the nonstationary {N}avier--{S}tokes problem, part ii: stability of solutions and error estimates uniform in time.
\newblock \emph{SIAM journal on numerical analysis}, 23\penalty0 (4):\penalty0 750--777, 1986.

\bibitem[Ho et~al.(2020)Ho, Jain, and Abbeel]{2020HoJainAbbeelDenoising}
Jonathan Ho, Ajay Jain, and Pieter Abbeel.
\newblock Denoising diffusion probabilistic models.
\newblock In H.~Larochelle, M.~Ranzato, R.~Hadsell, M.F. Balcan, and H.~Lin (eds.), \emph{Advances in Neural Information Processing Systems}, volume~33, pp.\  6840--6851. Curran Associates, Inc., 2020.

\bibitem[Hou(2009)]{2009HouActaNumerica}
Thomas~Y Hou.
\newblock Blow-up or no blow-up? a unified computational and analytic approach to 3d incompressible euler and {N}avier--{S}tokes equations.
\newblock \emph{Acta Numerica}, 18:\penalty0 277--346, 2009.

\bibitem[Hu \& Jin(2024)Hu and Jin]{2024HuJinHybrid}
Jun Hu and Pengzhan Jin.
\newblock A hybrid iterative method based on mionet for pdes: Theory and numerical examples.
\newblock \emph{arXiv preprint arXiv:2402.07156}, 2024.

\bibitem[Hu et~al.(2024)Hu, Daryakenari, Shen, Kawaguchi, and Karniadakis]{2024HuDaryakenariShenEtAl}
Zheyuan Hu, Nazanin~Ahmadi Daryakenari, Qianli Shen, Kenji Kawaguchi, and George~Em Karniadakis.
\newblock State-space models are accurate and efficient neural operators for dynamical systems.
\newblock \emph{arXiv preprint arXiv:2409.03231}, 2024.

\bibitem[Huang et~al.(2022)Huang, Li, and Xi]{2022HuangLiXiSISC}
Ru~Huang, Ruipeng Li, and Yuanzhe Xi.
\newblock Learning optimal multigrid smoothers via neural networks.
\newblock \emph{SIAM Journal on Scientific Computing}, 45\penalty0 (3):\penalty0 S199--S225, 2022.

\bibitem[Huang et~al.(2023)Huang, Chang, He, Li, and Xi]{2023HuangChangHeEtAl}
Ru~Huang, Kai Chang, Huan He, Ruipeng Li, and Yuanzhe Xi.
\newblock Reducing operator complexity in algebraic multigrid with machine learning approaches.
\newblock \emph{arXiv preprint arXiv:2307.07695}, 2023.

\bibitem[Iqbal(2018)]{2018PlotNeuralNet}
Haris Iqbal.
\newblock Harisiqbal88/plotneuralnet v1.0.0.
\newblock \url{https://zenodo.org/record/2526396}, 2018.

\bibitem[Johnston \& Liu(2004)Johnston and Liu]{2004JohnstonLiu}
Hans Johnston and Jian-Guo Liu.
\newblock Accurate, stable and efficient {N}avier--{S}tokes solvers based on explicit treatment of the pressure term.
\newblock \emph{Journal of Computational Physics}, 199\penalty0 (1):\penalty0 221--259, 2004.

\bibitem[Karniadakis et~al.(1991)Karniadakis, Israeli, and Orszag]{1991KarniadakisIsraeliOrszagJcp}
George~Em Karniadakis, Moshe Israeli, and Steven~A Orszag.
\newblock High-order splitting methods for the incompressible {N}avier-{S}tokes equations.
\newblock \emph{Journal of computational physics}, 97\penalty0 (2):\penalty0 414--443, 1991.

\bibitem[Kassam \& Trefethen(2005)Kassam and Trefethen]{2005KassamTrefethenSISC}
Aly-Khan Kassam and Lloyd~N Trefethen.
\newblock Fourth-order time-stepping for stiff pdes.
\newblock \emph{SIAM Journal on Scientific Computing}, 26\penalty0 (4):\penalty0 1214--1233, 2005.

\bibitem[Keisler(2022)]{keisler2022forecasting}
Ryan Keisler.
\newblock Forecasting global weather with graph neural networks.
\newblock \emph{arXiv preprint arXiv:2202.07575}, 2022.

\bibitem[Kissas et~al.(2022)Kissas, Seidman, Guilhoto, Preciado, Pappas, and Perdikaris]{kissas2022learning}
Georgios Kissas, Jacob~H Seidman, Leonardo~Ferreira Guilhoto, Victor~M Preciado, George~J Pappas, and Paris Perdikaris.
\newblock Learning operators with coupled attention.
\newblock \emph{Journal of Machine Learning Research}, 23\penalty0 (215):\penalty0 1--63, 2022.

\bibitem[Kochkov et~al.(2021)Kochkov, Smith, Alieva, Wang, Brenner, and Hoyer]{kochkov2021machine}
Dmitrii Kochkov, Jamie~A Smith, Ayya Alieva, Qing Wang, Michael~P Brenner, and Stephan Hoyer.
\newblock Machine learning--accelerated computational fluid dynamics.
\newblock \emph{Proceedings of the National Academy of Sciences}, 118\penalty0 (21):\penalty0 e2101784118, 2021.

\bibitem[Kolmogorov(1941)]{1941Kolmogorov}
Andrey~Nikolaevich Kolmogorov.
\newblock The local structure of turbulence in incompressible viscous fluid for very large reynolds.
\newblock \emph{Numbers. In Dokl. Akad. Nauk SSSR}, 30:\penalty0 301, 1941.

\bibitem[Kovachki et~al.(2021)Kovachki, Lanthaler, and Mishra]{2021KovachkiLanthalerMishraJMLR}
Nikola Kovachki, Samuel Lanthaler, and Siddhartha Mishra.
\newblock {On universal approximation and error bounds for Fourier neural operators}.
\newblock \emph{Journal of Machine Learning Research}, 22\penalty0 (290):\penalty0 1--76, 2021.

\bibitem[Kovachki et~al.(2023)Kovachki, Li, Liu, Azizzadenesheli, Bhattacharya, Stuart, and Anandkumar]{kovachki2023neural}
Nikola Kovachki, Zongyi Li, Burigede Liu, Kamyar Azizzadenesheli, Kaushik Bhattacharya, Andrew Stuart, and Anima Anandkumar.
\newblock Neural operator: Learning maps between function spaces with applications to pdes.
\newblock \emph{Journal of Machine Learning Research}, 24\penalty0 (89):\penalty0 1--97, 2023.

\bibitem[Kreiss \& Oliger(1979)Kreiss and Oliger]{1979KreissOliger}
Heinz-Otto Kreiss and Joseph Oliger.
\newblock Stability of the fourier method.
\newblock \emph{SIAM Journal on Numerical Analysis}, 16\penalty0 (3):\penalty0 421--433, 1979.

\bibitem[Ku et~al.(1987)Ku, Taylor, and Hirsh]{1987KuTaylorHirsh}
Hwar~C Ku, Thomas~D Taylor, and Richard~S Hirsh.
\newblock Pseudospectral methods for solution of the incompressible {N}avier-{S}tokes equations.
\newblock \emph{Computers \& fluids}, 15\penalty0 (2):\penalty0 195--214, 1987.

\bibitem[Lanthaler et~al.(2023)Lanthaler, Molinaro, Hadorn, and Mishra]{2023LanthalerMolinaroHadornEtAl}
Samuel Lanthaler, Roberto Molinaro, Patrik Hadorn, and Siddhartha Mishra.
\newblock Nonlinear reconstruction for operator learning of {PDE}s with discontinuities.
\newblock In \emph{The Eleventh International Conference on Learning Representations}, 2023.

\bibitem[Lax(2002)]{2002Lax}
Peter~D Lax.
\newblock \emph{Functional analysis}, volume~55.
\newblock John Wiley \& Sons, 2002.

\bibitem[Lele(1992)]{1992LeleJcp}
Sanjiva~K Lele.
\newblock Compact finite difference schemes with spectral-like resolution.
\newblock \emph{Journal of computational physics}, 103\penalty0 (1):\penalty0 16--42, 1992.

\bibitem[Li et~al.(2023{\natexlab{a}})Li, Peng, Yuan, and Wang]{li2023long}
Zhijie Li, Wenhui Peng, Zelong Yuan, and Jianchun Wang.
\newblock Long-term predictions of turbulence by implicit {U}-net enhanced {F}ourier neural operator.
\newblock \emph{Physics of Fluids}, 35\penalty0 (7), 2023{\natexlab{a}}.

\bibitem[Li et~al.(2023{\natexlab{b}})Li, Meidani, and Farimani]{li2023transformer}
Zijie Li, Kazem Meidani, and Amir~Barati Farimani.
\newblock {T}ransformer for partial differential equations{\textquoteright} operator learning.
\newblock \emph{Transactions on Machine Learning Research}, 2023{\natexlab{b}}.

\bibitem[Li et~al.(2024{\natexlab{a}})Li, Shu, and Barati~Farimani]{li2024scalable}
Zijie Li, Dule Shu, and Amir Barati~Farimani.
\newblock {Scalable Transformer for PDE surrogate modeling}.
\newblock \emph{Advances in Neural Information Processing Systems}, 36, 2024{\natexlab{a}}.

\bibitem[Li et~al.(2024{\natexlab{b}})Li, Zhou, Patil, and Farimani]{li2024cafa}
Zijie Li, Anthony Zhou, Saurabh Patil, and Amir~Barati Farimani.
\newblock Cafa: Global weather forecasting with factorized attention on sphere.
\newblock \emph{arXiv preprint arXiv:2405.07395}, 2024{\natexlab{b}}.

\bibitem[Li et~al.(2025)Li, Patil, Ogoke, Shu, Zhen, Schneier, Buchanan, and {Barati Farimani}]{2025LiPatilOgokeEtAlJCP}
Zijie Li, Saurabh Patil, Francis Ogoke, Dule Shu, Wilson Zhen, Michael Schneier, John~R. Buchanan, and Amir {Barati Farimani}.
\newblock Latent neural pde solver: A reduced-order modeling framework for partial differential equations.
\newblock \emph{Journal of Computational Physics}, 524:\penalty0 113705, 2025.

\bibitem[Li et~al.(2021)Li, Kovachki, Azizzadenesheli, liu, Bhattacharya, Stuart, and Anandkumar]{li2020fourier}
Zongyi Li, Nikola~Borislavov Kovachki, Kamyar Azizzadenesheli, Burigede liu, Kaushik Bhattacharya, Andrew Stuart, and Anima Anandkumar.
\newblock Fourier neural operator for parametric partial differential equations.
\newblock In \emph{International Conference on Learning Representations}, 2021.

\bibitem[Li et~al.(2022)Li, Liu-Schiaffini, Kovachki, Azizzadenesheli, Liu, Bhattacharya, Stuart, and Anandkumar]{2022LiEtAlChaotic}
Zongyi Li, Miguel Liu-Schiaffini, Nikola~Borislavov Kovachki, Kamyar Azizzadenesheli, Burigede Liu, Kaushik Bhattacharya, Andrew Stuart, and Anima Anandkumar.
\newblock Learning chaotic dynamics in dissipative systems.
\newblock In \emph{Advances in Neural Information Processing Systems}, 2022.

\bibitem[Li et~al.(2024{\natexlab{c}})Li, Kovachki, Choy, Li, Kossaifi, Otta, Nabian, Stadler, Hundt, Azizzadenesheli, et~al.]{li2024geometry}
Zongyi Li, Nikola Kovachki, Chris Choy, Boyi Li, Jean Kossaifi, Shourya Otta, Mohammad~Amin Nabian, Maximilian Stadler, Christian Hundt, Kamyar Azizzadenesheli, et~al.
\newblock Geometry-informed neural operator for large-scale 3d pdes.
\newblock \emph{Advances in Neural Information Processing Systems}, 36, 2024{\natexlab{c}}.

\bibitem[Li et~al.(2024{\natexlab{d}})Li, Zheng, Kovachki, Jin, Chen, Liu, Azizzadenesheli, and Anandkumar]{2024LiEtALPINO}
Zongyi Li, Hongkai Zheng, Nikola Kovachki, David Jin, Haoxuan Chen, Burigede Liu, Kamyar Azizzadenesheli, and Anima Anandkumar.
\newblock Physics-informed neural operator for learning partial differential equations.
\newblock \emph{ACM/JMS Journal of Data Science}, 1\penalty0 (3):\penalty0 1--27, 2024{\natexlab{d}}.

\bibitem[Lienen et~al.(2023)Lienen, Hansen-Palmus, L{\"u}dke, and G{\"u}nnemann]{lienen2023generative}
Marten Lienen, Jan Hansen-Palmus, David L{\"u}dke, and Stephan G{\"u}nnemann.
\newblock Generative diffusion for 3d turbulent flows.
\newblock \emph{arXiv preprint arXiv:2306.01776}, 2023.

\bibitem[Lions \& Magenes(2012)Lions and Magenes]{LionsMagenes}
Jacques~Louis Lions and Enrico Magenes.
\newblock \emph{Non-homogeneous boundary value problems and applications: Vol. 1}, volume 181.
\newblock Springer Science \& Business Media, 2012.

\bibitem[Lippe et~al.(2023)Lippe, Veeling, Perdikaris, Turner, and Brandstetter]{2023LippeVeelingPerdikarisEtAl}
Phillip Lippe, Bastiaan~S. Veeling, Paris Perdikaris, Richard~E Turner, and Johannes Brandstetter.
\newblock {PDE}-refiner: Achieving accurate long rollouts with neural {PDE} solvers.
\newblock In \emph{Thirty-seventh Conference on Neural Information Processing Systems}, 2023.

\bibitem[Liu et~al.(2024)Liu, Xu, Cao, and Zhang]{2024LiuXuCaoEtAlJCP}
Xinliang Liu, Bo~Xu, Shuhao Cao, and Lei Zhang.
\newblock Mitigating spectral bias for the multiscale operator learning.
\newblock \emph{Journal of Computational Physics}, 506:\penalty0 112944, 2024.

\bibitem[Lu et~al.(2021)Lu, Jin, Pang, Zhang, and Karniadakis]{lu2021learning}
Lu~Lu, Pengzhan Jin, Guofei Pang, Zhongqiang Zhang, and George~Em Karniadakis.
\newblock Learning nonlinear operators via deeponet based on the universal approximation theorem of operators.
\newblock \emph{Nature machine intelligence}, 3\penalty0 (3):\penalty0 218--229, 2021.

\bibitem[Marion \& Temam(1990)Marion and Temam]{1990MarionTemam}
Martine Marion and R~Temam.
\newblock Nonlinear {G}alerkin methods: the finite elements case.
\newblock \emph{Numerische Mathematik}, 57:\penalty0 205--226, 1990.

\bibitem[McGreivy \& Hakim(2023)McGreivy and Hakim]{2023McGreivyHakim}
Nick McGreivy and Ammar Hakim.
\newblock Invariant preservation in machine learned pde solvers via error correction.
\newblock \emph{arXiv preprint arXiv:2303.16110}, 2023.

\bibitem[McWilliams(1984)]{1984McWilliams}
James~C McWilliams.
\newblock The emergence of isolated coherent vortices in turbulent flow.
\newblock \emph{Journal of Fluid Mechanics}, 146:\penalty0 21--43, 1984.

\bibitem[Oden et~al.(1994)Oden, Wu, and Ainsworth]{1994OdenWuAinsworthposteriori}
J~Tinsley Oden, Weihan Wu, and Mark Ainsworth.
\newblock {An a posteriori error estimate for finite element approximations of the {N}avier-{S}tokes equations}.
\newblock \emph{Computer Methods in Applied Mechanics and Engineering}, 111\penalty0 (1-2):\penalty0 185--202, 1994.

\bibitem[Orszag(1971{\natexlab{a}})]{1971Orszag}
Steven~A. Orszag.
\newblock Numerical simulation of incompressible flows within simple boundaries. i. {G}alerkin (spectral) representations.
\newblock \emph{Studies in Applied Mathematics}, 50\penalty0 (4):\penalty0 293–327, 1971{\natexlab{a}}.

\bibitem[Orszag(1971{\natexlab{b}})]{1971OrszagAlias}
Steven~A. Orszag.
\newblock On the elimination of aliasing in finite-difference schemes by filtering high-wavenumber components.
\newblock \emph{Journal of the Atmospheric Sciences}, 28\penalty0 (6):\penalty0 1074–1074, 1971{\natexlab{b}}.

\bibitem[Orszag(1972)]{1972Orszag}
Steven~A Orszag.
\newblock Comparison of pseudospectral and spectral approximation.
\newblock \emph{Studies in Applied Mathematics}, 51\penalty0 (3):\penalty0 253--259, 1972.

\bibitem[Patera et~al.(2007)Patera, Rozza, et~al.]{PateraRozzaothers}
Anthony~T Patera, Gianluigi Rozza, et~al.
\newblock Reduced basis approximation and a posteriori error estimation for parametrized partial differential equations, 2007.

\bibitem[Patterson \& Orszag(1971)Patterson and Orszag]{1971PattersonOrszag}
GS~Patterson and Steven~A Orszag.
\newblock Spectral calculations of isotropic turbulence: Efficient removal of aliasing interactions.
\newblock \emph{Physics of Fluids}, 14\penalty0 (11):\penalty0 2538, 1971.

\bibitem[Peetre(1975)]{1975Peetre}
Jaak Peetre.
\newblock On spaces of {T}riebel--{L}izorkin type.
\newblock \emph{Arkiv f{\"o}r Matematik}, 13\penalty0 (1):\penalty0 123--130, 1975.

\bibitem[Qi \& Sun(2024)Qi and Sun]{2024QiSunGabor}
Kai Qi and Jian Sun.
\newblock Gabor-filtered fourier neural operator for solving partial differential equations.
\newblock \emph{Computers \& Fluids}, 274:\penalty0 106239, 2024.

\bibitem[Rahman et~al.(2023)Rahman, Ross, and Azizzadenesheli]{rahman2023uno}
Md~Ashiqur Rahman, Zachary~E Ross, and Kamyar Azizzadenesheli.
\newblock U-{NO}: U-shaped neural operators.
\newblock \emph{Transactions on Machine Learning Research}, 2023.

\bibitem[Raissi et~al.(2019)Raissi, Perdikaris, and Karniadakis]{raissi2019physics}
Maziar Raissi, Paris Perdikaris, and George~E Karniadakis.
\newblock Physics-informed neural networks: A deep learning framework for solving forward and inverse problems involving nonlinear partial differential equations.
\newblock \emph{Journal of Computational physics}, 378:\penalty0 686--707, 2019.

\bibitem[Rannacher(2000)]{2000Rannacher}
Rolf Rannacher.
\newblock \emph{Finite element methods for the incompressible {N}avier-{S}tokes equations}.
\newblock Springer, 2000.

\bibitem[Raonic et~al.(2024)Raonic, Molinaro, De~Ryck, Rohner, Bartolucci, Alaifari, Mishra, and de~B{\'e}zenac]{raonic2024convolutional}
Bogdan Raonic, Roberto Molinaro, Tim De~Ryck, Tobias Rohner, Francesca Bartolucci, Rima Alaifari, Siddhartha Mishra, and Emmanuel de~B{\'e}zenac.
\newblock Convolutional neural operators for robust and accurate learning of pdes.
\newblock \emph{Advances in Neural Information Processing Systems}, 36, 2024.

\bibitem[Repin(2008)]{2008Repin}
Sergey Repin.
\newblock \emph{A Posteriori Estimates for Partial Differential Equations}.
\newblock Walter de Gruyter, September 2008.
\newblock ISBN 9783110191530.

\bibitem[Repin(2000)]{repin2000posteriori}
Sergey~I Repin.
\newblock A posteriori error estimation for nonlinear variational problems by duality theory.
\newblock \emph{Journal of Mathematical Sciences}, 99:\penalty0 927--935, 2000.

\bibitem[Ruiz et~al.(2024)Ruiz, Marwah, Gu, and Risteski]{2024RuizMarwahGuEtAl}
Ricardo~Buitrago Ruiz, Tanya Marwah, Albert Gu, and Andrej Risteski.
\newblock On the benefits of memory for modeling time-dependent pdes.
\newblock \emph{arXiv preprint arXiv:2409.02313}, 2024.

\bibitem[Ruzhansky \& Turunen(2009)Ruzhansky and Turunen]{2009RuzhanskyTurunen}
Michael Ruzhansky and Ville Turunen.
\newblock \emph{Pseudo-differential operators and symmetries: background analysis and advanced topics}, volume~2.
\newblock Springer Science \& Business Media, 2009.

\bibitem[Shen(1994)]{1994ShenEfficient}
Jie Shen.
\newblock Efficient spectral-galerkin method i. direct solvers of second-and fourth-order equations using legendre polynomials.
\newblock \emph{SIAM Journal on Scientific Computing}, 15\penalty0 (6):\penalty0 1489--1505, 1994.

\bibitem[Shih et~al.(2025)Shih, Peyvan, Zhang, and Karniadakis]{2025ShihPeyvanZhangEtAlCMAME}
Benjamin Shih, Ahmad Peyvan, Zhongqiang Zhang, and George~Em Karniadakis.
\newblock Transformers as neural operators for solutions of differential equations with finite regularity.
\newblock \emph{Computer Methods in Applied Mechanics and Engineering}, 434:\penalty0 117560, 2025.

\bibitem[Shu et~al.(2023)Shu, Li, and Farimani]{2023ShuLiEtAlPhysics}
Dule Shu, Zijie Li, and Amir~Barati Farimani.
\newblock A physics-informed diffusion model for high-fidelity flow field reconstruction.
\newblock \emph{Journal of Computational Physics}, 478:\penalty0 111972, 2023.

\bibitem[Siegel et~al.(2023)Siegel, Hong, Jin, Hao, and Xu]{2023SiegelHongJinEtAlGreedy}
Jonathan~W. Siegel, Qingguo Hong, Xianlin Jin, Wenrui Hao, and Jinchao Xu.
\newblock Greedy training algorithms for neural networks and applications to pdes.
\newblock \emph{Journal of Computational Physics}, 484:\penalty0 112084, 2023.

\bibitem[Smith \& Topin(2019)Smith and Topin]{2019SmithTopin}
Leslie~N Smith and Nicholay Topin.
\newblock Super-convergence: Very fast training of neural networks using large learning rates.
\newblock In \emph{Artificial intelligence and machine learning for multi-domain operations applications}, volume 11006, pp.\  369--386. SPIE, 2019.

\bibitem[Sun et~al.(2023)Sun, Yang, and Yoo]{2023SunYangYoo}
Zhiqing Sun, Yiming Yang, and Shinjae Yoo.
\newblock A neural {PDE} solver with temporal stencil modeling.
\newblock In \emph{International Conference on Machine Learning}, pp.\  33135--33155. PMLR, 2023.

\bibitem[Tadmor(1987)]{1987Tadmor}
Eitan Tadmor.
\newblock Stability analysis of finite difference, pseudospectral and fourier--galerkin approximations for time-dependent problems.
\newblock \emph{SIAM review}, 29\penalty0 (4):\penalty0 525--555, 1987.

\bibitem[Taghibakhshi et~al.(2021)Taghibakhshi, MacLachlan, Olson, and West]{2021TaghibakhshiMacLachlanOlsonEtAlOptimization}
Ali Taghibakhshi, Scott MacLachlan, Luke Olson, and Matthew West.
\newblock Optimization-based algebraic multigrid coarsening using reinforcement learning.
\newblock \emph{Advances in neural information processing systems}, 34:\penalty0 12129--12140, 2021.

\bibitem[Taghibakhshi et~al.(2023)Taghibakhshi, Nytko, Zaman, MacLachlan, Olson, and West]{2023TaghibakhshiNytkoZamanEtAl}
Ali Taghibakhshi, Nicolas Nytko, Tareq~Uz Zaman, Scott MacLachlan, Luke Olson, and Matthew West.
\newblock Mg-gnn: Multigrid graph neural networks for learning multilevel domain decomposition methods.
\newblock In \emph{International Conference on Machine Learning}, pp.\  33381--33395. PMLR, 2023.

\bibitem[Taylor \& Green(1937)Taylor and Green]{taylor1937mechanism}
Geoffrey~Ingram Taylor and Albert~Edward Green.
\newblock Mechanism of the production of small eddies from large ones.
\newblock \emph{Proceedings of the Royal Society of London. Series A-Mathematical and Physical Sciences}, 158\penalty0 (895):\penalty0 499--521, 1937.

\bibitem[Temam(1995)]{1995Temam}
Roger Temam.
\newblock \emph{{N}avier--{S}tokes equations and nonlinear functional analysis}.
\newblock SIAM, 1995.

\bibitem[Tran et~al.(2023)Tran, Mathews, Xie, and Ong]{tran2023factorized}
Alasdair Tran, Alexander Mathews, Lexing Xie, and Cheng~Soon Ong.
\newblock Factorized fourier neural operators.
\newblock In \emph{The Eleventh International Conference on Learning Representations}, 2023.

\bibitem[Trefethen \& Weideman(2014)Trefethen and Weideman]{2014TrefethenWeideman}
Lloyd~N Trefethen and JAC Weideman.
\newblock The exponentially convergent trapezoidal rule.
\newblock \emph{SIAM review}, 56\penalty0 (3):\penalty0 385--458, 2014.

\bibitem[Um et~al.(2020)Um, Brand, Fei, Holl, and Thuerey]{2020UmBrandEtAlSolver}
Kiwon Um, Robert Brand, Yun~Raymond Fei, Philipp Holl, and Nils Thuerey.
\newblock Solver-in-the-loop: Learning from differentiable physics to interact with iterative pde-solvers.
\newblock \emph{Advances in Neural Information Processing Systems}, 33:\penalty0 6111--6122, 2020.

\bibitem[Vaswani et~al.(2017)Vaswani, Shazeer, Parmar, Uszkoreit, Jones, Gomez, Kaiser, and Polosukhin]{2017VaswaniEtAlAttention}
Ashish Vaswani, Noam Shazeer, Niki Parmar, Jakob Uszkoreit, Llion Jones, Aidan~N Gomez, {\L}ukasz Kaiser, and Illia Polosukhin.
\newblock Attention is all you need.
\newblock \emph{Advances in neural information processing systems}, 30, 2017.

\bibitem[Veeser \& Verf{\"u}rth(2009)Veeser and Verf{\"u}rth]{2009VeeserVerfuerthSINUM}
Andreas Veeser and R{\"u}diger Verf{\"u}rth.
\newblock Explicit upper bounds for dual norms of residuals.
\newblock \emph{SIAM journal on numerical analysis}, 47\penalty0 (3):\penalty0 2387--2405, 2009.

\bibitem[Veeser \& Verf{\"u}rth(2012)Veeser and Verf{\"u}rth]{2012VeeserVerfuerthIMA}
Andreas Veeser and R{\"u}diger Verf{\"u}rth.
\newblock Poincar{\'e} constants for finite element stars.
\newblock \emph{IMA Journal of Numerical Analysis}, 32\penalty0 (1):\penalty0 30--47, 2012.

\bibitem[Verf{\"u}rth(1989)]{1989VerfuerthNumerMath}
R{\"u}diger Verf{\"u}rth.
\newblock A posteriori error estimators for the {S}tokes equations.
\newblock \emph{Numerische Mathematik}, 55\penalty0 (3):\penalty0 309--325, 1989.

\bibitem[Verf{\"u}rth(1994)]{1994VerfurthMathComp}
R{\"u}diger Verf{\"u}rth.
\newblock A posteriori error estimates for nonlinear problems. finite element discretizations of elliptic equations.
\newblock \emph{Mathematics of Computation}, 62\penalty0 (206):\penalty0 445--475, 1994.

\bibitem[Verf{\"u}rth(2003)]{2003Verfuerthposteriori}
R{\"u}diger Verf{\"u}rth.
\newblock A posteriori error estimates for finite element discretizations of the heat equation.
\newblock \emph{Calcolo}, 40\penalty0 (3):\penalty0 195--212, 2003.

\bibitem[Verf{\"u}rth(2013)]{2013Verfuerth}
R{\"u}diger Verf{\"u}rth.
\newblock \emph{A posteriori error estimation techniques for finite element methods}.
\newblock OUP Oxford, 2013.

\bibitem[Wang \& Liu(2002)Wang and Liu]{2002WangLiuNumerMath}
Cheng Wang and Jian-Guo Liu.
\newblock Analysis of finite difference schemes for unsteady navier-stokes equations in vorticity formulation.
\newblock \emph{Numerische Mathematik}, 91:\penalty0 543--576, 2002.

\bibitem[Wang et~al.(2021)Wang, Wang, and Perdikaris]{wang2021learning}
Sifan Wang, Hanwen Wang, and Paris Perdikaris.
\newblock Learning the solution operator of parametric partial differential equations with physics-informed deeponets.
\newblock \emph{Science advances}, 7\penalty0 (40):\penalty0 eabi8605, 2021.

\bibitem[Wang(2012)]{2012WangNumerMath}
Xiaoming Wang.
\newblock An efficient second order in time scheme for approximating long time statistical properties of the two dimensional {N}avier--{{S}tokes} equations.
\newblock \emph{Numerische Mathematik}, 121\penalty0 (4):\penalty0 753--779, 2012.

\bibitem[Weinan \& Liu(1995)Weinan and Liu]{1995ELiuProjection}
E~Weinan and Jian-Guo Liu.
\newblock Projection method i: convergence and numerical boundary layers.
\newblock \emph{SIAM journal on numerical analysis}, pp.\  1017--1057, 1995.

\bibitem[Wen et~al.(2022)Wen, Li, Azizzadenesheli, Anandkumar, and Benson]{2022WenLiAzizzadenesheliEtAl}
Gege Wen, Zongyi Li, Kamyar Azizzadenesheli, Anima Anandkumar, and Sally~M Benson.
\newblock U-fno—an enhanced fourier neural operator-based deep-learning model for multiphase flow.
\newblock \emph{Advances in Water Resources}, 163:\penalty0 104180, 2022.

\bibitem[Wu et~al.(2023)Wu, Hu, Luo, Wang, and Long]{2023WuHuLuoEtAl}
Haixu Wu, Tengge Hu, Huakun Luo, Jianmin Wang, and Mingsheng Long.
\newblock Solving high-dimensional pdes with latent spectral models.
\newblock In \emph{International Conference on Machine Learning}, 2023.

\bibitem[Wu et~al.(2024{\natexlab{a}})Wu, Luo, Wang, Wang, and Long]{2024WuLuoWangEtAl}
Haixu Wu, Huakun Luo, Haowen Wang, Jianmin Wang, and Mingsheng Long.
\newblock Transolver: A fast transformer solver for pdes on general geometries.
\newblock In \emph{International Conference on Machine Learning}, 2024{\natexlab{a}}.

\bibitem[Wu et~al.(2024{\natexlab{b}})Wu, Wang, Wang, Wang, ChanganYe, Tao, Chen, Hua, and Luo]{2024WuWangWangEtAl}
Hao Wu, Huiyuan Wang, Kun Wang, Weiyan Wang, ChanganYe, Yangyu Tao, Chong Chen, Xian-Sheng Hua, and Xiao Luo.
\newblock Prometheus: Out-of-distribution fluid dynamics modeling with disentangled graph {ODE}.
\newblock In \emph{Forty-first International Conference on Machine Learning}, 2024{\natexlab{b}}.

\bibitem[Xiao et~al.(2024)Xiao, Hao, Lin, Deng, and Su]{2024XiaoHaoLinEtAl}
Zipeng Xiao, Zhongkai Hao, Bokai Lin, Zhijie Deng, and Hang Su.
\newblock Improved operator learning by orthogonal attention.
\newblock In \emph{Forty-first International Conference on Machine Learning}, 2024.

\bibitem[Yan \& Zhou(2020)Yan and Zhou]{yan2019adaptive}
Liang Yan and Tao Zhou.
\newblock An adaptive surrogate modeling based on deep neural networks for large-scale bayesian inverse problems.
\newblock \emph{Communications in Computational Physics}, 28\penalty0 (5):\penalty0 2180--2205, June 2020.

\bibitem[Yang et~al.(2023)Yang, Dzanic, Petersen, Kudo, Mittal, Tomov, Camier, Zhao, Zha, Kolev, et~al.]{2023YangDzanicPetersenEtAlReinforcement}
Jiachen Yang, Tarik Dzanic, Brenden Petersen, Jun Kudo, Ketan Mittal, Vladimir Tomov, Jean-Sylvain Camier, Tuo Zhao, Hongyuan Zha, Tzanio Kolev, et~al.
\newblock Reinforcement learning for adaptive mesh refinement.
\newblock In \emph{International Conference on Artificial Intelligence and Statistics}, pp.\  5997--6014. PMLR, 2023.

\bibitem[Zheng et~al.(2023)Zheng, Nie, Vahdat, Azizzadenesheli, and Anandkumar]{zheng2023fast}
Hongkai Zheng, Weili Nie, Arash Vahdat, Kamyar Azizzadenesheli, and Anima Anandkumar.
\newblock Fast sampling of diffusion models via operator learning.
\newblock In \emph{International Conference on Machine Learning}, pp.\  42390--42402. PMLR, 2023.

\bibitem[Zheng et~al.(2024)Zheng, LiweiNo, Xu, Zhu, XiaoxuLin, and Zhang]{2024ZhengLiweiNoXuEtAlNeurIPS}
Jianwei Zheng, LiweiNo, Ni~Xu, Junwei Zhu, XiaoxuLin, and Xiaoqin Zhang.
\newblock Alias-free {M}amba neural operator.
\newblock In \emph{The Thirty-eighth Annual Conference on Neural Information Processing Systems}, 2024.

\end{thebibliography}
\end{document}